\documentclass{article}

\usepackage{microtype}
\usepackage{graphicx}
\usepackage{subfigure}
\usepackage{booktabs} % for professional tables
\usepackage{bbm}
\newcommand{\ind}{\mathbbm{1}}
\usepackage{algorithm}
\usepackage{algorithmic}
\usepackage[hidelinks,colorlinks=true,linkcolor=blue,citecolor=blue]{hyperref}

\usepackage[preprint,nonatbib]{neurips_2026}

\usepackage{amsmath}
\usepackage{placeins}
\usepackage{amssymb}
\usepackage{mathtools}
\usepackage{amsthm}
\usepackage{natbib}
\usepackage{color}
\usepackage{amsfonts}       
\usepackage{url}
\usepackage{wrapfig}
\usepackage{tikz}
\usepackage{footmisc}
\usepackage{enumerate}
\usepackage{bibentry}
\usepackage{thmtools, thm-restate}
\nobibliography*

\usepackage{physics}
\mathtoolsset{showonlyrefs}
\usepackage{etoolbox}
\usepackage{makecell}
\usepackage{enumerate}
\usepackage{enumitem}
\usepackage{pifont}% http://ctan.org/pkg/pifont
\usepackage{soul}
\usepackage{ulem}
\usepackage{cancel}
\theoremstyle{plain}
\newtheorem{theorem}{Theorem}%[section]

\newtheorem{lemma}{Lemma}

\theoremstyle{definition}

\newtheorem{assumption}{Assumption}[section]
\newtheorem{remark}{Remark}
\theoremstyle{plain}
\newcommand{\fS}{\mathcal{S}}
\newcommand{\fA}{\mathcal{A}}

\newcommand{\fT }{\mathcal{T}}

\newcommand{\R}[1][]{\mathbb{R}^{#1}}
\newcommand{\E}{\mathbb{E}}

\newcommand{\tref}[1]{\text{\ref{#1}}}

\newcommand{\0}{\mathbf{0}}

\newcommand{\diag}{\operatorname{diag}}
\newcommand{\TF}{\text{TF}}

\newcommand{\pmdp}{p_{\text{MDP}}}
\newcommand{\rmdp}{r_{\text{MDP}}}
\newcommand{\wK}{\widetilde K}
\newcommand{\wM}{\widehat M}
\newcommand{\wP}{\widehat P}

\newcommand{\wT}{\widehat{\fT}}
\newcommand{\softmax}{\mathrm{softmax}}

\allowdisplaybreaks

\title{Beyond Linear Attention: Softmax Transformers \\ Implement In-Context Reinforcement Learning}

\author{%
  Zixuan Xie \\
  University of Virginia\\
  \texttt{xie.zixuan@email.virginia.edu} \\
  \And
  Xinyu Liu \\
  University of Virginia\\
  \texttt{xinyuliu@virginia.edu} \\
  \And
  Claire Chen\\
  California Institute of Technology\\
  \texttt{clairechen@caltech.edu}
  \And
  Shuze Daniel Liu\\
  Purdue University\\
  \texttt{liu4869@purdue.edu}
  \And
  Rohan Chandra \\
  University of Virginia\\
  \texttt{rohanchandra@virginia.edu} \\
  \And
  Shangtong Zhang \\
  University of Virginia\\
  \texttt{shangtong@virginia.edu} \\
}

\usepackage[textsize=tiny]{todonotes}

\begin{document}
\maketitle
\begin{abstract}
In-context reinforcement learning (ICRL) studies agents that, after pretraining, adapt to new tasks by conditioning on additional context without parameter updates. 
Existing theoretical analyses of ICRL largely rely on linear attention,
which replaces the softmax function in the standard attention with an identity mapping.
This paper provides the first theoretical understanding of ICRL without making the unrealistic linear attention simplification.
In particular, we consider the standard 
softmax attention used in practice.
% In particular, we consider softmax attention, where the attention weights are defined via a kernel operator, covering a broad class of nonlinear operators beyond linear attention.
We show that, with certain parameters, the layerwise forward pass of a Transformer with such softmax attention is equivalent to iterative updates of a weighted softmax temporal difference (TD) learning algorithm.
% We show that, with certain parameters,
% the layer by layer forward pass of Transformers with such softmax attention is equivalent to step by step updates of a weighted softmax temporal difference (TD) learning algorithm.
% Particularly,
% we prove that after generalizing \sz{I am still not very happy with this word though I choosed it. ``generalize'' usually means you can reduce back to it but here we can't reduce to exact softmax. can you brainstorm with ChatGPT to find some other words better than ``generalize'' and ``replace''? another option is that we don't say much from the softmax perspective. instead, we say we generalize the linear attention to a softmax one that includes way more nonlinear operators.} the softmax function in attention to a kernel operation,
% with certain parameters,
% the layer by layer forward pass of Transformers with such softmax attention is equivalent to step by step updates of a weighted softmax temporal difference (TD) learning algorithm.
Here, weighted softmax TD is a new RL algorithm that performs policy evaluation in kernel space and adopts both linear TD and tabular TD as special cases.
We also prove that under a certain contraction condition,
the policy evaluation error decays as the number of layers grows,
with the identified parameters above.
Finally, we prove that those parameters are a global minimizer of a pretraining loss,
explaining their emergence in our numerical experiments.
% Existing provable analyses of in-context policy evaluation largely rely on linear-attention constructions, while standard Transformers are nonlinear due to softmax attention and residual connections.
% We introduce weighted softmax TD for in-context policy evaluation, which includes linear TD and tabular as special cases. 
% We show that a Transformer block can implement weighted softmax TD in its forward pass using a general kernelization function $\tilde h$, and that stacking layers yields depth-wise error reduction under a contraction condition.
% We further prove the emergence of weighted softmax TD under a pretraining objective. 
% Finally, we complement our theory with numerical experiments. 
\end{abstract}

% \begin{abstract}
% In-context reinforcement learning (ICRL) studies agents that, after pretraining, improve on new tasks by conditioning on a context sequence without parameter updates. Prior analyses focus on linear settings, while standard Transformers are nonlinear due to softmax attention and residual blocks.
% We investigate, for the first time, whether nonlinear Transformers can provably realize ICRL on-policy evaluation tasks. First, we establish convergence by showing that nonlinear attention implements an in-context softmax temporal difference (TD) update in the forward pass, and that stacking layers yields depth-wise error reduction.
% Second, we study emergence under reinforcement pretraining, deriving sufficient conditions under which global minimizers of the pretraining loss realize this mechanism. We support these claims with theoretical guarantees and controlled experiments.
% \end{abstract}

\section{Introduction}

Recent works on in-context reinforcement learning (ICRL, \citet{moeini2025survey}) suggest that an RL agent can adapt to a new task at inference time without updating its parameters. 
In this framework, an agent network is pretrained on a distribution of tasks and then deployed with fixed pretrained parameters. 
At test time, the agent conditions on additional context generated within the new task, such as an interaction history $\tau_t\doteq(S_0,A_0,R_1,\ldots,S_{t-1},A_{t-1},R_t)$, 
and outputs actions or value estimates \citep{wang2016learning,duan2016rl,laskin2023incontext}.
Such fixed-parameter adaptation has been observed in
algorithm distillation \citep{laskin2023incontext}, human-timescale adaptation \citep{team2023human}, long-context embodied agents \citep{elawady2024relic}, Transformer-based adaptive agents \citep{raparthy2024generalization,grigsby2024amago,grigsby2024amago2,song2025reward}, and continuous-control or locomotion agents \citep{liu2025locoformer, Beaussant2025scaling}. 
Since the pretrained parameters do not change at test time, any adjustment in actions or value estimates must be produced by the
forward pass using the new context. 
This motivates the question we study: what RL process is implemented by the fixed agent network when it uses the context for adaptation?

We study this question in the reinforcement-pretraining regime.
This regime differs from supervised pretraining, where the desired
update rule is already present in the training targets \citep{laskin2023incontext, shi2023cross,zisman2024emergence,dai2024incontext,kirsch2023towards, lee2023supervised,lin2024transformers}. 
In reinforcement pretraining, the agent is instead optimized through task-level RL objectives, such as producing good actions in control or accurate value estimates in policy evaluation  \citep{duan2016rl, wang2016learning, grigsby2024amago}.
Thus, any context-dependent update implemented in the forward pass should be understood as an emergent computation induced by the pretraining objective, rather than as a imitated update rule.

To make this question concrete, we follow \citet{wang2025ictd,wang2025emergence} and consider in-context policy evaluation. 
In each task, a fixed policy $\pi$ induces a value function $v_\pi$. 
The agent receives a query state $s$ and a context $\tau_t$ of transitions collected under $\pi$, and outputs an estimate of $v_\pi(s)$ using its fixed pretrained parameters. 
The same agent network must handle a family of evaluation tasks whose transition dynamics, reward functions, and policies may differ. Temporal difference learning (TD, \citet{sutton1988learning}) is a
standard algorithm for policy evaluation, making it a natural candidate for the RL process implemented in the forward pass. 
\citet{wang2025ictd} prove that, under a carefully designed parameterization, the layer-by-layer forward pass of linear
Transformers is equivalent to step-by-step TD updates. 
\citet{wang2025emergence} further prove that those parameters can attain global optimality for a reinforcement-pretraining objective.
However, these results largely rely on linear self-attention where the softmax function in attention is replaced by an identity mapping to simplify analysis.
In this work, we remove this simplification by studying the standard 
softmax attention \citep{vaswani2017} used in practice.
We ask whether comparable in-context policy evaluation behavior can still arise under this setting.

\begin{figure}
    \centering
    % Subfigure for the MSVE plot (Top)
    \begin{subfigure} % Adjusted width for vertical stacking
        \centering
        \includegraphics[width=0.45\textwidth]{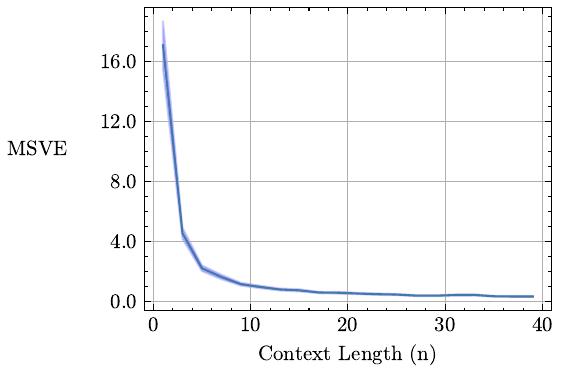}
        \caption{In-context policy evaluation with a 15-layer dual-head Transformer using softmax attention. 
        The model takes the context $\tau_t$ and a query state $s$ and outputs $\text{TF}_{\theta_*}(\tau_t, s)$ as the estimation of the state value $v_\pi(s)$.
        The y-axis is the mean square value error (MSVE) $\sum_s d_\pi(s)(\text{TF}_{\theta_*}(\tau_t, s) - v_\pi(s))^2$, with $d_\pi(s)$ being the stationary state distribution. 
        The curves are averaged over 300 randomly generated policy evaluation tasks, with shaded regions indicating standard errors. 
        Across tasks, the state space, transition dynamics, reward function, and policy vary, while a single parameterization $\theta_*$ is shared.}
        \label{fig:msve}
    \end{subfigure}
\end{figure}

% \sz{you need to have in mind about your story about where the parameters of this transformers come from, and make it explicit. You can say this is pretrained by some algorithm, then Q3 needs to be changed to ``why this pretraining gives rise to these parameters''. Or you can say you designed this parameters manually, which is less ideal but can still work. But you need to pick one and say it.} 
Figure~\ref{fig:msve} gives the motivating construction.
Let $\TF_{\theta_*}$ denote the dual-head Transformer with a manually constructed parameterization $\theta_*$ used in Figure~\tref{fig:msve}.
% The attention in $\TF_{\theta_*}$ instantiates softmax attention by using softmax weighting over similarity scores.
Given a context \(\tau_t\) and a query state \(s\), the model outputs \(\mathrm{TF}_{\theta_*}(\tau_t,s)\) as an estimate of \(v_\pi(s)\).
We report the value approximation error averaged over many tasks and policies with distinct value functions, using a single fixed $\theta_*$ throughout.
Notably, this error decreases as the context length $t$ grows.
% Specifically, with $\theta_*$ fixed across all tasks \sz{here you mentioned ``all tasks'' but really in your current writeup you didn't mention you have many tasks for in-context policy evaluation.}, 
% Notably, the value approximation error of $\TF_{\theta_*}$ decreases as the context length $t$ grows.
Since the target value functions vary widely across tasks, this improvement is unlikely to be explained by hard-coding $\theta_*$ to any particular true value function.
Instead, the model appears to leverage the context to refine its prediction by performing a policy-evaluation update in its forward pass.
This observation leads to three central questions.
\begin{enumerate}[label={(Q\arabic*)}]
\item What specific policy-evaluation algorithm is being implemented by $\TF_{\theta_*}$?\label{(Q1)}
\item What convergence guarantees hold as the number of layers grows?\label{(Q2)}
\item What kind of pretraining could give rise to these parameters?\label{(Q3)} 
% \sz{you can be more explicit by saying ``give rise to these parameters''}
\end{enumerate}
% This phenomenon leads to three central questions.
% \begin{enumerate}[label={(Q\arabic*)}]
% \item What is the specific policy evaluation algorithm being implemented by the Transformer's forward pass?\label{(Q1)}
% \item What convergence guarantees hold as the forward pass repeatedly applies this update?\label{(Q2)} 
% \item What kind of pretraining gives rise to this in-context learning capability?\label{(Q3)} 
% \end{enumerate}
In this paper, we address the above questions by providing the first theoretical analysis of in-context policy evaluation under standard softmax attention.
We show that the empirical pattern in Figure~\ref{fig:msve} can be explained by a novel algorithm, which we term \textbf{weighted softmax TD}.
We provide a constructive design of a dual-head Transformer layer whose forward pass implements a batch analogue of the nonparametric TD update \citep{berthier2022nonparametric} with softmax weighting.
% We provide a constructive design of a dual-head Transformer layer whose forward pass implements a batch analogue of the nonparametric TD update \citep{berthier2022nonparametric} in kernel space.
% The induced attention kernel aggregates TD errors over the context while holding the current value estimate fixed within the layer.
Moreover, we show that this dual-head layer admits an equivalent reparameterization as single-head attention composed with a variant of the Temporal Shift Module (TSM, \citet{lin2019tsm}), which is parameter-free.
This reparameterization reduces computation and further simplifies our inference-time convergence analysis.

% In this paper, we formalize the mechanism suggested by these experiments and give the first theoretical treatment of in-context policy evaluation in the nonlinear 
% \sz{always remember that linear attention is a nonlinear function} 
% setting.
% We show that the empirical pattern in Figure~\tref{fig:msve} is precisely explained by a novel algorithm, which we term \textbf{weighted softmax TD} 
% \sz{will we have multi-step version? if not, no need to mention one-step here.}.
% We provide a constructive design of a dual-head Transformer layer that exactly executes 
% \sz{``execute'' is a vague word, say explicitly what it does} an in-context 
% \sz{do we need the prefix in-context here? as we discussed, ``in-context'' is a way of implementation and is not part of an algorithm} 
% batch analogue of the nonparametric TD update of \citet{berthier2022nonparametric}.
% The induced attention kernel 
% \sz{it's funny becuase in your version you never mention attention kernel before. this is the first appearance of the word ``kernel''. so nobody can really understand what attention kernel is.} 
% aggregates TD errors over the context while holding the current value estimate fixed within the layer.
% Moreover, we show that this dual-head layer admits an equivalent
% reparameterization as a single head attention composed with a variant of the Temporal Shift Module (TSM, \citet{lin2019tsm}), which is itself parameter-free.
% This reparameterization reduces computation and further simplifies our convergence analysis.
Taken together, our contributions answer \ref{(Q1)}-\ref{(Q3)} in turn. 
For \ref{(Q1)}, we construct a dual-head Transformer layer that implements a weighted softmax TD update step in its forward pass. 
For \ref{(Q2)}, we prove an inference-time convergence guarantee as depth grows, under a suitable contraction condition. 
For \ref{(Q3)}, we show that the desired parameters are global minimizers of a pretraining loss, explaining their emergence in our experiments.
% Taken together, our contributions are threefold.
% \begin{enumerate}[label={(\arabic*)}]
% \item Answering \ref{(Q1)}, we show by construction that a dual-head Transformer layer can be configured to implement a weighted softmax TD update step in its forward pass.
% \item Answering \ref{(Q2)}, we prove an inference-time convergence guarantee as the number of layers increases under a suitable contraction condition.
% \item Answering \ref{(Q3)}, we show that the desired parameters are global minimizers of a pretraining loss, explaining their emergence in our numerical experiments.
% \end{enumerate}
% \begin{enumerate}[label={(\arabic*)}]
% \item Answering \ref{(Q1)}, we show by construction that a single layer of a nonlinear \sz{again nonlinear} dual-head Transformer can be configured to exactly implement one step of weighted softmax TD.
% \item Answering \ref{(Q2)}, we prove a depth-wise convergence guarantee as depth increases under a suitable contraction condition. \sz{emphasize that this is for inference time behavior}
% \item Answering \ref{(Q3)}, we analyze the emergence of softmax ICTD  under reinforcement pretraining and derive sufficient conditions under which this algorithmic structure is realized by global minimizers of the pretraining objective.
% \end{enumerate}

\section{Background}
\label{sec:background}
We begin by establishing the technical foundations for our work. All vectors are treated as column vectors. We denote the identity matrix in $\R[n\times n]$ by $I_n$.
We use $0_{m \times n}$ and $1_{m \times n}$ to denote zero matrix and all-ones matrix. 
Let $e_{i}$ be the standard basis vector for the $i$-th row.
The transpose of a matrix $Z$ is denoted by $Z^\top$. The inner product of two vectors $x$ and $y$ is denoted by either $\langle x, y \rangle$ or $x^\top y$.

% \subsection{Kernels as similarity-based weighting rules} 
% \label{sec:kernel}
% % \lsz{Why do you call it kernel? Do you think attention weights is a better name?}
% % \sz{you need to provide some citations for those new generation reviewers who don't necessarily know kernel and this kernel-weighted bootstrapping update.}
% We use the term \textit{kernel} to describe a similarity-based weighting rule \citep{nadaraya1964regression,watson1964smooth,scholkopf2002kernel,rasmussen2006gaussian}.
% Given reference elements $\{u_i\}_{i=1}^n$ and a query element $u_q$, a kernel $K$ assigns nonnegative weights $K(u_q,u_i)\ge 0$.
% Let $f$ denote a prediction function defined on these elements.
% Given bootstrapped targets $\{y_i\}_{i=1}^n$ and current predictions $\{f(u_i)\}_{i=1}^n$, define prediction errors
% $\epsilon_i \doteq y_i - f(u_i)$.
% A kernel-weighted bootstrapping update at the query $u_q$ takes the form \citep{sutton1988learning,berthier2022nonparametric}
% \begin{equation}
% \label{eq:kernel-bootstrap}
% f(u_q) \leftarrow f(u_q) + \alpha \sum_{i=1}^n K(u_q,u_i)\,\epsilon_i .
% \end{equation}

\subsection{Transformers and Softmax Self-Attention}
\label{sec:trans}
A Transformer's core innovation is the self-attention mechanism \citep{vaswani2017}. 
Given a prompt $Z \in \R[(d+3)\times(n+1)]$, a standard single-head 
self-attention layer processes the prompt via
\begin{equation}
\label{eq:attn}
\textstyle
\text{Attn}_{W_v, W_q, W_k}(Z) \doteq W_v Z\, \softmax\bigl(Z^\top W_k^\top W_q Z + M\bigr),
\end{equation}
where $W_v \in \R[(d+3)\times(d+3)]$, $W_k \in \R[m \times (d+3)]$, 
and $W_q \in \R[m \times (d+3)]$ are the value, key, and query weight 
matrices, and $\softmax$ is applied column-wise. 
The mask $M \in \R[(n+1)\times(n+1)]$ isolates the query column from the context columns
\begin{equation}
\label{eq:mask}
\textstyle
M[i,j] = \begin{cases} 0, & i \leq n, \\ -\infty, & i = n+1, \end{cases}
\end{equation}
so that the query column does not contribute as a source token. 
For analytical convenience, we follow \citet{cheng2024fgd} by reparameterizing $W_v$ as $V_l$ and denoting the query-key score matrix $W_k^\top W_q$ by $A_l$. 
An $L$-layer Transformer with parameters $\{(V_l, A_l)\}_{l=0,\ldots,L-1}$ 
updates the representation at layer $l+1$ recursively from an initial 
prompt $Z_0 \in \R[(d+3)\times(n+1)]$ as
\begin{equation}
\label{eq:repattn}
\textstyle
Z_{l+1} \doteq Z_l + V_l Z_l\, \softmax\bigl(Z_l^\top A_l Z_l + M\bigr).
\end{equation}
We define the attention matrix at layer $l$ by
\begin{equation}
\label{eq:kermatrix}
\textstyle
\wK_l \doteq \softmax\bigl(Z_l^\top A_l Z_l + M\bigr) \in \R[(n+1)\times(n+1)].
\end{equation}
Its entry $\wK_l[i,j]$ specifies the attention weight assigned to source token $i$ when aggregating into token $j$. By construction of $M$, $\wK_l[n+1, j] = 0$ for all $j$.
For notational simplicity, we suppress $M$ in the remainder of the paper, with the understanding that the mask in \eqref{eq:mask} is applied throughout.
Since policy evaluation requires a scalar prediction, we extract a single entry from the final representation matrix.
Following prior work \citep{ahn2023transformers, wang2025ictd}, we define
\begin{equation}
\label{eq:output}
    \text{TF}_L\left(Z_0; \{V_l, A_l\}_{l=0}^{L-1}\right) \doteq Z_L[d+3, n+1],
\end{equation}
to denote the output of the $L$-layer Transformer given an input $Z_0$.

For comparison, the existing ICRL theory of \citet{wang2025ictd, 
wang2025emergence} uses linear attention, replacing the masked softmax in \eqref{eq:repattn} with the identity map:
\begin{equation}
\label{eq:linear-attn-compare}
\textstyle
Z_{l+1}^{\mathrm{lin}} = Z_l + V_l Z_l\, (Z_l^\top A_l Z_l).
\end{equation}
In the broader in-context learning literature, \citet{cheng2024fgd} further replace the softmax with a general activation $h$, giving 
$Z_l + V_l Z_l\, h(Z_l^\top A_l Z_l)$. 
We instead retain the standard softmax \citep{vaswani2017}, bringing our analysis closer to the architecture used in practice.

\subsection{Reinforcement Learning and Policy Evaluation}

Since we focus on the policy evaluation problem with a fixed policy, we work directly with the induced Markov Reward Process (MRP, \citet{bellman1957markovian}). 
An MRP is a tuple $(\fS, p, r, \gamma, p_0)$ consisting of a finite state space $\fS$, a transition kernel $p: \fS \times \fS \to [0, 1]$, a reward function $r: \fS \to \R$, a discount factor $\gamma \in [0, 1)$, and an initial distribution $p_0$. 
Equivalently, an MRP arises from a Markov Decision Process $(\fS, \fA, \pmdp, \rmdp, \gamma, p_0)$ under a fixed policy $\pi: \fS \times \fA \to [0,1]$ by marginalizing over actions: $p(s'|s) = \sum_a \pi(a|s)\pmdp(s'|s,a)$ and $r(s) = \sum_a \pi(a|s)\rmdp(s,a)$.
A trajectory $(S_0, R_1, S_1, R_2, \dots)$ is sampled by drawing 
$S_0 \sim p_0$ and, at each step, $S_{t+1} \sim p(\cdot|S_t)$ with 
$R_{t+1} \doteq r(S_t)$. 
Define the state transition matrix $P_\pi \in \R[|\fS| \times |\fS|]$ by $P_\pi[s, s'] = p(s'|s)$, 
and assume the Markov chain induced by $P_\pi$ is ergodic on $\fS$ with unique stationary distribution $\mu_\pi: \fS \to [0,1]$. 
We aim to estimate the value function $v_\pi(s) \doteq \E\qty[\sum_{k=0}^{\infty} \gamma^k R_{t+k+1} \mid S_t = s]$.

One fundamental task in RL is policy evaluation, where the goal is estimating the value function $v_\pi$. It is well-known that $v_\pi$ is the unique fixed point of the Bellman expectation operator $\fT^\pi: \R[\abs{\fS}] \to \R[\abs{\fS}]$ defined as
\begin{equation}
\label{eq:expectedbellman}
    (\fT^\pi v)(s) \doteq \E_\pi[R_{t+1} + \gamma v(S_{t+1}) | S_t=s].
\end{equation}
Temporal-Difference learning (TD, \citet{sutton1988learning}) is among the most influential and widely used algorithms for policy evaluation.
% Define the TD error at a transition $(s,s')$ by
% \begin{equation}
% \label{eq:deltass}
%     \delta(s,s') \doteq r(s) + \gamma v(s') - v(s),
% \end{equation}
% where $r$ and the transition dynamics are induced by the MRP, and $v$ is instantiated as the current value estimate.
For large or continuous state spaces where tabular representations are 
impractical, a long line of work has combined RL with kernel methods to 
obtain non-parametric value function approximation 
\citep{ormoneit2002kernel, farahmand2016regularized, koppel2021policy, 
berthier2022nonparametric}. We follow the formulation of 
\citet{berthier2022nonparametric}. Let $\mathcal{X}$ be a measurable state 
space and $\kappa: \mathcal{X} \times \mathcal{X} \to \R$ a positive-definite 
kernel with associated reproducing kernel Hilbert space $\mathcal{H}$. 
Define the TD error at a transition $(x, x')$ by
\begin{equation}
\label{eq:deltass}
\delta(x, x') \doteq r(x) + \gamma v(x') - v(x).
\end{equation}
Given a sampled transition $(X_t, X_{t+1})$, non-parametric TD updates the value estimate $v_t \in \mathcal{H}$ at every query state $y \in \mathcal{X}$ via
\begin{equation}
\label{eq:nonparam-td-general}
v_{t+1}(y) = v_t(y) + \alpha_t \, \delta(X_t, X_{t+1}) \, \kappa(X_t, y),
\end{equation}
where $\{\alpha_t\}$ is a sequence of learning rates.
In this paper we work with the finite state space $\fS$. Restricting \eqref{eq:nonparam-td-general} to 
$\mathcal{X} = \fS$ and to a query state $S_n \in \fS$ yields the update
\begin{equation}
 \label{eq:nl td update}
v_{t+1}(S_n) = v_t(S_n) + \alpha_t \, \delta(S_t, S_{t+1}) \, \kappa(S_t, S_n),
\end{equation}
which is the form we build on in the rest of the paper.

\section{Transformers Can Implement In-Context Weighted Softmax TD}
\label{sec:algo}

We now answer \ref{(Q1)} by introducing a new RL algorithm, \textit{weighted softmax TD}, and constructing a dual-head Transformer that implements its update in the forward pass. Notably, this construction reveals the parameters we use to generate the results in Figure~\ref{fig:msve}.

\paragraph{Weighted softmax TD.} 
Given a trajectory $\tau_n=(S_0,R_1,\dots,S_n)$ sampled from an MRP, weighted softmax TD forms TD errors using \eqref{eq:deltass} with the sampled reward instantiated by $R_k$: 
\begin{equation}
\label{eq:tderror}
\delta_k^{(t)} \doteq R_k + \gamma v_t(S_k) - v_t(S_{k-1}),
\end{equation} 
where $k$ indexes transitions within $\tau_n$ and $t$ indexes the iteration. 
Non-parametric TD in \eqref{eq:nl td update} updates value estimates using a single transition. 
We introduce weighted softmax TD as a batch variant that aggregates $\delta_k^{(t)}$ over $\tau_n$ through softmax weights.
Specifically, for the trajectory states $\{S_j\}_{j=0}^{n}$, define the softmax weight assigned to the $k$-th transition by 
\begin{equation}
\label{eq:softmax-weight-sync}
\textstyle
K(S_{k-1},S_j)
\doteq
\frac{\exp(g(S_j,S_{k-1}))}{\sum_{m=1}^{n}\exp(g(S_j,S_{m-1}))},
\quad k=1,\dots,n,
\end{equation}
where $g:\mathcal{S}\times\mathcal{S}\to\mathbb{R}$ is a score function. 
Thus, the value estimates $\{v_t(S_j)\}_{j=0}^{n}$ are updated by
% Evaluated on the trajectory states $\{S_j\}_{j=0}^{n}$, the update aggregates $\delta_k^{(t)}$ over $\tau_n$ as
\begin{align}
\label{eq:wktd-update}
\tag{weighted softmax TD}
\textstyle
v_{t+1}(S_j)=  v_t(S_j)  +  \alpha_t \sum_{k=1}^{n}
    \delta_k^{(t)} K(S_{k-1}, S_j).
\end{align}

In particular, if $g(S_j,S_{k-1})=\log\kappa(S_j,S_{k-1})$ for some positive similarity function $\kappa$, then the weights reduce to 
$K(S_{k-1}, S_j) = \frac{\kappa(S_j,S_{k-1})}{\sum_m \kappa(S_j,S_{m-1})}$, recovering a normalized form of \eqref{eq:nl td update}.

We next show that the dual-head Transformer used in generating Figure~\ref{fig:msve} implements the \eqref{eq:wktd-update} in a single forward pass.
We begin by specifying the prompt construction and the layerwise update rule.
% We encode the trajectory $\tau_n$ as context and the query state $S_n$ into a prompt matrix $Z_0\in\R[(d+3)\times(n+1)]$.
% Using shorthands $x_k \doteq x(S_k)$ and $x_q \doteq x(S_n)$, we set
We encode the trajectory $\tau_n$ into a prompt matrix $Z_0\in\R[(d+3)\times(n+1)]$,
where the last column corresponds to the query state $S_n$.
Using shorthands $x_k \doteq x(S_k)$, we set
\begin{equation}
\label{eq:z_initial}
Z_0 \doteq \begin{bmatrix}
    x_0 & x_1 & \cdots & x_{n-1} & x_n \\
    R_1 & R_2 & \cdots & R_n & 0 \\
    0 & 0 & \cdots & 0 & 0 \\
    0 & 0 & \cdots & 0 & 0
\end{bmatrix}
=\begin{bmatrix}
    X\\
    R\\
    \0\\
    \0
\end{bmatrix}.
\end{equation}
Following \citet{wang2025ictd}, the last two rows serve as memory that the network writes and reuses across layers.

We consider an $L$-layer Transformer indexed by $l=0,1,\dots,L-1$.
Each layer consists of two attention heads, both taking $Z_l$ as input.
% We refer to row $d+2$ as the target-memory row and row $d+3$ as the current-memory row.
To state the layer update clearly, we define $\Delta Z_l= 0_{(d+3)\times(n+1)}$, and then construct two heads as follows.
\paragraph{Current-value head.}
Let $\qty{(V_l, A_l)}_{l=0,1,\dots,L-1}$ be the parameters of the current-value head in the $L$-layer Transformer.
It writes an additive update to row $d+3$ of $\Delta Z_l$:
\begin{equation}
\label{eq:tfupdate}
\Delta Z_l[d+3,:]
=
\big(V_l Z_l\,\softmax(Z_l^\top A_l Z_l)\big)[d+3,:].
\end{equation}
We choose $(V_l,A_l)$ as
\begin{equation}
\label{eq:para-now}
\textstyle
    V_l \doteq 
    \begin{bmatrix}
        0_{(d+2)\times d} & 0_{(d+2)\times 3} \\
        0_{1\times d} & \begin{bmatrix}1 & 1 & -1\end{bmatrix}
    \end{bmatrix},
    \,\,
    A_l \doteq 
    \begin{bmatrix}
        I_d & 0_{d\times 3} \\
        0_{3\times d} & 0_{3\times 3}
    \end{bmatrix}.
\end{equation}
By construction, the only non-zero row of $V_l$ is its last row, and hence \eqref{eq:tfupdate} modifies only the last row of $\Delta Z_l$.
With the choice of $A_l$ in \eqref{eq:para-now}, the score matrix $Z_l^\top A_l Z_l$ depends only on the first $d$ rows of $Z_l$, which by \eqref{eq:z_initial} store the fixed trajectory features $\qty{x_k}_{k=0}^n$ throughout all layers.
Hence the attention matrix $\wK_l$ in \eqref{eq:kermatrix} is layer-invariant, and we drop the layer index, writing $\wK$ for $wK_l$.
Moreover, $\widetilde{K}$ realizes the softmax weight $K$ in \eqref{eq:softmax-weight-sync} with the score function $g(S_j, S_{k-1}) = \langle x(S_j), x(S_{k-1}) \rangle$.
% Thus, under our softmax attention \eqref{eq:kermatrix}, the induced kernel matrix $\wK_l$ depends only on similarities between context features $\{x_k\}$ and the query feature $x_q$.
% In particular, $\wK_l$ is layer-invariant under our construction, and we may write $\wK_l \equiv \wK$ for all $l\ge 0$.
% Thus, under the softmax attention matrix \eqref{eq:kermatrix}, the induced kernel matrix $\wK_l$ depends only on similarities among the trajectory features $\{x_k\}_{k=0}^{n}$.
% In particular, $\wK_l$ is layer-invariant under our construction, and we may write $\wK_l \equiv \wK$ for all $l\ge 0$.
% Moreover, $\widetilde{K}$ coincides with the softmax weight $K$ in \eqref{eq:softmax-weight-sync} with score function $g(S_j, S_{k-1}) = \langle x(S_j), x(S_{k-1}) \rangle$.

\paragraph{Target-value head.}
The target-value head reuses the same parameters $(V_l,A_l)$ in \eqref{eq:para-now} and first forms the same kernel aggregation as in \eqref{eq:tfupdate}.
It then routes the aggregated quantities to the predecessor columns via a shift matrix
\begin{equation}
\label{eq:pi}
\textstyle
    \Pi \doteq 
\begin{bmatrix}
0 & 0 & 0 & \cdots & 0 \\
1 & 0 & 0 & \cdots & 0 \\
\vdots &  & \ddots & \ddots & \vdots \\
0 & \cdots & 1 & 0 & 0 \\
0 & \cdots & 0 & 1 & 0
\end{bmatrix}.
\end{equation}
This head writes an additive update to row $d+2$ of $\Delta Z_l$ as
\begin{equation}
\label{eq:tfupdate-prev}
\textstyle
\Delta Z_l[d+2,:]
=
\gamma\,
\big(V_l Z_l\, \softmax(Z_l^\top A_l Z_l)\Pi\big)[d+3,:].
\end{equation}

The resulting layerwise recursion sums the contributions of the two heads as
\begin{equation}
\label{eq:dualhead-out}
\textstyle
    Z_{l+1} \doteq Z_l + \Delta Z_l.
\end{equation}
Recall \eqref{eq:output}, we define the scalar output at layer $l$ as 
\begin{equation}
\label{eq:outputv}
\textstyle
    v_l(S_n) \doteq Z_l[d+3,n+1].
\end{equation}
The following theorem shows that $v_l(S_n)$ evolves according to the weighted softmax TD update.
\begin{theorem}
\label{thm:onestep}
Consider the $L$-layer dual-head Transformer with recursion \eqref{eq:dualhead-out} and parameters $\qty{(V_l, A_l)}_{l=0}^{L-1}$ given by \eqref{eq:para-now}, initialized at $Z_0$ in \eqref{eq:z_initial}.
% Let $v_l(S_n)$ be defined as in \eqref{eq:outputv}.
Then the forward pass updates $v_l(S_n)$ in \eqref{eq:outputv} as
\begin{equation}
\label{eq:update}
\textstyle
    v_{l+1}(S_n)
    =
    v_l(S_n)
    +
    \sum_{k=1}^{n}
    \delta_k^{(l)} K(S_{k-1}, S_n).
\end{equation}
\end{theorem}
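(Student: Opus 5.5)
The plan is to prove a stronger statement by induction on $l$. It concerns the whole last row of $Z_l$, not only its query entry, together with an invariant linking the two memory rows. Extend \eqref{eq:outputv} by setting $v_l(S_{j-1}) \doteq Z_l[d+3,j]$ for $j=1,\dots,n+1$; this is consistent with \eqref{eq:outputv} at $j=n+1$. Write $u_l \doteq Z_l[d+2,:]$ and $w_l \doteq Z_l[d+3,:]$ for the two memory rows. The inductive hypothesis has three parts:
\begin{enumerate}[label=(\roman*)]
\item the first $d+1$ rows of $Z_l$ equal $[X;R]$ from \eqref{eq:z_initial};
\item $u_l[j] = \gamma\, w_l[j+1]$ for $j=1,\dots,n$;
\item $u_l[n+1]=0$.
\end{enumerate}
At $l=0$, (i) holds by definition of $Z_0$, and (ii) and (iii) hold trivially because both memory rows are zero.

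\textbf{Step 1: parts (i) and (iii) are preserved, and $\wK$ is fixed.} Both heads write only to rows $d+2$ and $d+3$ of $\Delta Z_l$, so (i) is preserved. Part (i) together with the form of $A_l$ in \eqref{eq:para-now} gives $Z_l^\top A_l Z_l = X^\top X$ at every layer. Hence $\wK_l=\wK$ is layer-invariant, and with the mask its entries are $\wK[k,j]=K(S_{k-1},S_{j-1})$ for $k\le n$ and $\wK[n+1,j]=0$, matching \eqref{eq:softmax-weight-sync} with $g=\langle x(\cdot),x(\cdot)\rangle$. The last column of $\Pi$ is zero, so the target head adds nothing to column $n+1$ of row $d+2$; this preserves (iii).

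\textbf{Step 2: compute the TD-error row.} The only nonzero row of $V_l Z_l$ is its last row. By \eqref{eq:para-now} it equals $R + u_l - w_l$, where row $d+1$ (the reward row, with $R[n+1]=0$) enters with coefficient $+1$, $u_l$ with $+1$, and $w_l$ with $-1$. For $k\le n$, the inductive hypothesis (ii) gives
\[
(R+u_l-w_l)[k] = R_k + \gamma\, v_l(S_k) - v_l(S_{k-1}) = \delta_k^{(l)} ,
\]
which is exactly \eqref{eq:tderror}. The entry at column $n+1$ is irrelevant, because row $n+1$ of $\wK$ vanishes.

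\textbf{Step 3: the two head updates.} By Step 2, the current-value head gives
\[
w_{l+1}[j]-w_l[j] = \sum_{k=1}^n \delta_k^{(l)} K(S_{k-1},S_{j-1}).
\]
Taking $j=n+1$ gives \eqref{eq:update}, and the other entries show the whole last row follows \eqref{eq:wktd-update} with $\alpha_t=1$.

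For the target-value head, note that $\Pi[i,j]=1$ iff $i=j+1$. So column $j$ of $\wK\Pi$ is column $j+1$ of $\wK$ for $j\le n$, and the head gives $u_{l+1}[j]-u_l[j] = \gamma\,(w_{l+1}[j+1]-w_l[j+1])$. Adding this to (ii) at layer $l$ yields (ii) at layer $l+1$, which closes the induction.

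The main obstacle is the index bookkeeping in Step 3. The shift $\Pi$ must send the update of $v(S_k)$, which sits in column $k+1$, into column $k$, where transition $k$ needs $\gamma v(S_k)$. This is why the invariant (ii) must be stated with this particular offset, and the boundary column $n+1$ must be handled separately through the zero last column of $\Pi$ and the mask. Once that alignment is fixed, the rest is direct substitution.
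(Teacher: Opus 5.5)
Your proposal is correct and follows essentially the same route as the paper: an induction whose invariant is exactly the paper's explicit form of $Z_l$ (fixed feature and reward rows, row $d+2$ equal to $\gamma$ times the left-shifted row $d+3$, zero in the last column). Like the paper, you then use the layer-invariant masked-softmax kernel, compute the TD-error row with its last column annihilated by the zero last row of $\widetilde K$, and use the subdiagonal shift $\Pi$ to restore the $\gamma$-shifted memory row after the update.
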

The proof is in Section~\tref{proof:onestep}.
Since \eqref{eq:update} matches the \eqref{eq:wktd-update} update with $l$ as the iteration index, Theorem~\ref{thm:onestep} shows that the Transformer forward pass can implement iterations of the \eqref{eq:wktd-update} update at inference time.
We refer to this implementation paradigm as softmax in-context TD (softmax ICTD).
For simplicity, we set $\alpha_l = 1$ throughout.
The general case follows by replacing $V_l$ with $\alpha_l V_l$ in
\eqref{eq:repattn}, which rescales the update magnitude
but does not alter the structure of the construction in \eqref{eq:para-now}.
\FloatBarrier

\section{Dual-Head Attention = Single-Head Attention + Temporal Shift Module}
\label{sec:tsm-reparam}
The dual-head block in Section~\ref{sec:algo} is redundant and inconvenient for analysis.
Each layer computes two heads that share the same term $V_l Z_l \softmax(Z_l^\top A_l Z_l)$.
Specifically, \eqref{eq:tfupdate-prev} applies $\Pi$ to the same aggregated quantities in \eqref{eq:tfupdate} and scales them by $\gamma$ before writing.
Motivated by this observation, we reparameterize the block as a single-head attention followed by a parameter-free temporal shift module (TSM; \citet{lin2019tsm}).
% \sz{only here you need to name other elements of the last two rows. You have two options. You can name them now. But this is a claim because you are saying the two rows have certain relationship so you need to provide a link to a proof in appendix. Another option is that you don't name it and just compare how you generate the two rows (the two $\Delta Z_l$ eq) and then make the claim.}

Using $(V_l,A_l)$ from~\eqref{eq:para-now}, define
\begin{equation}
\label{eq:zhalf}
\textstyle
    Z_{l+\frac12} \doteq Z_l + V_l Z_l \softmax\bigl(Z_l^\top A_l Z_l\bigr),
\end{equation}
which matches the update in~\eqref{eq:tfupdate} on the current-memory row.
In the spirit of TSM, we introduce two fixed shift matrices
\begin{equation}
\label{eq:tsm}
\textstyle
    U \doteq I - e_{d+2}e_{d+2}^\top,\qquad
    W \doteq \gamma\, e_{d+2}e_{d+3}^\top.
\end{equation}
Recall from~\eqref{eq:pi} that $\Pi\in\R[(n+1)\times(n+1)]$ is the one-step predecessor shift matrix.
Therefore, $U$ clears the target-memory row, and $W(\cdot)\Pi$ routes the $\gamma$-scaled current-memory row to the predecessor columns of the target-memory row; see Figure~\ref{fig:tsm} for an illustration.
The resulting single-head+TSM update is
\begin{equation}
\label{eq:tsm-forward}
\textstyle
    Z_{l+1} = UZ_{l+\frac12} + WZ_{l+\frac12}\Pi.
\end{equation}
We next show that \eqref{eq:tsm-forward} is exactly equivalent to the original dual-head attention.

% Using $\qty{(V_l, A_l)}_{l=0,1,\dots,L-1}$ from~\eqref{eq:para-now}, define
% \begin{equation}
% \label{eq:zhalf}
%     Z_{l+\frac12} \doteq Z_l + V_l Z_l M \tilde h\big(Z_l^\top A_l Z_l\big),
% \end{equation}
% which is exactly the current-value head update written to the last row.
% Let $\Pi\in\mathbb{R}^{(n+1)\times(n+1)}$ be the one-step predecessor routing matrix defined in \eqref{eq:pi}.
% Define
% \begin{equation}
% \label{eq:tsm}
%     R \doteq I - e_{d+2}e_{d+2}^\top,\quad
%     W \doteq \gamma e_{d+2}e_{d+3}^\top.
% \end{equation}
% The single-head plus TSM update is then
% \begin{equation}
% \label{eq:tsm-forward}
%     Z_{l+1} = RZ_{l+\frac12} + WZ_{l+\frac12}\Pi.
% \end{equation}

\begin{lemma}
\label{lem:equiv}
Fix $Z_0$ defined in~\eqref{eq:z_initial}. Under the parameter correspondence
in~\eqref{eq:para-now}, for all $l=0,\ldots,L-1$ the dual-head block~\eqref{eq:dualhead-out}
and the single-head+TSM block~\eqref{eq:tsm-forward} map the same input $Z_l$ to the
same output $Z_{l+1}$.
\end{lemma}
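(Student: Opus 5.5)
The plan is to compare the two block outputs row by row, and to carry an auxiliary invariant on the memory rows along an induction over $l$. Write $\wK=\softmax(Z_l^\top A_l Z_l)$; by the discussion after \eqref{eq:para-now} it depends only on the first $d$ rows, which neither block ever modifies. Since $V_l$ has a single nonzero row, $V_lZ_l\wK$ is zero except in row $d+3$, which equals $h_l \doteq (V_lZ_l\wK)[d+3,:] = (Z_l[d+1,:]+Z_l[d+2,:]-Z_l[d+3,:])\wK$.

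\textbf{Dual-head output.} By \eqref{eq:tfupdate}, \eqref{eq:tfupdate-prev} and \eqref{eq:dualhead-out}, the dual-head block leaves rows $1,\dots,d+1$ unchanged and gives
\begin{equation}
Z_{l+1}[d+2,:]=Z_l[d+2,:]+\gamma h_l\Pi,\qquad Z_{l+1}[d+3,:]=Z_l[d+3,:]+h_l .
\end{equation}

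\textbf{Single-head+TSM output.} By \eqref{eq:zhalf}, $Z_{l+\frac12}$ agrees with $Z_l$ except that row $d+3$ becomes $Z_l[d+3,:]+h_l$. The matrix $U$ keeps every row except row $d+2$, which it zeroes. The term $WZ_{l+\frac12}\Pi$ is nonzero only in row $d+2$, where it equals $\gamma Z_{l+\frac12}[d+3,:]\Pi$. Therefore \eqref{eq:tsm-forward} reproduces rows $1,\dots,d+1$ and row $d+3$ exactly as the dual-head block does, and its row $d+2$ is
\begin{equation}
\gamma Z_l[d+3,:]\Pi+\gamma h_l\Pi .
\end{equation}

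\textbf{The only discrepancy.} The two outputs coincide exactly when the target-memory row satisfies
\begin{equation}
Z_l[d+2,:]=\gamma Z_l[d+3,:]\Pi .
\end{equation}
This is the main point: the lemma is not an identity for an arbitrary input. It holds on the iterates generated from $Z_0$ in \eqref{eq:z_initial}, which is how I read ``the same input $Z_l$''. I will therefore prove by induction on $l$ that the iterates satisfy this invariant, and hence that the two blocks agree at every layer.

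\textbf{Induction.} For the base case, both memory rows of $Z_0$ are zero, so the invariant holds trivially. For the inductive step, assume it holds for $Z_l$. The dual-head update then gives
\begin{equation}
Z_{l+1}[d+2,:]=\gamma Z_l[d+3,:]\Pi+\gamma h_l\Pi=\gamma\bigl(Z_l[d+3,:]+h_l\bigr)\Pi=\gamma Z_{l+1}[d+3,:]\Pi ,
\end{equation}
so the invariant holds for $Z_{l+1}$. By the comparison above, the TSM block produces the same $Z_{l+1}$, so the two recursions generate identical sequences for all $l=0,\dots,L-1$.

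The remaining checks are routine bookkeeping: the row selections by $U$ and $W=\gamma e_{d+2}e_{d+3}^\top$, and the fact that the mask convention applies equally to both blocks.
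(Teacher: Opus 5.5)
Your proof is correct and follows essentially the same route as the paper: a row-by-row comparison showing that the two blocks agree exactly when $Z_l[d+2,:]=\gamma Z_l[d+3,:]\Pi$. The only difference is where this invariant comes from. The paper reads it off the explicit form \eqref{eq:z_general} established in the proof of Theorem~\ref{thm:onestep}, whereas you verify it by a short self-contained induction, which is slightly leaner because it needs only the relation between the two memory rows.
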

The proof is in Section~\tref{proof:equiv}.
% Despite being equivalent, the following lemma shows single-head+TSM is more efficient.
% \begin{lemma}
% \label{lem:complexity}
% Let $p \doteq d+3$ and the prompt length be $n+1$.
% When $n\gg p$, dual-head attention costs $2\cdot O(n^2p)$ due to two attention passes, whereas single-head+TSM costs $O(n^2p)+O(np)$.
% \end{lemma}
% The proof is in Section~\tref{proof:complexity}.
In the rest of the paper, we analyze inference-time behavior using the single-head+TSM form.

\begin{figure}[htbp]\centering
  \begin{minipage}[t]{0.26\textwidth}\centering
    \includegraphics[width=\linewidth]{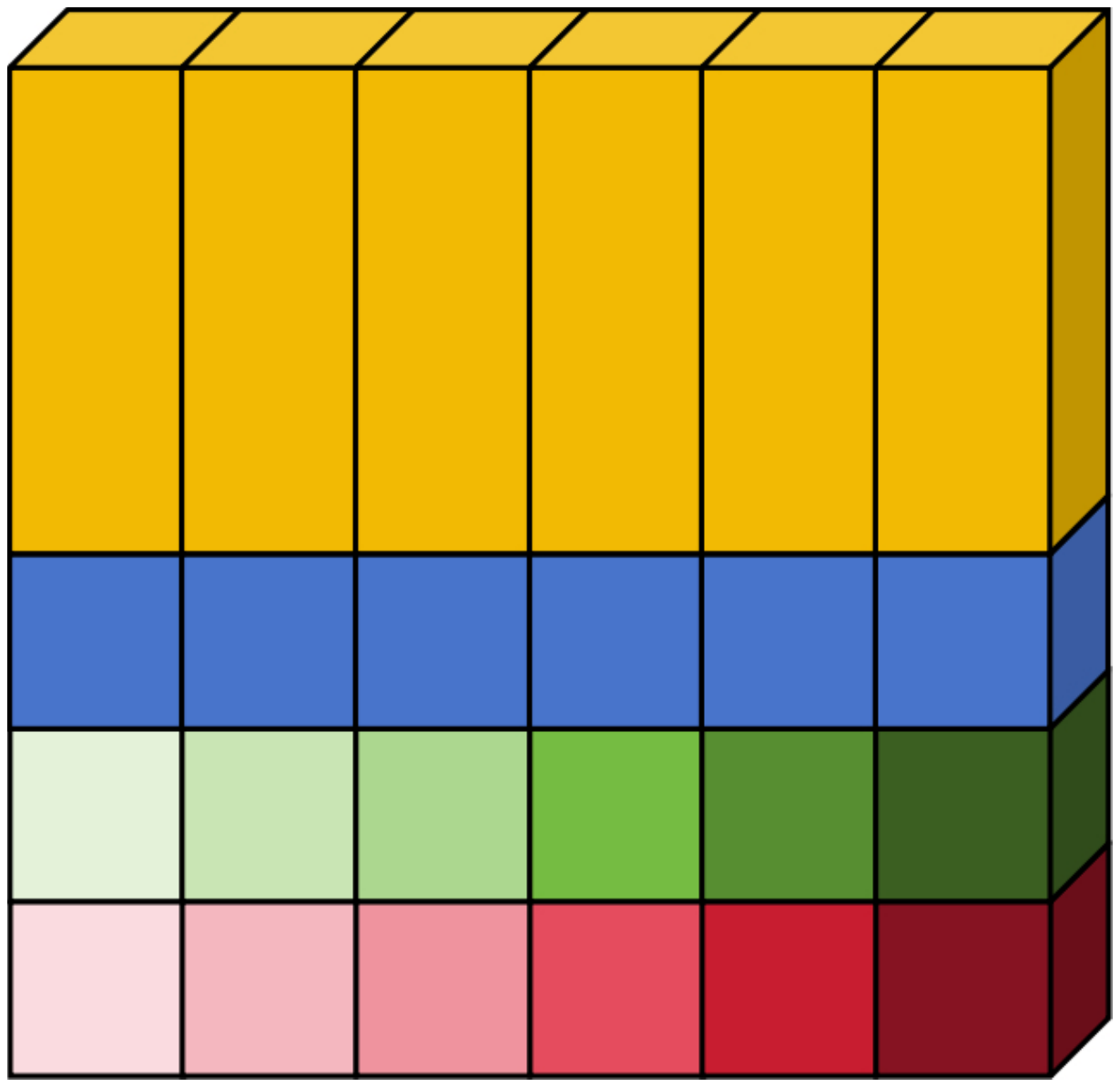}\\[2pt]
    \small (a) $Z_{l+\frac{1}{2}}$.
  \end{minipage}
  \hspace{0.01\textwidth}
  \begin{minipage}[t]{0.26\textwidth}\centering
    \includegraphics[width=\linewidth]{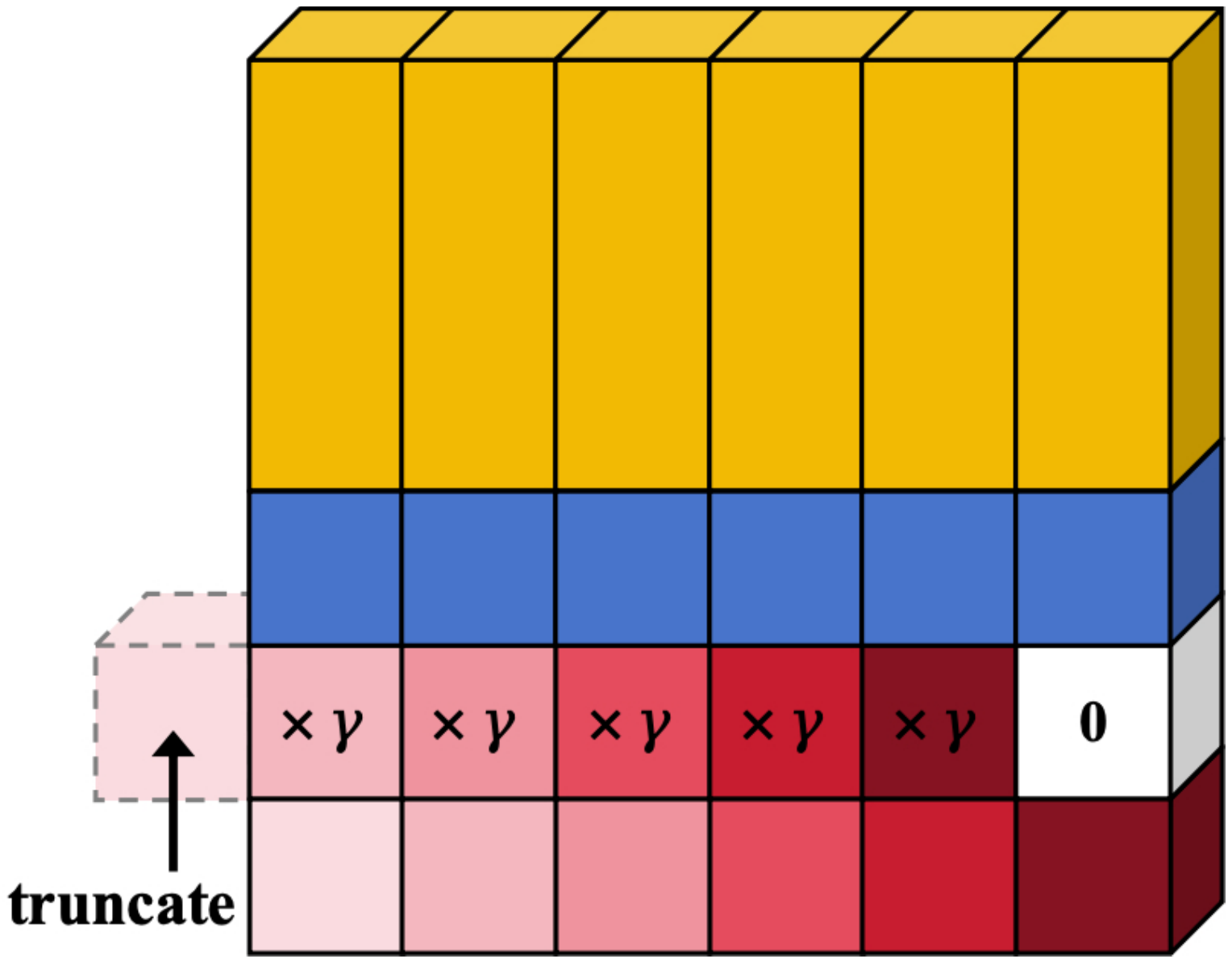}\\[2pt]
    \small (b) $Z_{l+1}$ (left-shift with $\times\gamma$, right pad 0).
  \end{minipage}
  \caption{Original vs. shifted memory rows.}
  \label{fig:tsm}
\end{figure}

\section{Inference-Time Convergence}
\label{sec:convergence}

We now address \ref{(Q2)} by establishing the convergence of softmax ICTD 
to the true value function $v_\pi$ as the number of layers $L \to \infty$ 
and the context length $n \to \infty$. Throughout the rest of the paper, 
we assume the trajectory $\tau_n=(S_0,R_1,\ldots,S_n)$ visits every state, 
i.e., $\{S_0, S_1, \ldots, S_{n-1}\} = \fS$.

To analyze the recursion \eqref{eq:update} on $\fS$, for a given context 
trajectory $\tau_n$, we define
\begin{equation}
\label{eq:Mn}
\textstyle
\wM_n(s, s') \doteq \sum_{k=1}^{n} K(S_{k-1}, s)\, \ind\{S_{k-1} = s'\},
\quad
\wP_n(s, s') \doteq \sum_{k=1}^{n} K(S_{k-1}, s)\, \ind\{S_k = s'\}.
\end{equation}
Here $\wM_n$ aggregates the softmax weights according to the predecessor 
state $S_{k-1}$, and $\wP_n$ aggregates the same weights according to 
the successor state $S_k$. Then \eqref{eq:update} can be written compactly as
\begin{equation}
\label{eq:operator-compact}
\textstyle
\widehat{\fT}_{\wK}(v) = (I - \wM_n + \gamma \wP_n) v + \wM_n r,
\end{equation}
and stacking layers corresponds to repeated application of $\widehat{\fT}_{\wK}$:
\begin{equation}
\label{eq:recur}
\textstyle
v_{l+1} = \widehat{\fT}_{\wK}(v_l).
\end{equation}

The empirical operator $\widehat{\fT}_{\wK}$ contracts whenever $\wM_n$ has 
sufficiently large diagonal entries. To formalize this, we introduce the 
population counterpart of $\wM_n$, obtained by replacing the empirical 
state frequencies in $K$ with the stationary distribution $\mu_\pi$:
\begin{equation}
\label{eq:Mpi}
\textstyle
M_\pi(s, s') \doteq \frac{\mu_\pi(s')\, \exp(\langle x(s), x(s') \rangle)}
{\sum_{u \in \fS} \mu_\pi(u)\, \exp(\langle x(s), x(u) \rangle)},
\end{equation}
where $\mu_\pi$ is the stationary distribution of $P_\pi$. We impose the 
following contraction assumption.
\begin{assumption}
\label{ass:diagonal-margin}
There exists a constant $C_{\ref{ass:diagonal-margin}} \in 
\bigl(0, \frac{1-\gamma}{2}\bigr)$ such that
\begin{equation}
\textstyle
\min_{s \in \fS} M_\pi(s, s) \geq \frac{1+\gamma}{2} + C_{\ref{ass:diagonal-margin}}.
\end{equation}
\end{assumption}
Assumption~\ref{ass:diagonal-margin} is a kernel diagonality condition on $M_\pi$, requiring  $M_\pi(s,s)$ to dominate the off-diagonal mass at every state $s$. 
It is expected to hold when the score function $g(s,s') = \langle x(s), x(s')\rangle$ separates self-pairs from cross-pairs, i.e., $g(s,s) - g(s,s')$ is sufficiently positive for $s' \neq s$, which sharpens the softmax in \eqref{eq:softmax-weight-sync} toward the diagonal. 
We will see in Section~\ref{sec:experiment} that this self-concentration emerges empirically over pretraining.

When $n$ is large enough, $\wM_n$ inherits the diagonal margin of $M_\pi$, which  together with a high-probability bound on the bias $\|\widehat{\fT}_{\wK}(v_\pi) - v_\pi\|_\infty$ yields our main convergence result.
\begin{theorem}
\label{thm:finite-n}
Under Assumption~\ref{ass:diagonal-margin}, for any $\delta \in (0,1)$, 
there exist constants $C_{\text{Thm}\ref{thm:finite-n},1}, 
C_{\text{Thm}\ref{thm:finite-n},2}, C_{\text{Thm}\ref{thm:finite-n},3} > 0$ 
such that if $n \geq C_{\text{Thm}\ref{thm:finite-n},1} \log(1/\delta)$, 
then with probability at least $1 - \delta$, for all $L \geq 0$,
\begin{equation}
\label{eq:conv}
\textstyle
\|v_L - v_\pi\|_\infty 
\leq 
(1 - C_{\ref{ass:diagonal-margin}})^L \|v_0 - v_\pi\|_\infty 
+ C_{\text{Thm}\ref{thm:finite-n},2}
\sqrt{\frac{\log(C_{\text{Thm}\ref{thm:finite-n},3}/\delta)}{n}}.
\end{equation}
\end{theorem}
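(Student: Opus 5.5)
The plan is to bound $\wT_{\wK}$ in two ways: show it is a $\|\cdot\|_\infty$-contraction with modulus $1-C_{\ref{ass:diagonal-margin}}$ on a high-probability event, and show that $v_\pi$ is nearly a fixed point of it. The standard perturbed-fixed-point recursion then yields \eqref{eq:conv}. Write $e_l \doteq v_l - v_\pi$. By \eqref{eq:recur},
\begin{equation}
e_{l+1} = \wT_{\wK}(v_l) - \wT_{\wK}(v_\pi) + \bigl(\wT_{\wK}(v_\pi) - v_\pi\bigr) = (I - \wM_n + \gamma\wP_n)e_l + b_n ,
\end{equation}
where $b_n \doteq \wT_{\wK}(v_\pi)-v_\pi$. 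If $\|I-\wM_n+\gamma\wP_n\|_\infty \le 1-C_{\ref{ass:diagonal-margin}}$, unrolling gives
\begin{equation}
\|e_L\|_\infty \le (1-C_{\ref{ass:diagonal-margin}})^L\|e_0\|_\infty + \|b_n\|_\infty / C_{\ref{ass:diagonal-margin}} .
\end{equation}
It therefore suffices to show $\|b_n\|_\infty \lesssim \sqrt{\log(C/\delta)/n}$ and to absorb $1/C_{\ref{ass:diagonal-margin}}$ into $C_{\text{Thm}\ref{thm:finite-n},2}$.

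\textbf{Contraction step.} Each row of $\wM_n$ and of $\wP_n$ is nonnegative and sums to $\sum_k K(S_{k-1},s)=1$. Hence row $s$ of $I-\wM_n+\gamma\wP_n$ has absolute sum at most
\begin{equation}
\bigl(1-\wM_n(s,s)\bigr) + \bigl(1-\wM_n(s,s)\bigr) + \gamma = 2 - 2\wM_n(s,s) + \gamma ,
\end{equation}
where the first term is the diagonal entry, the second bounds the off-diagonal mass of $\wM_n$, and the third bounds the $\wP_n$ contribution. This is at most $1-C_{\ref{ass:diagonal-margin}}$ exactly when $\wM_n(s,s)\ge \frac{1+\gamma}{2}+\frac{C_{\ref{ass:diagonal-margin}}}{2}$.

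Next, note that $\wM_n(s,s') = \hat\mu_n(s')e^{\langle x(s),x(s')\rangle}/\sum_u \hat\mu_n(u)e^{\langle x(s),x(u)\rangle}$ with $\hat\mu_n$ the empirical frequency of $S_0,\ldots,S_{n-1}$, which is a smooth (Lipschitz) function of $\hat\mu_n$ on the simplex. A Hoeffding-type concentration for ergodic Markov chains gives $\|\hat\mu_n-\mu_\pi\|_\infty \lesssim \sqrt{\log(|\fS|/\delta)/n}$ with probability $1-\delta/2$. For $n \ge C_{\text{Thm}\ref{thm:finite-n},1}\log(1/\delta)$, this yields $|\wM_n(s,s)-M_\pi(s,s)|\le C_{\ref{ass:diagonal-margin}}/2$, so Assumption~\ref{ass:diagonal-margin} transfers to $\wM_n$. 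This transfer consumes half the margin; the remaining half is what gives the contraction factor $1-C_{\ref{ass:diagonal-margin}}$.

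\textbf{Bias step (main obstacle).} Using $R_k=r(S_{k-1})$ and $v_\pi = r+\gamma P_\pi v_\pi$,
\begin{equation}
b_n(s) = \sum_{k=1}^n K(S_{k-1},s)\bigl(r(S_{k-1})+\gamma v_\pi(S_k)-v_\pi(S_{k-1})\bigr) = \gamma\sum_{k=1}^n K(S_{k-1},s)\,\xi_k ,
\end{equation}
where $\xi_k \doteq v_\pi(S_k)-(P_\pi v_\pi)(S_{k-1})$ is a bounded martingale difference. The difficulty is that the weights $K(S_{k-1},s)$ depend on the whole trajectory through the normalizer, so the sum is not directly a martingale.

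I would handle this by factoring $K(S_{k-1},s)=e^{\langle x(s),x(S_{k-1})\rangle}/(n Z_n(s))$ with $Z_n(s)=\sum_u \hat\mu_n(u)e^{\langle x(s),x(u)\rangle}$. Then
\begin{equation}
b_n(s)=\frac{\gamma}{Z_n(s)}\cdot\frac1n\sum_k e^{\langle x(s),x(S_{k-1})\rangle}\xi_k .
\end{equation}
The numerator is $1/n$ times a genuine martingale with bounded increments, so Azuma--Hoeffding gives $O(\sqrt{\log(|\fS|/\delta)/n})$ uniformly over $s$ via a union bound. The denominator $Z_n(s)$ is bounded below by $\min_{u,s} e^{\langle x(s),x(u)\rangle}>0$ deterministically. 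Intersecting with the concentration event from the contraction step (total failure probability $\delta$) and collecting the constants, which depend on $|\fS|$, $\|r\|_\infty/(1-\gamma)$, the features, and the mixing of $P_\pi$, into $C_{\text{Thm}\ref{thm:finite-n},2}$ and $C_{\text{Thm}\ref{thm:finite-n},3}$ completes the proof of \eqref{eq:conv}.
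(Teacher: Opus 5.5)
Your proposal is correct, and it shares the paper's overall structure: the same error recursion $e_{l+1}=(I-\wM_n+\gamma\wP_n)e_l+b_n$, the same row-sum bound $2(1-\wM_n(s,s))+\gamma$, and the same transfer of Assumption~\ref{ass:diagonal-margin} to $\wM_n$ through Markov-chain concentration of $\hat\mu_n$, which spends half the margin.

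The genuine difference is in the bias term $\|\wT_{\wK}(v_\pi)-v_\pi\|_\infty$.

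\textbf{The paper's route.} It groups $\sum_k K(S_{k-1},s)\xi_k$ by predecessor state $s'$. It then bounds the weights $e^{\langle x(s),x(s')\rangle}$ and the normalizer deterministically, which costs a factor $|\fS|e^{2C_x^2}$. Finally it applies the Markov-chain Hoeffding bound of Lemma~\ref{lem:markov-hoeffding} to the pair chain $(S_{k-1},S_k)$, using the zero-mean functions $g_{s'}(a,b)=\ind\{a=s'\}\bigl[v_\pi(b)-(P_\pi v_\pi)(s')\bigr]$.

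\textbf{Your route.} You factor out only the normalizer $nZ_n(s)$. You note that $e^{\langle x(s),x(S_{k-1})\rangle}$ is $\sigma(S_0,\ldots,S_{k-1})$-measurable, so the numerator is a martingale with bounded increments. Azuma--Hoeffding plus a union bound over query states then finishes the bias step.

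\textbf{What each buys.} Your argument is more elementary. The bias bound uses only the Markov property, so it needs neither ergodicity of the pair chain on $E$ nor its spectral quantities $\widetilde\lambda_r,\widetilde\mu_{\min}$. It holds for any initial distribution and saves the $|\fS|$ factor. Mixing enters your proof only through the frequency concentration in the contraction step. Relatedly, you need only the diagonal entries of $\wM_n-M_\pi$ to concentrate, whereas the paper bounds the full $\|\wM_n-M_\pi\|_\infty$ and pays another $|\fS|$. The paper's route, in exchange, uses one concentration tool for both lemmas and reduces the query dependence to a deterministic step after concentrating $|\fS|$ query-independent sums.
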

The proof is in Section~\ref{proof:finite-n}, which combines an 
$\ell_\infty$-contraction of $\widehat{\fT}_{\wK}$ around $v_\pi$ 
(Lemma~\ref{lem:contraction}) with a high-probability bound on the bias 
$\|\widehat{\fT}_{\wK}(v_\pi) - v_\pi\|_\infty$ (Lemma~\ref{lem:bias}), 
both deferred to Appendix~\ref{proof:contraction} and~\ref{proof:bias} 
respectively. The first term of \eqref{eq:conv} decays geometrically with 
the number of layers $L$, while the second term is a statistical floor 
that vanishes as $n \to \infty$. That is, the softmax ICTD recursion 
converges to $v_\pi$ in the double limit where both depth and context 
length grow.

\section{Emergence of Softmax ICTD under Reinforcement Pretraining}
\label{sec:emergence}

We now answer \ref{(Q3)} by showing that, as the depth $L\to\infty$ and the context length $n \to \infty$, the parameters $\{(V_l,A_l)\}_{l=0}^{L-1}$ constructed in \eqref{eq:para-now} are one of the global minimizers of the reinforcement pretraining loss.

To avoid specializing to a single task, we pretrain on a distribution of MRPs drawn from a joint distribution $\Delta$ over transition probabilities and reward functions, i.e., $(p,r)\sim\Delta$. 
For each sampled MRP, we draw a trajectory $\tau_{n+1} = (S_0, R_1, S_1, \ldots, R_{n+1}, S_{n+1})$ from the stationary Markov chain induced by $\pi$ (i.e., $S_0 \sim \mu_\pi$). 
We construct the prompt $Z_0$ from the first $n$ transitions $(S_0, R_1, \ldots, R_n, S_n)$ as in~\eqref{eq:z_initial}, and denote the scalar output of the $L$-layer Transformer by $v_L(S_n;\theta)$. 
Similarly, $v_L(S_{n+1};\theta)$ denotes the output on the shifted prompt built from $(S_1, R_2, \ldots, R_{n+1}, S_{n+1})$ with query $S_{n+1}$. The TD error at the query state is
\begin{equation}
\label{eq:emerdelta}
\textstyle
\delta_L(\theta) \doteq R_{n+1} + \gamma v_L(S_{n+1};\theta) - v_L(S_n;\theta).
\end{equation}
The norm of the expected update (NEU) is a fundamental objective in TD methods \citep{sutton2009convergent, sutton2009fast}. Its zeros characterize TD fixed points, making it a natural pretraining loss for studying whether the constructed parameters $\theta_*$ attain global optimality. Following \citet{wang2025emergence}, we adopt its extension to arbitrary function approximation, defined as the $\ell_1$-norm of the expected TD update direction:
\begin{align}
\label{eq:NEU}
\textstyle
J_{L,n}(\theta) \doteq 
\bigl\| \E_{(p,r),\, \tau_{n+1}}
[\delta_L(\theta) \nabla_\theta v_L(S_n;\theta)] \bigr\|_1.
\end{align}
Following common practice in the in-context learning theory literature \citep{vonoswald2023transformers, ahn2023transformers,mahankali2024one,zhang2024trained,cheng2024fgd,gatmiry2024looped,huang2025transformers,wang2025emergence}, we analyze NEU on a looped Transformer and restrict the parameters to a structured sparse space. 
Concretely, we tie the parameters across layers by sharing the same pair $(V,A)$, i.e., $(V_l,A_l)=(V,A)$ for all $l\in\{0,\dots,L-1\}$, and search over the following parameter space:
\begin{equation}\label{eq:parameterization_set}
\textstyle
\Theta^{\text{Looped}} \doteq \bigg\{ \theta = (V, A)^L  \bigg| 
 V = \begin{bmatrix} 0_{(d+2) \times d} & 0_{(d+2) \times 3} \\ 
 0_{1 \times d} & u \end{bmatrix}, 
 A = \begin{bmatrix} B & 0_{d \times 3} \\ 
 0_{3 \times d} & 0_{3 \times 3} \end{bmatrix}
\bigg\},
\end{equation}
where $u \in \mathbb{R}^{1 \times 3}$ and $B \in \mathbb{R}^{d \times d}$. 
Note that the parameter choice constructed in \eqref{eq:para-now} lies in $\Theta^{\text{Looped}}$; for convenience, we denote it by $\theta_*$.
We now state the main emergence result.
\begin{theorem}
\label{thm:emergence}
Under Assumption~\ref{ass:diagonal-margin} holding uniformly over $\Delta$, 
there exist a constant $n_0>0$ and constants 
$C_{\text{Thm}\tref{thm:emergence},2}, C_{\text{Thm}\tref{thm:emergence},3}>0$ 
such that for every $L\geq 1$ and $n\geq n_0$,
\begin{equation}
\label{eq:emer}
\textstyle
J_{L,n}(\theta_*) \leq
C_{\text{Thm}\tref{thm:emergence},2}
\qty[
(1-C_{\ref{ass:diagonal-margin}})^L
+
\sqrt{\frac{\log n}{n}}
]
+
C_{\text{Thm}\tref{thm:emergence},3}
\frac{L(2+\gamma)^{2L}}{n}.
\end{equation}
In particular,
\[
\textstyle
\lim_{L\to\infty}\limsup_{n\to\infty}J_{L,n}(\theta_*)=0.
\]
\end{theorem}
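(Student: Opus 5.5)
Since $J_{L,n}(\theta_*)=\|\E[\delta_L(\theta_*)\nabla_\theta v_L(S_n;\theta_*)]\|_1$, I will bound the two factors separately. The TD error $\delta_L(\theta_*)$ should be small because $v_L$ is close to $v_\pi$ (Theorem~\ref{thm:finite-n}). The gradient $\nabla_\theta v_L(S_n;\theta_*)$ is at most exponentially large in $L$, which is why the factor $(2+\gamma)^{2L}$ appears.

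\textbf{Splitting the TD error.} Write
\[
\delta_L(\theta_*) = \bigl[R_{n+1}+\gamma v_\pi(S_{n+1})-v_\pi(S_n)\bigr] + \gamma\bigl(v_L(S_{n+1})-v_\pi(S_{n+1})\bigr) - \bigl(v_L(S_n)-v_\pi(S_n)\bigr).
\]
The last two terms are controlled by Theorem~\ref{thm:finite-n}, applied once to the original prompt and once to the shifted prompt, with $\delta = 1/n$. On the good event they are at most $(1-C_{\ref{ass:diagonal-margin}})^L\|v_0-v_\pi\|_\infty + C\sqrt{\log n/n}$. Here $v_0=0$, and $\|v_\pi\|_\infty\le r_{\max}/(1-\gamma)$ uniformly over $\Delta$. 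On the bad event, of probability $O(1/n)$, I use a crude bound. Multiplying by a bound $G_L$ on $\|\nabla_\theta v_L\|_1$ and taking expectations contributes
\[
G_L\Bigl[(1-C_{\ref{ass:diagonal-margin}})^L+\sqrt{\log n/n}\Bigr] + O\bigl(G_L\, B_L/n\bigr),
\]
where $B_L$ is a crude bound on $|\delta_L|$. The threshold $n\ge n_0$ comes from the sample-size condition in Theorem~\ref{thm:finite-n}.

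\textbf{The Bellman-noise term.} The term $\epsilon\doteq R_{n+1}+\gamma v_\pi(S_{n+1})-v_\pi(S_n)$ does not shrink pointwise, so I handle it by conditioning. Since $R_{n+1}=r(S_n)$, we have $\E[\epsilon\mid S_0,\dots,S_n]=0$. The gradient $\nabla_\theta v_L(S_n;\theta)$ depends only on the prompt, which is measurable with respect to $S_0,\dots,S_n$. Hence $\E[\epsilon\,\nabla_\theta v_L(S_n)]=0$ exactly. However, $v_L(S_{n+1})$ depends on the shifted prompt, so the split above pairs $\epsilon$ only with $\nabla v_L(S_n)$, which keeps the orthogonality argument valid.

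\textbf{Gradient bound (main obstacle).} I need a uniform bound on $\nabla_\theta v_L(S_n;\theta)$ at $\theta_*$ with respect to $u$ and $B$ in $\Theta^{\mathrm{Looped}}$. Differentiate the looped recursion $Z_{l+1}=UZ_{l+1/2}+WZ_{l+1/2}\Pi$, which is equivalent to the dual-head block by Lemma~\ref{lem:equiv}. The memory rows evolve linearly through $I-\wM_n+\gamma\wP_n$-type maps whose row sums are bounded by $1+1+\gamma=2+\gamma$ in $\ell_\infty$. Derivatives of the softmax with respect to $B$ are bounded because the features are bounded. A telescoping chain rule over $L$ layers then gives
\[
\|\partial v_L\| \lesssim L(2+\gamma)^{L}\,(1+\|v_l\|),
\]
and a crude bound on the memory magnitudes adds another factor $(2+\gamma)^L$. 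This yields $G_L\lesssim L(2+\gamma)^{2L}$ on the bad event.

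\textbf{Matching the stated bound, and the main obstacle.} The first term of \eqref{eq:emer} has constant $C_{\text{Thm}\tref{thm:emergence},2}$ independent of $L$, so on the good event I need $G_L$ to be bounded uniformly in $L$. The plan is to exploit the contraction: on the good event, derivatives with respect to $u$ and $B$ propagate through $I-\wM_n+\gamma\wP_n$, which contracts at rate $1-C_{\ref{ass:diagonal-margin}}$ in $\ell_\infty$. This should make the gradient a convergent geometric sum, uniformly bounded in $L$. The crude exponential bound is then used only on the bad event, or to absorb the finite-$n$ fluctuation of $\wM_n$ around $M_\pi$, giving the $L(2+\gamma)^{2L}/n$ term. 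This case split is the delicate step.

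\textbf{Conclusion.} For fixed $L$, the term $L(2+\gamma)^{2L}/n\to0$ as $n\to\infty$, so
\[
\limsup_{n\to\infty} J_{L,n}(\theta_*)\le C_{\text{Thm}\tref{thm:emergence},2}(1-C_{\ref{ass:diagonal-margin}})^L,
\]
which vanishes as $L\to\infty$.
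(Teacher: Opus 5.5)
Your proposal has the same skeleton as the paper's proof. You split on an event of probability at least $1-1/n$. On that event, Theorem~\ref{thm:finite-n} makes the TD error $O\bigl((1-C_{\ref{ass:diagonal-margin}})^L+\sqrt{\log n/n}\bigr)$. The recursion $\nabla_\theta v_{l+1}=(I-\wM_n+\gamma\wP_n)\nabla_\theta v_l+g_l$, with $\|I-\wM_n+\gamma\wP_n\|_\infty\le 1-C_{\ref{ass:diagonal-margin}}$, gives an $L$-uniform gradient bound; this is the paper's Lemma~\ref{lemma:grad_bound}. The detail you leave implicit is that $\|g_l\|_\infty\lesssim C_R+(1+\gamma)\|v_l\|_\infty$ is bounded uniformly in $l$, because Theorem~\ref{thm:finite-n} holds for all depths on a single event. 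On the complement you use deterministic exponential bounds, as the paper does.

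The genuine difference is your handling of the TD error. The paper applies the tower property and writes $\E[\delta_L\mid\tau_n]=r(S_n)+\gamma(P_\pi v_L)(S_n)-v_L(S_n)$, then bounds this by $(1+\gamma)\|v_L-v_\pi\|_\infty$ (Lemma~\ref{lem:decay}). That step tacitly treats the shifted-prompt output $v_L(S_{n+1};\theta_*)$ as the original-prompt value vector evaluated at $S_{n+1}$. This is not literally true: the shifted context drops $(S_0,S_1)$ and contains $(S_n,S_{n+1})$. Your split instead removes the Bellman noise exactly via $\E[\epsilon\mid S_0,\dots,S_n]=0$. You then apply Theorem~\ref{thm:finite-n} separately to the original and the shifted prompt, so you need no such identification. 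The price is one more union bound, and you recover the same $(1+\gamma)[\cdots]$ factor more rigorously.

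One quantitative slip needs fixing. You bound the gradient on the bad event by $L(2+\gamma)^{2L}$ and then multiply by a crude TD-error bound $B_L\asymp(2+\gamma)^L$. That gives $L(2+\gamma)^{3L}/n$, not the stated $L(2+\gamma)^{2L}/n$. The fix is not to bound the propagation factor and $\max_l\|v_l\|_\infty$ separately:

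The source term at layer $l$ is only $O\bigl(C_R(2+\gamma)^l\bigr)$, since $\|v_l\|_\infty\le C_R\bigl((2+\gamma)^l-1\bigr)/(1+\gamma)$.

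It is afterwards amplified by at most $(2+\gamma)^{L-1-l}$.

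Hence $\|\nabla_\theta v_L\|_\infty\lesssim\sum_{l=0}^{L-1}(2+\gamma)^{L-1-l}(2+\gamma)^l=L(2+\gamma)^{L-1}$.

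Multiplying by $B_L\lesssim(2+\gamma)^L$ then gives exactly the paper's $L(2+\gamma)^{2L}$. The limit statement is unaffected either way. The crude bound is needed only on the complement event, not to ``absorb the finite-$n$ fluctuation of $\wM_n$.''
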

The proof is in Section~\ref{proof:emergence}. The argument relies on a uniform bound on $\nabla_\theta v_L(S_n;\theta_*)$ across all depths (Lemma~\ref{lemma:grad_bound} in Appendix~\ref{proof:grad_bound}), which in turn uses Assumption~\ref{ass:diagonal-margin} together with the layer-invariance of $\wK(\theta)$ for $\theta \in \Theta^{\text{Looped}}$. 
Since $J_{L,n}(\theta)\ge 0$ for all $\theta$, Theorem~\ref{thm:emergence} 
implies that $\theta_*$ is a global minimizer of the NEU loss in the  iterated regime where the context length grows first and the depth grows afterward.

\section{Related Works}

The theoretical study of ICRL has largely relied on the linear-attention abstraction, which replaces the softmax with the identity map to make the forward pass tractable \citep{wang2025ictd, wang2025emergence}. 
This simplification is also pervasive in the broader in-context learning (ICL) literature, where it has been used to show that Transformers can implement gradient-based algorithms in their forward pass \citep{ahn2023transformers, zhang2024trained, gatmiry2024looped}. 
A smaller line of work keeps a nonlinearity explicit, either through general activation operators $V Z\, h(Z^\top A Z)$ \citep{cheng2024fgd} or by relating softmax attention to kernel-based learning rules \citep{ren2024representation, dragutinovic2025softmaxgeqlinear}. 
Other results under standard softmax attention show that fixed-weight softmax attention can simulate multi-step gradient descent on deep networks \citep{wu2025incontext}, emulate prompt-programmable algorithms in a two-layer module \citep{hu2026incontext}, or serve as a sequence-to-sequence universal approximator with in-context gradient-descent approximation \citep{hu2026universal}.
While these results inform supervised in-context computation, they do not address reinforcement-pretrained policy evaluation, and the standard softmax used in practical Transformers remains largely unanalyzed in the ICRL setting. 
Our work closes this gap, complementing \citet{wang2025emergence} by removing the linear-attention simplification.

A concurrent submission \citep{xie2026cot} similarly builds on \citet{wang2025ictd, wang2025emergence}, but takes a fundamentally different route: it analyzes linear attention with autoregressive chain-of-thought generation and linear function approximation, where each generated token implements one batch TD update on the context, 
whereas we analyze softmax attention with depth-based computation for finite-state policy evaluation. 
As a result, the two works draw on largely disjoint technical tools. 
For convergence, our analysis is driven by a kernel diagonality condition (Assumption~\ref{ass:diagonal-margin}) and is then lifted to finite contexts via concentration, whereas \citet{xie2026cot} provide a finite-sample analysis on a single Markovian trajectory. 
For emergence, we control how the value prediction depends on the parameters, whereas \citet{xie2026cot} control how the algorithmic iterates depend on the parameters. 
The two works share the Boyan's chain task setup of \citet{wang2025ictd}.

ICRL has been studied under both supervised and reinforcement pretraining \citep{moeini2025survey}. 
Supervised pretraining trains a sequence model to imitate trajectories from existing RL algorithms \citep{laskin2023incontext, liu2023emergent, shi2023cross, kirsch2023towards,  zisman2024emergence,huang2024cot, huang2024decision, dai2024incontext, zisman2025ngram, polubarov2025vintix}, 
with theoretical accounts of the resulting in-context behavior 
\citep{lin2024transformers, lee2023supervised}. 
Reinforcement pretraining instead trains the model with an RL objective and observes that test-time adaptation still emerges \citep{duan2016rl, wang2016learning, mishra2018simple, ritter2018been, stadie2018some, zintgraf2020varibad, melo2022transformer}, with empirical demonstrations across diverse tasks \citep{team2023human, lu2023structured, grigsby2024amago, grigsby2024amago2, elawady2024relic, xu2024meta, cook2024artificial, liu2025locoformer}. 
Theoretical results under reinforcement pretraining remain limited: \citet{park2025llm} analyze a regret-minimization objective, and \citet{wang2025emergence} establish convergence and emergence under linear attention. Our paper falls in this regime, with softmax attention.

\section{Experiments}
\label{sec:experiment}
\paragraph{Setup.}
Following \citet{wang2025ictd}, we study multi-task policy evaluation on randomized instances of Boyan's chain~\citep{boyan1999least}.
At the beginning of each epoch, we sample a Boyan-chain MRP together with a feature map (Algorithm~\tref{alg:representable}).
We construct the prompt $Z_0$ as in~\eqref{eq:z_initial} and apply the Transformer block in Section~\tref{sec:tsm-reparam},
using a column-normalized softmax attention kernel as in~\eqref{eq:repattn}.
As in \citet{wang2025ictd}, the block unrolls autoregressively for $L=3$ layers with shared parameters $(V_l,A_l)=(V_0,A_0)$ across layers.
All entries of $(V_0,A_0)$ are trainable except for a minimal sparse parameterization that prevents overwriting features or rewards; see Section~\ref{app:minimal-mask}.
Hyperparameter and task-generation details are deferred to Section~\tref{app:boyan-setup}.

\begin{figure}[t]
  \centering
  \begin{minipage}[t]{0.43\linewidth}
    \vspace{0pt}
    \centering
    \includegraphics[height=4.2cm,trim=0 2 4 2,clip]{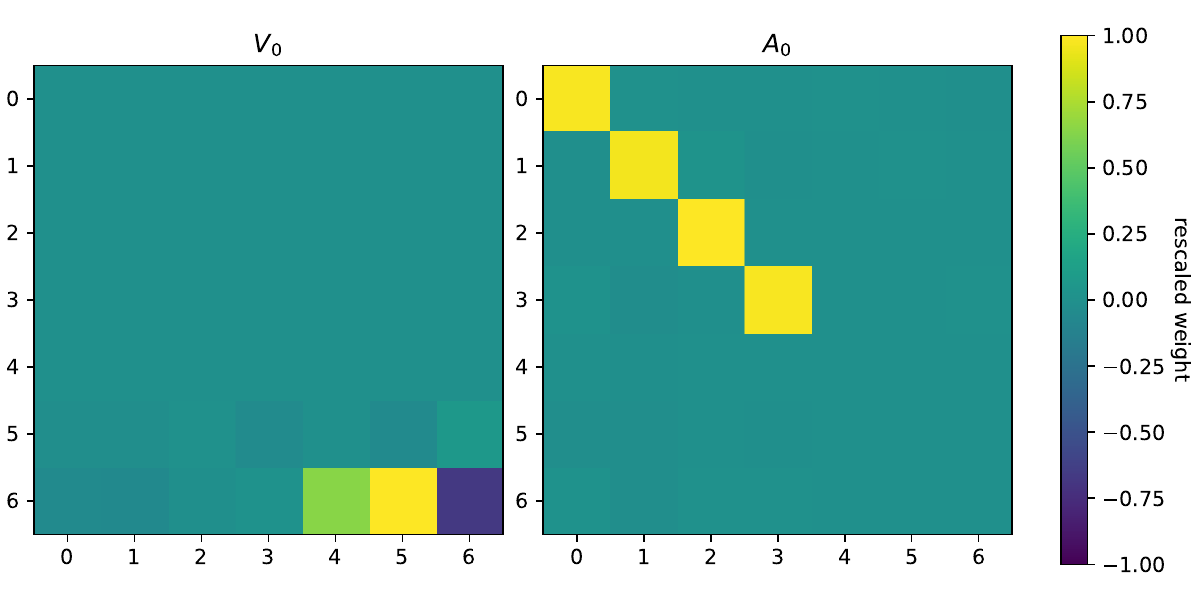}

    \vspace{2pt}
    \small (a) Learned $V_0$ and $A_0$.
  \end{minipage}
  \hfill
  \begin{minipage}[t]{0.53\linewidth}
    \vspace{0pt}
    \centering
    \includegraphics[height=4.2cm,trim=4 2 4 2,clip]{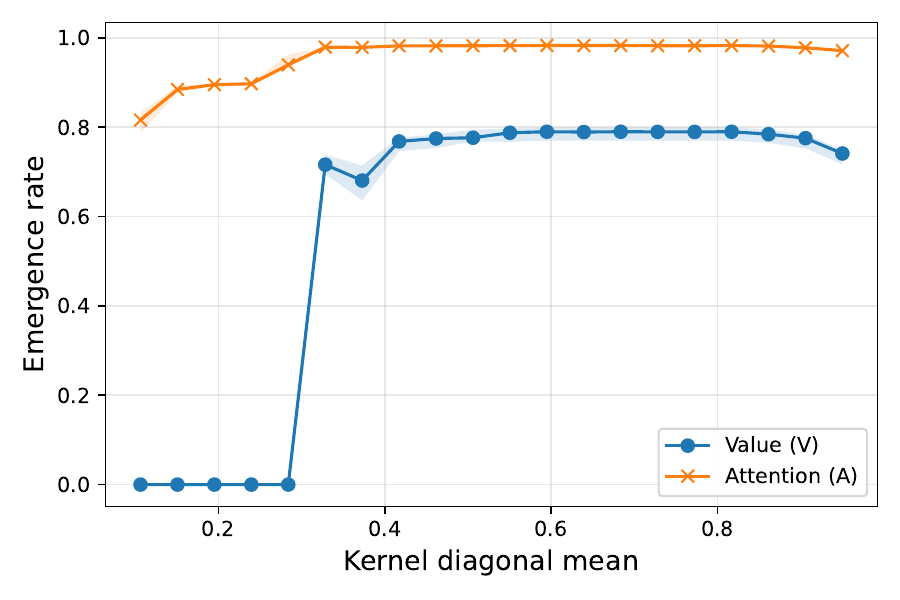}

    \vspace{2pt}
    \small (b) $V_{\mathrm{em}}$ and $A_{\mathrm{em}}$ versus kernel diagonal mean $d_t$.
  \end{minipage}
  \caption{
  Emergence of the learned TD block.
  (a) Learned $V_0$ and $A_0$ at the checkpoint maximizing
  $\min(V_{\mathrm{em}}, A_{\mathrm{em}})$, averaged over $5$ seeds.
  Each matrix is rescaled by its maximum absolute entry for visualization.
  (b) Evolution of $V_{\mathrm{em}}$ and $A_{\mathrm{em}}$ with the kernel diagonal mean $d_t$.
  We record $(d_t, V_{\mathrm{em}}, A_{\mathrm{em}})$ across $3000$ sampled MRPs and aggregate over $5$ seeds;
  shaded regions indicate variability across seeds.
  }
  \label{fig:emergence-main}
\end{figure}

\paragraph{TD emergence.}
We quantify emergence using two scores $V_{\mathrm{em}},A_{\mathrm{em}}\in[0,1]$ that measure the similarity between the learned first-layer parameters $(V_0,A_0)$ and the ideal TD block in~\eqref{eq:para-now}; see Section~\ref{sec:emergence-metrics} for definitions.
Figure~\ref{fig:emergence-main}(a) visualizes $(V_0,A_0)$ at the checkpoint maximizing \(\min(V_{\rm em},A_{\rm em})\) over $3000$ sampled MRPs.
The bottom-right entries of $V_0$ exhibit the $(+,+,-)$ sign pattern on $(r,\gamma v',v)$, closely matching~\eqref{eq:para-now}.
Meanwhile, the upper left $d\times d$ blocks of $A_0$ becomes sharply diagonal, providing a second signature of emergence.

% Figure~\ref{fig:emer} shows the learned first-layer parameters at the checkpoint with maximal $V/A$ emergence, averaged over $10$ seeds.
% In $V_0$ the only visible structure lies in the last row, and the corresponding $A_0$ exhibits a diagonally concentrated state block, both closely matching~\eqref{eq:para-now}.
% Together, these heatmaps indicate that a single ICKTD layer can spontaneously learn a kernel-weighted TD update along the chain without any hard-coded structure.

% To quantify this behavior we define two scalar emergence scores
% $V_{\mathrm{em}}$ and $A_{\mathrm{em}}$ in $[0,1]$, which equal $1$ for an ideal TD block in \eqref{eq:para-now}. 
% The V-emergence score $V_{\mathrm{em}}$ is computed from the rescaled coefficients on $(r,\gamma v',v)$ in the last row of $V_0$. Specifically, we require the sign pattern $(+,+,-)$, 
% and conditional on this we measure how close the normalized triple of coefficients is to the target TD direction $(1,1,-1)$ in both magnitude ratios and angle. 
% The A-emergence score $A_{\mathrm{em}}$ is computed from the $(A_0)_{1:d, 1:d}$, which is also the feature-feature
% submatrix. 

% We normalize each column and measure how much of the absolute
% mass lies on the diagonal, with a small amplitude correction to
% downweight degenerate near-zero matrices.
% Thus, $V_{\mathrm{em}}$ is large only when the value update behaves like a TD triple, and $A_{\mathrm{em}}$ is large only when the attention block is sharply diagonal and
% non-degenerate.
% The exact formulas are given in Section~\ref{sec:emergence-metrics}.

\paragraph{Kernel diagonality.}
Next, we relate emergence to Assumption~\ref{ass:diagonal-margin}.
At training step $t$, let $L^{(t)} \doteq Z_0^\top A_0^{(t)} Z_0 \in \R[n\times n]$.
Define $\wK^{(t)}\in\R[n\times n]$ as the attention matrix 
in~\eqref{eq:kermatrix} with $A = A_0^{(t)}$.
% Define $\wK^{(t)}\in\R[n\times n]$ as the column-normalized softmax kernel obtained by setting $G=L^{(t)}$ in~\eqref{eq:softmax}.
We summarize its kernel diagonality by the kernel diagonal mean
\begin{equation}
\label{eq:kernel-diag-mean}
\textstyle
    d_t \doteq \frac{1}{n}\tr(\wK^{(t)}).
\end{equation}
% where $d_t = 1/n$ corresponds to a flat kernel and $d_t = 1$ to an identity kernel.
Figure~\ref{fig:emergence-main}(b) plots $V_{\mathrm{em}}$ and $A_{\mathrm{em}}$ against $d_t$.
We find that $A_{\mathrm{em}}$ remains high across a wide range of $d_t$, whereas $V_{\mathrm{em}}$ starts to rise only after $d_t$ reaches an intermediate level.
This suggests that learning the TD update direction in the bottom-right entries of $V$ requires a sufficient kernel diagonality.

\section{Conclusion}
\label{sec:conclusion}
We provide the first theoretical analysis of in-context policy evaluation beyond the linear-attention simplification, under softmax attention. 
We show by construction that a Transformer layer can be configured to implement a weighted softmax TD update step in its forward pass. 
We prove an inference-time convergence guarantee as the number of layers increases under a suitable contraction condition, and show that the desired parameters are one of the global minimizers of a pretraining loss, explaining their emergence with increasingly diagonal attention in our experiments. 
Future work includes relaxing the contraction requirement and extending the theory and experiments to sampled trajectories and control settings.

\section*{Acknowledgments}
This work is supported in part by the US National Science Foundation under the awards III-2128019, SLES-2331904, and CAREER-2442098, the Commonwealth Cyber Initiative's Central Virginia Node under the award VV-1Q26-001, and a Cisco Faculty Research Award.

{\small
\bibliography{bibliography}
}

%%%%%%%%%%%%%%%%%%%%%%%%%%%%%%%%%%%%%%%%%%%%%%%%%%%%%%%%%%%%%%%%%%%%%%%%%%%%%%%
%%%%%%%%%%%%%%%%%%%%%%%%%%%%%%%%%%%%%%%%%%%%%%%%%%%%%%%%%%%%%%%%%%%%%%%%%%%%%%%
% APPENDIX
%%%%%%%%%%%%%%%%%%%%%%%%%%%%%%%%%%%%%%%%%%%%%%%%%%%%%%%%%%%%%%%%%%%%%%%%%%%%%%%
%%%%%%%%%%%%%%%%%%%%%%%%%%%%%%%%%%%%%%%%%%%%%%%%%%%%%%%%%%%%%%%%%%%%%%%%%%%%%%%
\newpage
\appendix
\onecolumn

\section{Auxiliary Lemmas}
\label{sec:aux_lemma}
\begin{lemma}[Theorem~12 of \citet{fan2021hoeffding}]
\label{lem:markov-hoeffding}
Let $\{S_k\}$ be a Markov chain on the finite state space $\fS$ 
with transition matrix $P_\pi$ and initial distribution $S_0 \sim p_0$.
Assume the Markov chain induced by $P_\pi$ is ergodic 
with stationary distribution $\mu_\pi$.
Let $P_\pi^* = \diag(\mu_\pi)^{-1} P_\pi^\top \diag(\mu_\pi)$ 
denote the time reversal of $P_\pi$, and define the 
additive reversiblization $R_\pi \doteq (P_\pi + P_\pi^*)/2$.
Let $\lambda_r$ denote the second-largest eigenvalue of $R_\pi$,
so that $1 - \lambda_r$ is the right spectral gap.
Then for any function $f \colon \fS \to [a, b]$ and any $\epsilon > 0$,
\begin{equation}
\label{eq:concentration}
  \Pr\!\bigg(\,\bigg|\frac{1}{n}\sum_{k=0}^{n-1} f(S_k) 
  - \mu_\pi(f)\bigg| > \epsilon\,\bigg)
  \le
  \frac{2}{\mu_{\min}}\,
  \exp\!\bigg(\!
  -\frac{1-\max\{\lambda_r, 0\}}{1+\max\{\lambda_r, 0\}}
  \cdot\frac{2n\epsilon^2}{(b-a)^2}\bigg),
\end{equation}
where $\mu_{\min} \doteq \min_{s \in \fS} \mu_\pi(s) > 0$.
For any finite irreducible aperiodic chain, 
$R_\pi$ is a finite reversible irreducible chain, 
so $\lambda_r < 1$ and the bound is nontrivial.
\end{lemma}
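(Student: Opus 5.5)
This lemma is quoted verbatim from \citet{fan2021hoeffding}, so in the paper we would simply invoke it. If we wanted a self-contained argument, we would use the standard Chernoff-plus-spectral route for Markov chains. We treat the upper tail; the lower tail follows by applying the same argument to $-f$, and the two tails are combined by a union bound, which accounts for the factor $2$.

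\textbf{Step 1 (Chernoff and a matrix form of the MGF).} Centre $f$ so that $\mu_\pi(f)=0$; its range still has length $b-a$. For $\theta>0$, Markov's inequality gives
\[
\Pr\Bigl(\textstyle\sum_{k=0}^{n-1} f(S_k)>n\epsilon\Bigr)\le e^{-\theta n\epsilon}\,\E\bigl[e^{\theta\sum_{k} f(S_k)}\bigr].
\]
With $D_\theta\doteq\diag(e^{\theta f})$, the MGF equals $p_0^\top (D_\theta P_\pi)^{n-1}D_\theta 1$.

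\textbf{Step 2 (reduction to stationarity).} Work in $L^2(\mu_\pi)$. By Cauchy--Schwarz, the dependence on the initial law is bounded by $\|p_0/\mu_\pi\|_{L^2(\mu_\pi)}\le \mu_{\min}^{-1/2}$. Loosening this to $1/\mu_{\min}$ gives the prefactor in \eqref{eq:concentration}. What remains is essentially $\|D_\theta^{1/2}P_\pi D_\theta^{1/2}\|_{\mu_\pi}^{\,n}$.

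\textbf{Step 3 (spectral bound, main obstacle).} We need to bound this operator norm by $\exp\bigl(\tfrac{1+\lambda}{1-\lambda}\,\tfrac{\theta^2(b-a)^2}{8}\bigr)$ with $\lambda=\max\{\lambda_r,0\}$, since $P_\pi$ is not self-adjoint.

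\begin{itemize}
\item First, pass to the additive reversiblization $R_\pi$ via $\langle g,P_\pi g\rangle_{\mu_\pi}=\langle g,R_\pi g\rangle_{\mu_\pi}$. This quadratic-form identity is what lets only the right spectral gap $1-\lambda_r$ enter.
\item Next, split $R_\pi$ as $\lambda I+(1-\lambda)\Pi_{\mu}$ plus a negative-semidefinite remainder in the Loewner order, where $\Pi_{\mu}=1\mu_\pi^\top$ is the projection onto constants.
\item Finally, compare with a two-state chain. Its MGF reduces to that of an i.i.d.\ Hoeffding-type bound with variance proxy inflated by $(1+\lambda)/(1-\lambda)$. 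Here Hoeffding's lemma supplies $\E_{\mu_\pi}[e^{\theta f}]\le e^{\theta^2(b-a)^2/8}$.
\end{itemize}

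This comparison is the delicate part, and we would follow Fan et al.'s argument for it directly.

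\textbf{Step 4 (optimize $\theta$).} Optimizing over $\theta$ yields the exponent $-\frac{1-\lambda}{1+\lambda}\frac{2n\epsilon^2}{(b-a)^2}$. Irreducibility and aperiodicity give $\lambda_r<1$, so the bound is nontrivial.
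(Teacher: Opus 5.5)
Invoking Theorem~12 of \citet{fan2021hoeffding} is exactly what the paper does. Its remark takes burn-in $n_0=0$ and integrability exponent $p=\infty$. The self-contained sketch you add, however, has a step that fails.

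In Step~3 you need the per-step operator norm $\|D_\theta^{1/2}P_\pi D_\theta^{1/2}\|_{\mu_\pi}$ to be at most $\exp\bigl(\frac{1+\lambda}{1-\lambda}\frac{\theta^2(b-a)^2}{8}\bigr)$ with $\lambda=\max\{\lambda_r,0\}$, and you plan to get this from $\langle g,P_\pi g\rangle_{\mu_\pi}=\langle g,R_\pi g\rangle_{\mu_\pi}$. That identity only sees the symmetric part. The norm of a non-self-adjoint operator is not determined by its quadratic form, so for non-reversible chains the claimed bound is false.

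Here is a counterexample. Take three states with $\mu_\pi$ uniform, $P_\pi=(1-\eta)\sigma+\eta\,1\mu_\pi^\top$ with $\sigma$ the cyclic shift $1\to2\to3\to1$ and $\eta\in(0,1)$, and the centred $f=\ind\{\cdot=1\}-\tfrac13$, so $b-a=1$.
\begin{itemize}
\item The chain is ergodic, and $R_\pi$ equals $-\tfrac{1-\eta}{2}I$ on mean-zero functions. Hence $\lambda=0$ and your target is $e^{\theta^2/8}$.
\item $D_\theta^{1/2}\sigma D_\theta^{1/2}$ is a weighted permutation matrix whose largest weight is $e^{\theta/6}$. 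Hence $\|D_\theta^{1/2}P_\pi D_\theta^{1/2}\|_{\mu_\pi}\ge(1-\eta)e^{\theta/6}-\eta\,\mu_\pi(e^{\theta f})\ge(1-\eta)e^{\theta/6}-\eta e^{\theta^2/8}$.
\item This lower bound is first order in $\theta$ and already exceeds $e^{\theta^2/8}$ at $\eta=0.01$, $\theta=1/2$. Its $n$-th power destroys the Chernoff bound, even though $S_n$ is well concentrated for this chain because sums of $f$ over a full cycle vanish.
\end{itemize}

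Your Loewner comparison $R_\pi\preceq\lambda I+(1-\lambda)1\mu_\pi^\top$ together with the two-point computation does close the argument when $P_\pi$ is reversible. In that case $D_\theta^{1/2}P_\pi D_\theta^{1/2}$ is self-adjoint and entrywise nonnegative, so its norm equals its top eigenvalue. A bilinear Cauchy--Schwarz version also works for non-reversible chains, but only with the absolute gap $\|P_\pi-1\mu_\pi^\top\|_{\mu_\pi}$. That quantity is governed by the multiplicative reversiblization $P_\pi^*P_\pi$, not by $R_\pi$, and it equals $1-\eta$ in the example above. The paper needs the lemma for general ergodic chains, including Boyan-type chains and the pair chain in Lemma~\ref{lem:bias}, which are non-reversible. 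There the dependence on $R_\pi$ has to come from the cited theorem, not from your per-step route.

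Step~2 also misattributes the prefactor. With $p=\infty$, the change of measure is the one-line positivity bound $\E_{p_0}[e^{\theta S}]\le\|p_0/\mu_\pi\|_\infty\,\E_{\mu_\pi}[e^{\theta S}]$, where $S$ is the centred sum. This reduces everything to the stationary MGF at no cost in the exponent, and with the two tails it gives exactly $2/\mu_{\min}$. Your $L^2(\mu_\pi)$ Cauchy--Schwarz can be read in two ways, and neither works:
\begin{itemize}
\item Applied to the path expectation, it gives $\|p_0/\mu_\pi\|_{L^2(\mu_\pi)}\,\E_{\mu_\pi}[e^{2\theta S}]^{1/2}$. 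This doubles the variance proxy and yields only the exponent $\frac{1-\lambda}{1+\lambda}\frac{n\epsilon^2}{(b-a)^2}$. Loosening $\mu_{\min}^{-1/2}$ to $\mu_{\min}^{-1}$ does not recover the missing factor $2$.
\item Applied at the operator level, it forces you into the per-step operator norm that fails above.
\end{itemize}
Replace Step~2 by the $\ell_\infty$ change of measure. For non-reversible chains, take the stationary bound in terms of $\lambda_r$ directly from \citet{fan2021hoeffding}.
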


\begin{remark}
This is a special case of Theorem~12 in \citet{fan2021hoeffding}
with burn-in length $n_0 = 0$ and integrability exponent $p = \infty$, 
so that the prefactor becomes
$C(\nu, 0, \infty) = \lVert p_0 / \mu_\pi \rVert_\infty 
\le 1/\mu_{\min}$.
\end{remark}

\begin{lemma}[Proposition~A.2 of \citet{ortega2000iterative}]
\label{lem:mv-ineq}
Let $f: \R[n] \to \R[m]$ be continuously differentiable.
Then for any $w, w' \in \R[n]$,
\begin{equation}
\|f(w) - f(w')\|_\infty
\;\le\;
\sup_{t \in [0,1]} \|Df(w + t(w'-w))\|_\infty
\;\|w - w'\|_\infty,
\end{equation}
where $Df(z) \in \R[m \times n]$ denotes the Jacobian of $f$ at $z$.
\end{lemma}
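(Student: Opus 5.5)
The proof reduces the vector-valued statement to scalar components and applies the fundamental theorem of calculus along the segment joining $w$ and $w'$. Set $h \doteq w' - w$ and $z(t) \doteq w + t h$ for $t\in[0,1]$; this is exactly the segment parametrized in the statement. For each coordinate $i\in\{1,\dots,m\}$, define the scalar function $\phi_i(t) \doteq f_i(z(t))$. Since $f$ is continuously differentiable, the chain rule gives that $\phi_i$ is $C^1$ on $[0,1]$ with $\phi_i'(t) = \sum_{j=1}^n Df(z(t))[i,j]\, h_j$. The fundamental theorem of calculus then yields
\begin{equation}
f_i(w') - f_i(w) = \phi_i(1)-\phi_i(0) = \int_0^1 \sum_{j=1}^n Df(z(t))[i,j]\, h_j \, dt .
\end{equation}
Using an integral rather than the scalar mean value theorem is deliberate: the mean value point would differ across coordinates $i$, but the integral form needs no common point.

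Next I bound the integrand pointwise. By H\"older's inequality,
\begin{equation}
\Bigl|\sum_{j=1}^n Df(z)[i,j] h_j\Bigr| \le \Bigl(\sum_{j=1}^n |Df(z)[i,j]|\Bigr)\|h\|_\infty \le \|Df(z)\|_\infty \|h\|_\infty .
\end{equation}
The second inequality holds because the operator norm induced by $\|\cdot\|_\infty$ is the maximum absolute row sum. Integrating over $t\in[0,1]$ gives $|f_i(w')-f_i(w)| \le \sup_{t\in[0,1]}\|Df(z(t))\|_\infty\,\|h\|_\infty$.

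Taking the maximum over $i$ gives the claimed bound $\|f(w)-f(w')\|_\infty \le \sup_{t}\|Df(w+t(w'-w))\|_\infty\|w-w'\|_\infty$.

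The only points needing care are the following. The supremum is finite, since $t\mapsto \|Df(z(t))\|_\infty$ is continuous on the compact interval $[0,1]$; even if it were infinite, the inequality would hold trivially. The integral is well defined because the integrand is continuous. Beyond these, I expect no real obstacle. The one conceptual point is to identify $\|\cdot\|_\infty$ on matrices as the induced norm, i.e.\ the maximum row $\ell_1$-sum, which is the convention needed later when the lemma is applied to Jacobians in the contraction and gradient bounds.
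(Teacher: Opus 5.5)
Your proof is correct. The paper gives no argument of its own here. It imports the result from \citet{ortega2000iterative}, where the mean value inequality is stated for arbitrary norms. It is usually proved there by composing $f$ with a norming linear functional and applying the scalar mean value theorem, or by bounding the vector-valued integral $\int_0^1 Df(z(t))h\,dt$ in norm.

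Your coordinatewise argument is a self-contained specialization. It uses the fact that $\|\cdot\|_\infty$ splits across components, so no duality is needed. You also correctly read $\|Df(z)\|_\infty$ as the maximum absolute row sum. This is the paper's convention (see the proof of Lemma~\ref{lem:contraction}), and it is essential: with the entrywise maximum, the bound already fails for $f(w)=\sum_j w_j$.

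One correction to your motivation for using the integral. You bound each coordinate separately, and the right-hand side already carries $\sup_{t\in[0,1]}$. So the scalar mean value theorem applied to each $\phi_i$, with its own point $t_i$, works just as well. That version needs only differentiability of $f$, not continuity of $Df$. The lack of a common mean value point matters only for the equality form $f(w')-f(w)=Df(\xi)(w'-w)$, or for norms that do not decouple coordinatewise.

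Finally, the lemma is never actually invoked later in the paper, contrary to your closing remark. The contraction bound comes directly from linearity of $\widehat{\fT}_{\wK}$. Lemma~\ref{lem:softmax-col-l1} redoes your segment-integral argument by hand with an $\ell_\infty\to\ell_1$ estimate.
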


\section{Proof of Theorem~\ref{thm:onestep}}
\label{proof:onestep}

\begin{proof}
We prove by induction that, for every layer $l\ge 0$, the prompt admits the form
\begin{equation}
\label{eq:z_general}
Z_l \doteq \begin{bmatrix}
    x_0 & x_1 & \cdots & x_{n-1} & x_n \\
    R_1 & R_2 & \cdots & R_n & 0 \\
    \gamma v_l(S_1) & \gamma v_l(S_2) & \cdots & \gamma v_l(S_n) & 0 \\
    v_l(S_0) & v_l(S_1) & \cdots & v_l(S_{n-1}) & v_l(S_n)
\end{bmatrix}.
\end{equation}
This holds at $l=0$ by the definition of $Z_0$ in \eqref{eq:z_initial}.

Now fix any $l\ge 0$ and assume that $Z_l$ satisfies \eqref{eq:z_general}.
We derive the layer update induced by $(V_l,A_l)$ in \eqref{eq:para-now}.
Since $A_l=\diag(I_d,0)$, the score matrix depends only on the first $d$ rows of $Z_l$.
Under \eqref{eq:z_general}, the first $d$ entries of the $(j+1)$-th column of $Z_l$ equal $x_j$ for every $j=0,\dots,n$.
Hence
\begin{equation}
\label{eq:score-gram}
(Z_l^\top A_l Z_l)[i,j] = \langle x_{i-1}, x_{j-1} \rangle,
\quad i,j=1,\dots,n+1.
\end{equation}
Therefore the attention matrix $\widetilde K_l$ is layer-invariant (denoted as $\widetilde K$).
By softmax weights $K$ in \eqref{eq:softmax-weight-sync}, for every $j=0,\dots,n$ we have
\begin{align}
\wK[k,j+1] = 
    \begin{cases}
        K(S_{k-1},S_j), & k=1,\dots,n, \\
        0, & k=n+1
    \end{cases}
\end{align}
% for every $j=0,\dots,n$ and $k=1,\dots,n$,
% \[
% \widetilde K[k,j+1] = K(S_{k-1},S_j),
% \]
% and
% \[
% \widetilde K[n+1,j]=0,\qquad j=1,\dots,n+1.
% \]

Next, consider the current-value head.
For each context column $k=1,\dots,n$,
\begin{align}
\label{eq:current_value_head_output}
(V_l Z_l)[d+3,k]
&=
\begin{bmatrix}
0_{1\times d} & 1 & 1 & -1
\end{bmatrix}
\begin{bmatrix}
x_{k-1}\\
R_k\\
\gamma v_l(S_k)\\
v_l(S_{k-1})
\end{bmatrix} \\
&= R_k + \gamma v_l(S_k) - v_l(S_{k-1}) \\
&= \delta_k^{(l)}\quad \text{(By \eqref{eq:tderror})}.
\end{align}
For the last column,
\[
(V_l Z_l)[d+3,n+1]
=
\begin{bmatrix}
0_{1\times d} & 1 & 1 & -1
\end{bmatrix}
\begin{bmatrix}
x_n\\
0\\
0\\
v_l(S_n)
\end{bmatrix}
= -v_l(S_n).
\]
Define
\[
\widehat{\delta}^{(l)}
\doteq
[\delta_1^{(l)},\dots,\delta_n^{(l)},-v_l(S_n)],
\qquad
\boldsymbol{\delta}^{(l)}
\doteq
[\delta_1^{(l)},\dots,\delta_n^{(l)},0].
\]
Since the last row of $\widetilde K$ is zero, we have
\[
\widehat{\delta}^{(l)}\widetilde K
=
\boldsymbol{\delta}^{(l)}\widetilde K.
\]
Therefore,
\begin{equation}
\label{eq:proof-delta-current}
\Delta Z_l[d+3,:]
=
\big(V_l Z_l\,\softmax(Z_l^\top A_l Z_l)\big)[d+3,:]
=
\boldsymbol{\delta}^{(l)}\widetilde K.
\end{equation}

For the target-value head, \eqref{eq:tfupdate-prev} gives
\begin{equation}
\label{eq:proof-delta-target}
\Delta Z_l[d+2,:]
=
\gamma\,
\big(V_l Z_l\,\softmax(Z_l^\top A_l Z_l)\Pi\big)[d+3,:]
=
\gamma\,\boldsymbol{\delta}^{(l)}\widetilde K \Pi.
\end{equation}

We now read out the last column.
Using \eqref{eq:outputv} and \eqref{eq:proof-delta-current},
\begin{align*}
v_{l+1}(S_n)
&= Z_{l+1}[d+3,n+1] \\
&= Z_l[d+3,n+1] + \Delta Z_l[d+3,n+1] \\
&= v_l(S_n) + \sum_{k=1}^n \delta_k^{(l)} \widetilde K[k,n+1] \\
&= v_l(S_n) + \sum_{k=1}^n \delta_k^{(l)} K(S_{k-1},S_n),
\end{align*}
which is exactly \eqref{eq:update}.

It remains to verify that $Z_{l+1}$ preserves the form in \eqref{eq:z_general}.
By construction, only rows $d+2$ and $d+3$ are modified, so the first $d+1$ rows remain unchanged.
Moreover, for every $j=0,\dots,n$, using \eqref{eq:proof-delta-current},
\[
Z_{l+1}[d+3,j+1]
=
v_l(S_j) + \sum_{k=1}^n \delta_k^{(l)} K(S_{k-1},S_j)
=
v_{l+1}(S_j).
\]
Thus the last row stores the synchronous weighted softmax TD updates for all trajectory states.

Finally, right multiplication by $\Pi$ shifts columns to the left.
Hence for every $j=1,\dots,n$, using \eqref{eq:proof-delta-target},
\begin{align*}
Z_{l+1}[d+2,j]
&=
Z_l[d+2,j] + \gamma(\boldsymbol{\delta}^{(l)}\widetilde K)_{j+1} \\
&=
\gamma v_l(S_j) + \gamma\big(v_{l+1}(S_j)-v_l(S_j)\big) \\
&=
\gamma v_{l+1}(S_j).
\end{align*}
Also, $Z_{l+1}[d+2,n+1]=0$ because the last column of $\Pi$ is zero.
Therefore $Z_{l+1}$ again satisfies \eqref{eq:z_general}, completing the induction.
\end{proof}

\section{Proofs in Section~\tref{sec:tsm-reparam}}
\label{sec:proofsec3}
\subsection{Proof of Lemma~\tref{lem:equiv}}
\label{proof:equiv}
\begin{proof}
From \eqref{eq:z_general}, $Z_l$ satisfies that $Z_l[d+2, :] = \gamma Z_l[d+3, :]\Pi$. Using this invariant, by \eqref{eq:zhalf}, we have
\[
Z_{l+\frac12}
=
Z_l + \Delta Z_l^{(d+3)},
\]
where $\Delta Z_l^{(d+3)}$ is defined in \eqref{eq:tfupdate}.
By \eqref{eq:tsm-forward}, the TSM step leaves the first $(d+1)$ rows unchanged and,
for every column $k$,
\[
(Z_{l+1})_{d+3,k}=(Z_{l+\frac12})_{d+3,k},
\qquad
(Z_{l+1})_{d+2,:}=\gamma (Z_{l+\frac12})_{d+3,:}\Pi,
\]
with $\Pi$ defined in \eqref{eq:pi}.
Therefore $Z_{l+1}=Z_{l+\frac12}+\Delta Z_l^{(d+2)}$, where $\Delta Z_l^{(d+2)}$
coincides with \eqref{eq:tfupdate-prev}.
Thus,
\[
Z_{l+1}
=
Z_l + \Delta Z_l^{(d+3)} + \Delta Z_l^{(d+2)},
\]
which is exactly \eqref{eq:dualhead-out}. Hence the dual-head update equals the
single-head+TSM update in \eqref{eq:tsm-forward}.
\end{proof}

% \subsection{Proof of Lemma~\tref{lem:complexity}}
% \label{proof:complexity}
% \begin{proof}
% Let $T\doteq n+1$ and $p\doteq d+3$, so $Z\in\R[p\times T]$. We count scalar
% operations up to constant factors.
% A single head computes
% \[
% \Delta Z = V Z M\, \tilde h(Z^\top A Z),
% \]
% where $M\in\R[T\times T]$ is diagonal and $\tilde h(\cdot)$ maps a score matrix in $\R[T\times T]$ to a dense weight matrix in $\R[T\times T]$.
% Under \eqref{eq:para-now}, $A=\diag(I_d,0)$ projects onto the first $d$ rows of $Z$. 
% Let $X\in\R[d\times T]$ denote this block. Then $Z^\top A Z=X^\top X$,
% which costs $O(dT^2)=O(pT^2)$ to form. Computing $\tilde h(X^\top X)$ costs
% $O(T^2)$ and is dominated by $O(pT^2)$.

% On the value side, forming $VZ$ costs $O(pT)$ since $V$ in \eqref{eq:para-now}
% has $O(1)$ nonzeros per column, and multiplying by the diagonal $M$ costs another
% $O(pT)$. Multiplying the resulting $p\times T$ matrix by the dense $T\times T$
% weight matrix costs $O(pT^2)$. Hence one head costs $O(pT^2)=O(n^2p)$.

% \paragraph{Dual-head.}
% The dual-head block evaluates two such attention passes, hence costs
% $2\cdot O(pT^2)=2\cdot O(n^2p)$.
% \paragraph{Single-head + TSM.}
% The reparameterized block evaluates attention once, costing $O(pT^2)$.
% It then applies the fixed operators in \eqref{eq:tsm-forward}. Right-multiplication
% by $\Pi$ is a one-step column shift and can be implemented in $O(pT)=O(np)$ time,
% and the row operators $U$ and $W$ touch $O(pT)$ entries as well. Thus the TSM
% overhead is $O(pT)=O(np)$, and the total cost is $O(pT^2)+O(pT)=O(n^2p)+O(np)$.
% \end{proof}

\section{Proofs of Section~\tref{sec:convergence}}
\label{sec:proofsec4}
\label{proof:convergence}
\subsection{Proof of Lemma~\tref{lem:concentration-Mn}}
\begin{lemma}
\label{lem:concentration-Mn}
Recall $\wM$ in \eqref{eq:Mn} and $M_\pi$ in \eqref{eq:Mpi}.
For any $\delta \in (0,1)$, there exist constants
$C_{\ref{lem:concentration-Mn},1}, C_{\ref{lem:concentration-Mn},2}> 0$ 
such that with probability at least $1 - \delta$,
\begin{equation}
\|\wM_n - M_\pi\|_\infty 
\leq C_{\ref{lem:concentration-Mn},1} \sqrt{\frac{\log(C_{\ref{lem:concentration-Mn},2}/\delta)}{n}}.
\end{equation}
\end{lemma}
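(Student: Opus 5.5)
Everything reduces to the empirical visitation frequencies $\hat\mu_n(s') \doteq \frac1n\sum_{k=1}^n \ind\{S_{k-1}=s'\}$. Substituting the definition of $K$ from \eqref{eq:softmax-weight-sync} into \eqref{eq:Mn} and grouping the sum over $k$ by the value of $S_{k-1}$, the denominator becomes $\sum_{m=1}^n \exp(\langle x(s),x(S_{m-1})\rangle) = n\sum_{u\in\fS}\hat\mu_n(u)\exp(\langle x(s),x(u)\rangle)$. This gives the exact identity
\[
\wM_n(s,s') = \frac{\hat\mu_n(s')\,\exp(\langle x(s),x(s')\rangle)}{\sum_{u\in\fS}\hat\mu_n(u)\exp(\langle x(s),x(u)\rangle)},
\]
which is $M_\pi$ in \eqref{eq:Mpi} with $\mu_\pi$ replaced by $\hat\mu_n$. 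So $\wM_n = F(\hat\mu_n)$ and $M_\pi = F(\mu_\pi)$ for the smooth map $F$ defined on the simplex by this formula. The plan is to (i) concentrate $\hat\mu_n$ around $\mu_\pi$ in $\ell_\infty$, and (ii) show $F$ is Lipschitz near $\mu_\pi$.

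\textbf{Step (i).} For each fixed $s'$ I apply Lemma~\ref{lem:markov-hoeffding} with $f=\ind\{\cdot=s'\}$, so $[a,b]=[0,1]$. Setting $\rho \doteq \frac{1-\max\{\lambda_r,0\}}{1+\max\{\lambda_r,0\}}>0$, each coordinate deviates by more than $\epsilon$ with probability at most $\frac{2}{\mu_{\min}}e^{-2\rho n\epsilon^2}$. A union bound over the $|\fS|$ states and the choice $\epsilon = \sqrt{\log(2|\fS|/(\mu_{\min}\delta))/(2\rho n)}$ give, with probability at least $1-\delta$,
\[
\|\hat\mu_n-\mu_\pi\|_\infty\le\epsilon.
\]

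\textbf{Step (ii).} Write $E_{s,u}\doteq\exp(\langle x(s),x(u)\rangle)$, and let $c\doteq\min_{s,u}E_{s,u}>0$ and $C\doteq\max_{s,u}E_{s,u}$, both finite since $\fS$ is finite. Every denominator is at least $c$, because $\hat\mu_n$ and $\mu_\pi$ are probability vectors; no closeness assumption is needed for this. Then $|N/D-N'/D'|\le |N-N'|/D + N'|D-D'|/(DD')$ with $N'/D'\le 1$ gives
\[
|\wM_n(s,s')-M_\pi(s,s')|\le \frac{C|\hat\mu_n(s')-\mu_\pi(s')| + \sum_u C|\hat\mu_n(u)-\mu_\pi(u)|}{c}.
\]
This is bounded by $\frac{C(1+|\fS|)}{c}\|\hat\mu_n-\mu_\pi\|_\infty$. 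Alternatively one could invoke Lemma~\ref{lem:mv-ineq}, but the direct bound is cleaner.

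\textbf{Conclusion.} The paper's $\|\cdot\|_\infty$ is presumably the induced max-row-sum norm, which is what the contraction argument later needs. Summing the previous bound over $s'$ multiplies it by $|\fS|$, which is still a constant. Combining with Step (i) yields $C_{\ref{lem:concentration-Mn},1}=\frac{C|\fS|(1+|\fS|)}{c\sqrt{2\rho}}$ and $C_{\ref{lem:concentration-Mn},2}=2|\fS|/\mu_{\min}$.

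The main point of care is making the Lipschitz bound global, rather than valid only in a neighbourhood of $\mu_\pi$, so that no extra condition on $n$ is needed. This is handled by the uniform lower bound $c$ on the softmax denominators. A secondary subtlety is that the chain in Lemma~\ref{lem:markov-hoeffding} is indexed $k=0,\dots,n-1$, which matches $S_{k-1}$ for $k=1,\dots,n$. The initial distribution is arbitrary, and its effect is already absorbed into the $1/\mu_{\min}$ prefactor.
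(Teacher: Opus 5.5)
Your proposal is correct and follows the paper's proof essentially step for step. You rewrite $\wM_n$ as $M_\pi$ with $\mu_\pi$ replaced by the empirical frequencies $\hat\mu_n$, concentrate $\hat\mu_n$ coordinatewise via Lemma~\ref{lem:markov-hoeffding} plus a union bound over $\fS$, bound each entry $|\wM_n(s,s')-M_\pi(s,s')|$ by a constant times $\|\hat\mu_n-\mu_\pi\|_\infty$ using the uniform lower bound on the softmax denominators, and sum over $s'$.

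The one difference is in your favor. Your entrywise constant $C(1+|\fS|)/c$ keeps the $|\fS|$ factor coming from $|D-D'|\le C\sum_u|\hat\mu_n(u)-\mu_\pi(u)|$. The paper's stated constant $2\exp(4C_x^2)$ drops this factor, even though a factor of that order is needed when only $\|\hat\mu_n-\mu_\pi\|_\infty$ is controlled. Since $|\fS|$ is fixed, this only changes the value of $C_{\ref{lem:concentration-Mn},1}$.
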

\begin{proof}
Define the empirical state frequency
\begin{equation}
\label{eq:statefreq}
    \hat{\mu}_n(s) \doteq \frac{1}{n}\sum_{k=1}^n \ind\{S_{k-1}=s\}.
\end{equation}
Applying Lemma~\ref{lem:markov-hoeffding} with $f = \ind\{\cdot = s\} \in [0,1]$
and a union bound over $s \in \fS$ gives
\begin{equation}
\label{eq:freq-conc}
\Pr\!\Big(\|\hat{\mu}_n - \mu_\pi\|_\infty > \epsilon\Big)
\;\le\;
\frac{2|\fS|}{\mu_{\min}}
\exp\!\bigg(\!-\frac{1-\max\{\lambda_r, 0\}}{1+\max\{\lambda_r, 0\}}\cdot 2n\epsilon^2\bigg).
\end{equation}
Setting the right-hand side equal to $\delta$ and solving for $\epsilon$ yields that
with probability at least $1 - \delta$,
\begin{equation}
\label{eq:freq-rate}
\|\hat{\mu}_n - \mu_\pi\|_\infty
\le
\sqrt{\frac{1+\max\{\lambda_r, 0\}}{2(1-\max\{\lambda_r, 0\})}}
\sqrt{\frac{\log(2|\fS|/(\mu_{\min}\,\delta))}{n}}
\doteq
C_{\ref{lem:concentration-Mn},3}\sqrt{\frac{\log(C_{\ref{lem:concentration-Mn},4}/\delta)}{n}},
\end{equation}
where 
$C_{\ref{lem:concentration-Mn},3} \doteq \sqrt{\frac{1+\max\{\lambda_r,0\}}{2(1-\max\{\lambda_r,0\})}}$
and
$C_{\ref{lem:concentration-Mn},4} \doteq \frac{2|\fS|}{\mu_{\min}}$.

It remains to transfer \eqref{eq:freq-rate} to $\wM_n$.
By expanding \eqref{eq:Mn} with \eqref{eq:statefreq}, we obtain
\begin{equation}
\wM_n(s, s') 
= \frac{\hat{\mu}_n(s')\, \exp(\langle x(s), x(s') \rangle)}
{\sum_{u \in \fS} \hat{\mu}_n(u)\, \exp(\langle x(s), x(u) \rangle)}.
\end{equation}
Let $C_x \doteq \max_{s \in \fS} \|x(s)\|_2$.
A direct calculation then gives
\begin{align}
&\left|\wM_n(s,s') - M_\pi(s,s')\right| \notag\\
={}&
\exp(\langle x(s), x(s') \rangle)
\left|
\frac{\hat{\mu}_n(s')\sum_u \mu_\pi(u)\,\exp(\langle x(s),x(u)\rangle)
      - \mu_\pi(s')\sum_u \hat{\mu}_n(u)\,\exp(\langle x(s),x(u)\rangle)}
{\sum_u \hat{\mu}_n(u)\,\exp(\langle x(s),x(u)\rangle)
 \cdot \sum_u \mu_\pi(u)\,\exp(\langle x(s),x(u)\rangle)}
\right| \notag\\
\le{}&
C_{\ref{lem:concentration-Mn},5}\,\|\hat{\mu}_n - \mu_\pi\|_\infty,
\end{align}
where $C_{\ref{lem:concentration-Mn},5} \doteq 2\exp(4C_x^2)$.
Summing over $s' \in \fS$ and taking the maximum over $s \in \fS$ gives
\begin{equation}
\|\wM_n - M_\pi\|_\infty
\le
|\fS|\,C_{\ref{lem:concentration-Mn},5}\,
C_{\ref{lem:concentration-Mn},3}\sqrt{\frac{\log(C_{\ref{lem:concentration-Mn},4}/\delta)}{n}}.
\end{equation}
Setting
$C_{\ref{lem:concentration-Mn},1}
\doteq |\fS|\,C_{\ref{lem:concentration-Mn},3}\,C_{\ref{lem:concentration-Mn},5}$
and
$C_{\ref{lem:concentration-Mn},2}
\doteq C_{\ref{lem:concentration-Mn},4}$
completes the proof.
\end{proof}

\subsection{Proof of Lemma~\tref{lem:contraction}}
\label{proof:contraction}
\begin{lemma}
\label{lem:contraction}
Under Assumption~\ref{ass:diagonal-margin}, 
for any $\delta \in (0,1)$, 
there exist a constant $C_{\ref{lem:contraction}}> 0$ such that 
if $n \geq C_{\ref{lem:contraction}} \log(1/\delta)$, 
then with probability at least $1 - \delta/2$,
\begin{equation}
\|\widehat{\fT}_{\wK}(v) - \widehat{\fT}_{\wK}(u)\|_\infty 
\leq (1- C_{\ref{ass:diagonal-margin}}) \|v - u\|_\infty 
\quad \forall\, u, v \in \R[|\fS|].
\end{equation}
\end{lemma}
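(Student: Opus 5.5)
The plan is to use that $\widehat{\fT}_{\wK}$ is affine, so the contraction reduces to bounding an $\ell_\infty$ operator norm. By \eqref{eq:operator-compact}, $\widehat{\fT}_{\wK}(v)-\widehat{\fT}_{\wK}(u) = (I-\wM_n+\gamma\wP_n)(v-u)$. It therefore suffices to show that, on a high-probability event, the maximum absolute row sum of $G_n \doteq I-\wM_n+\gamma\wP_n$ is at most $1-C_{\ref{ass:diagonal-margin}}$.

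The first step records the structure of $\wM_n$ and $\wP_n$. By \eqref{eq:softmax-weight-sync}, for each fixed $s$ the weights $K(S_{k-1},s)$, $k=1,\dots,n$, are nonnegative and sum to one. Hence every row of $\wM_n$ and of $\wP_n$ in \eqref{eq:Mn} is a probability vector; in particular $0\le \wM_n(s,s)\le 1$ and $0\le\wP_n(s,s)\le 1$. Fix a row $s$. The diagonal entry satisfies $1-\wM_n(s,s)+\gamma\wP_n(s,s)\ge 0$, so its absolute value is itself. For $s'\neq s$ we use the triangle inequality $|-\wM_n(s,s')+\gamma\wP_n(s,s')|\le \wM_n(s,s')+\gamma\wP_n(s,s')$. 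Summing these and using the row-sum identities gives
\begin{equation}
\sum_{s'}|G_n(s,s')| \le 1-\wM_n(s,s)+\gamma\wP_n(s,s) + \bigl(1-\wM_n(s,s)\bigr) + \gamma\bigl(1-\wP_n(s,s)\bigr) = 2+\gamma-2\wM_n(s,s).
\end{equation}
The $\wP_n(s,s)$ terms cancel. Only the diagonal of $\wM_n$ matters, which is exactly what Assumption~\ref{ass:diagonal-margin} controls.

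The second step transfers the diagonal margin from $M_\pi$ to $\wM_n$. Apply Lemma~\ref{lem:concentration-Mn} with confidence level $\delta/2$. With probability at least $1-\delta/2$,
\begin{equation}
|\wM_n(s,s)-M_\pi(s,s)|\le\|\wM_n-M_\pi\|_\infty\le \epsilon_n \doteq C_{\ref{lem:concentration-Mn},1}\sqrt{\log(2C_{\ref{lem:concentration-Mn},2}/\delta)/n}
\end{equation}
for all $s$. On this event, $\wM_n(s,s)\ge \frac{1+\gamma}{2}+C_{\ref{ass:diagonal-margin}}-\epsilon_n$, so every row sum is at most $1-2C_{\ref{ass:diagonal-margin}}+2\epsilon_n$. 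If $\epsilon_n\le C_{\ref{ass:diagonal-margin}}/2$, this is at most $1-C_{\ref{ass:diagonal-margin}}$, which is the claim.

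The last step converts $\epsilon_n\le C_{\ref{ass:diagonal-margin}}/2$ into a sample-size condition: it is equivalent to $n\ge 4C_{\ref{lem:concentration-Mn},1}^2\log(2C_{\ref{lem:concentration-Mn},2}/\delta)/C_{\ref{ass:diagonal-margin}}^2$. The main (though minor) obstacle is matching this to the form $n\ge C_{\ref{lem:contraction}}\log(1/\delta)$, because $\log(2C_{\ref{lem:concentration-Mn},2}/\delta)=\log(2C_{\ref{lem:concentration-Mn},2})+\log(1/\delta)$ carries an additive constant. Since the lemma fixes $\delta$ before choosing the constant, we may take $C_{\ref{lem:contraction}} \doteq 4C_{\ref{lem:concentration-Mn},1}^2\log(2C_{\ref{lem:concentration-Mn},2}/\delta)/\bigl(C_{\ref{ass:diagonal-margin}}^2\log(1/\delta)\bigr)$. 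For a $\delta$-uniform constant one would instead restrict to $\delta\le 1/2$ and absorb $\log(2C_{\ref{lem:concentration-Mn},2})$ into a multiple of $\log(1/\delta)\ge\log 2$. Beyond this bookkeeping, the argument is just the row-sum computation above.
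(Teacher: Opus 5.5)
Your proposal is correct and follows the paper's proof: it uses the same row-sum bound $\sum_{s'}|(I-\wM_n+\gamma\wP_n)(s,s')|\le 2(1-\wM_n(s,s))+\gamma$ and the same transfer of the diagonal margin from $M_\pi$ to $\wM_n$ via Lemma~\ref{lem:concentration-Mn} at level $\delta/2$. Your handling of the additive constant in $\log(2C_{\ref{lem:concentration-Mn},2}/\delta)$ is more careful than the paper's. The paper takes $C_{\ref{lem:contraction}} = (2C_{\ref{lem:concentration-Mn}}/C_{\ref{ass:diagonal-margin}})^2$, which does not by itself guarantee $\epsilon_n\le C_{\ref{ass:diagonal-margin}}/2$ under $n\ge C_{\ref{lem:contraction}}\log(1/\delta)$.
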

\begin{proof}
By \eqref{eq:operator-compact}, for any $u, v \in \R[|\fS|]$,
\begin{equation}
\wT_{\wK}(v) - \wT_{\wK}(u) 
= (I - \wM_n + \gamma \wP_n)(v - u).
\end{equation}
% It suffices to show $\|I - \wM_n + \gamma \wP_n\|_\infty \leq 1 - C_{\ref{lem:contraction}}$.
Since $\wM_n(s, s') \geq 0$ with $\sum_{s'} \wM_n(s, s') = 1$,
and $\wP_n(s, s') \geq 0$ with $\sum_{s'} \wP_n(s, s') = 1$,
the $\ell_1$-norm of row $s$ satisfies
\begin{align}
\sum_{s'} \big|(I - \wM_n + \gamma \wP_n)(s, s')\big|
&\leq \big(1 - \wM_n(s, s)\big) 
  + \sum_{s' \neq s} \wM_n(s, s') 
  + \gamma \sum_{s'} \wP_n(s, s') \notag\\
&= 2\big(1 - \wM_n(s, s)\big) + \gamma.
\end{align}
Choosing
$C_{\ref{lem:contraction}} 
\doteq (2C_{\ref{lem:concentration-Mn}}/C_{\ref{ass:diagonal-margin}})^2$, 
then by Lemma~\ref{lem:concentration-Mn}, 
with probability at least $1 - \delta/2$,
\begin{equation}
\|\wM_n - M_\pi\|_\infty 
\leq C_{\ref{lem:concentration-Mn},1}
\sqrt{\frac{\log(2C_{\ref{lem:concentration-Mn},2}/\delta)}{n}}
\leq \frac{C_{\ref{ass:diagonal-margin}}}{2},
\end{equation}
On this event, Assumption~\ref{ass:diagonal-margin} gives
\begin{equation}
\wM_n(s, s) 
\geq M_\pi(s, s) - \frac{C_{\ref{ass:diagonal-margin}}}{2}
\geq \frac{1 + \gamma}{2} + C_{\ref{ass:diagonal-margin}} - \frac{C_{\ref{ass:diagonal-margin}}}{2}
= \frac{1 + \gamma}{2} + \frac{C_{\ref{ass:diagonal-margin}}}{2}.
\end{equation}
Therefore,
\begin{equation}
\|I - \wM_n + \gamma \wP_n\|_\infty 
= \max_{s \in \fS} \sum_{s'} \big|(I - \wM_n + \gamma \wP_n)(s, s')\big|
\leq 2\bigg(1 - \frac{1+\gamma}{2} - \frac{C_{\ref{ass:diagonal-margin}}}{2}\bigg) + \gamma
= 1 - C_{\ref{ass:diagonal-margin}},
\end{equation}
which completes the proof.
\end{proof}

\subsection{Proof of Lemma~\tref{lem:bias}}
\label{proof:bias}
\begin{lemma}
\label{lem:bias}
For any $\delta \in (0,1)$, 
there exists a constant $C_{\ref{lem:bias},1}, C_{\ref{lem:bias},2} > 0$ such that with probability at least $1 - \delta/2$,
\begin{equation}
T_2
\leq C_{\ref{lem:bias},1} \sqrt{\frac{\log(C_{\ref{lem:bias},2}/\delta)}{n}}.
\end{equation}
\end{lemma}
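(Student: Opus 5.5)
Throughout I read $T_2$ as the bias term $\|\wT_{\wK}(v_\pi) - v_\pi\|_\infty$ named in the proof outline of Theorem~\ref{thm:finite-n}. The plan is to rewrite this bias exactly as a normalized sum of martingale differences along the trajectory, and then apply a martingale concentration inequality together with a union bound over $s\in\fS$.

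\textbf{Step 1 (algebraic reduction).} Start from \eqref{eq:operator-compact}:
\[
\wT_{\wK}(v_\pi)-v_\pi=\wM_n r+\gamma\wP_n v_\pi-\wM_n v_\pi .
\]
Since $v_\pi=r+\gamma P_\pi v_\pi$, we have $\wM_n v_\pi=\wM_n r+\gamma\wM_n P_\pi v_\pi$. Substituting, the reward terms cancel and
\[
\wT_{\wK}(v_\pi)-v_\pi=\gamma(\wP_n-\wM_nP_\pi)v_\pi .
\]

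\textbf{Step 2 (martingale form).} Write $w_s(u)\doteq\exp(\langle x(s),x(u)\rangle)$. By \eqref{eq:softmax-weight-sync}, $K(S_{k-1},s)=w_s(S_{k-1})/\sum_{m=1}^n w_s(S_{m-1})$. Expanding \eqref{eq:Mn} then gives, for each $s$,
\[
\big((\wP_n-\wM_nP_\pi)v_\pi\big)(s)=\frac{\frac1n\sum_{k=1}^n \xi_k^{(s)}}{\frac1n\sum_{m=1}^n w_s(S_{m-1})},
\qquad
\xi_k^{(s)}\doteq w_s(S_{k-1})\big(v_\pi(S_k)-(P_\pi v_\pi)(S_{k-1})\big).
\]
Let $\mathcal{F}_{k-1}=\sigma(S_0,\dots,S_{k-1})$. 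By the Markov property, $\E[\xi_k^{(s)}\mid\mathcal{F}_{k-1}]=0$, so $\{\xi_k^{(s)}\}$ is a martingale difference sequence. With $C_x\doteq\max_s\|x(s)\|_2$, Cauchy--Schwarz gives $e^{-C_x^2}\le w_s(u)\le e^{C_x^2}$. Hence $|\xi_k^{(s)}|\le 2e^{C_x^2}\|v_\pi\|_\infty$, and the denominator is at least $e^{-C_x^2}$ deterministically.

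\textbf{Step 3 (concentration and union bound).} Apply Azuma--Hoeffding to each of the $|\fS|$ martingales at confidence level $\delta/(2|\fS|)$. With probability at least $1-\delta/2$, simultaneously for all $s$,
\[
\Big|\tfrac1n\textstyle\sum_k\xi_k^{(s)}\Big|\le 2e^{C_x^2}\|v_\pi\|_\infty\sqrt{\tfrac{2\log(4|\fS|/\delta)}{n}} .
\]
Dividing by the denominator bound and multiplying by $\gamma$ yields the claim with
\[
C_{\ref{lem:bias},1}=2\sqrt2\,\gamma e^{2C_x^2}\|v_\pi\|_\infty,\qquad C_{\ref{lem:bias},2}=4|\fS|,
\]
where $\|v_\pi\|_\infty\le\|r\|_\infty/(1-\gamma)$.

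\textbf{Main obstacle.} The step needing care is Step 2. The softmax normalization couples all transitions, so the bias is not directly a sum of independent or martingale terms. The fix is to notice that the normalizer is common to numerator and denominator and is deterministically bounded below, so no concentration of $\hat\mu_n$ is required. This avoids invoking Lemma~\ref{lem:markov-hoeffding}, and in particular the initial distribution and the spectral gap play no role here.
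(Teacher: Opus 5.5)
Your proof is correct, and it takes a different and more elementary route than the paper's. Both arguments share Step~1 and the deterministic lower bound $\frac1n\sum_m w_s(S_{m-1})\ge e^{-C_x^2}$ on the softmax normalizer. So neither needs concentration of $\hat\mu_n$ here, and your ``main obstacle'' remark applies equally to the paper's proof; the real difference is how the numerator is controlled.

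The paper first applies the triangle inequality after grouping transitions by predecessor state $s'$. This costs a factor $|\fS|$ (its constant is $C_{\ref{lem:bias},3}=|\fS|e^{2C_x^2}$). It then treats each sum $\sum_k \ind\{S_{k-1}=s'\}[v_\pi(S_k)-(P_\pi v_\pi)(s')]$ as an additive functional of the pair chain $(S_{k-1},S_k)$ with zero stationary mean, and invokes the Markov-chain Hoeffding bound of Lemma~\ref{lem:markov-hoeffding}. Its argument therefore relies on ergodicity of the pair chain, and its constants depend on the pair chain's $\widetilde\lambda_r$ and on $\widetilde\mu_{\min}$.

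You instead keep the weights inside the sum. Because $w_s(S_{k-1})$ is $\mathcal F_{k-1}$-measurable, the summands are already martingale differences, so Azuma--Hoeffding suffices. This uses only the Markov property, with no ergodicity, spectral gap, or dependence on the initial distribution. It also avoids the $|\fS|$ loss and gives $C_{\ref{lem:bias},1}=2\sqrt2\,\gamma e^{2C_x^2}\|v_\pi\|_\infty$ and $C_{\ref{lem:bias},2}=4|\fS|$. These constants are uniform over tasks whenever rewards and features are uniformly bounded. That matters when the bound is reused uniformly over $\Delta$ in Theorem~\ref{thm:emergence}, where the paper's constants would implicitly require uniform control of $\widetilde\lambda_r$ and $\widetilde\mu_{\min}$. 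In fact, the paper's grouped summands are themselves martingale differences, so even on its route the pair-chain machinery is unnecessary.

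What the paper's approach buys is mainly uniformity of tools. It reuses the same mixing-based inequality that is genuinely needed in Lemma~\ref{lem:concentration-Mn}, where $\hat\mu_n-\mu_\pi$ is not a martingale.
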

\begin{proof}
For each $s \in \fS$, expanding $\wP_n$ and $\wM_n$ in \eqref{eq:Mn} gives
\begin{align}
\label{eq:T2-grouped}
\abs{[(\wP_n - \wM_n P_\pi)\,v_\pi](s)}
&= \abs{\sum_{k=1}^n K(S_{k-1}, s)
   \bigl[v_\pi(S_k) - (P_\pi v_\pi)(S_{k-1})\bigr]} \notag\\
% &= \frac{
%    \abs{\sum_{s' \in \fS} \exp(\langle x(s), x(s')\rangle)
%    \sum_{k=1}^n \ind\{S_{k-1}=s'\}
%    \bigl[v_\pi(S_k) - (P_\pi v_\pi)(s')\bigr]}
% }{
%    \sum_{m=1}^n \exp(\langle x(s), x(S_{m-1})\rangle)
% } \notag\\
&\leq \frac{
   \sum_{s' \in \fS} \exp(\langle x(s), x(s')\rangle)
   \abs{\sum_{k=1}^n \ind\{S_{k-1}=s'\}
   \bigl[v_\pi(S_k) - (P_\pi v_\pi)(s')\bigr]}
}{
   \sum_{m=1}^n \exp(\langle x(s), x(S_{m-1})\rangle)
} \notag\\
&\leq \frac{C_{\ref{lem:bias},3}}{n}\;
\max_{s' \in \fS}\;
\abs{\sum_{k=1}^n \ind\{S_{k-1}=s'\}
\bigl[v_\pi(S_k) - (P_\pi v_\pi)(s')\bigr]},
\end{align}
where $C_{\ref{lem:bias},3}\doteq|\fS|\exp(2C_x^2)$.
For each $s' \in \fS$, define
$g_{s'}\colon \fS \times \fS \to \R$ by
\begin{equation}
g_{s'}(a, b)
\doteq \ind\{a = s'\}\bigl[v_\pi(b) - (P_\pi v_\pi)(s')\bigr],
\end{equation}
so that $g_{s'} \in [-2\|v_\pi\|_\infty,\; 2\|v_\pi\|_\infty]$
and the inner sum in \eqref{eq:T2-grouped} equals
$\sum_{k=1}^n g_{s'}(S_{k-1}, S_k)$.
The pair process $(S_{k-1}, S_k)_{k \geq 1}$ is a Markov chain
on the support $E \doteq \{(a,b) \in \fS \times \fS : P_\pi(a,b) > 0\}$.
Since $P_\pi$ is ergodic on $\fS$, the pair chain is ergodic on $E$
with stationary distribution
$\widetilde\pi(a,b) = \mu_\pi(a)\,P_\pi(a,b)$, under which
\begin{align}
\widetilde \pi(g_{s'})
&= \sum_{a,b} \mu_\pi(a)\,P_\pi(a,b)\,
   \ind\{a = s'\}\bigl[v_\pi(b) - (P_\pi v_\pi)(s')\bigr] \notag\\
&= \mu_\pi(s')\bigl[(P_\pi v_\pi)(s') - (P_\pi v_\pi)(s')\bigr]
= 0.
\end{align}
Let $\widetilde\lambda_r$ denote the right spectral gap parameter 
of the pair chain as defined in Lemma~\ref{lem:markov-hoeffding} and
$\widetilde\mu_{\min} \doteq \min_{(a,b) \in E}\,\mu_\pi(a)\,P_\pi(a,b) > 0$.
Applying Lemma~\ref{lem:markov-hoeffding}
to the pair chain with function $g_{s'}$
and a union bound over $s' \in \fS$,
with probability at least $1 - \delta/2$,
\begin{equation}
\label{eq:pair-conc}
\max_{s' \in \fS}\;
\abs{\frac{1}{n}\sum_{k=1}^n g_{s'}(S_{k-1}, S_k)}
\le 2\|v_\pi\|_\infty
\sqrt{\frac{2(1+\max\{\widetilde\lambda_r, 0\})}{1-\max\{\widetilde\lambda_r, 0\}}}
\sqrt{\frac{\log(4|\fS|/(\widetilde\mu_{\min}\,\delta))}{n}}.
\end{equation}
Substituting \eqref{eq:pair-conc} into \eqref{eq:T2-grouped}
and taking the max over $s \in \fS$ gives
\begin{align}
T_2
&= \gamma\,\max_{s \in \fS}\;
   \abs{[(\wP_n - \wM_n P_\pi)\,v_\pi](s)}\\
&\leq \frac{\gamma C_{\ref{lem:bias},3}}{n}\;
\max_{s' \in \fS}\;
\abs{\sum_{k=1}^n g_{s'}(S_{k-1}, S_k)}\\
&\le C_{\ref{lem:bias},1}
   \sqrt{\frac{\log(C_{\ref{lem:bias},2}/\delta)}{n}},
\end{align}
where
\begin{equation}
\label{eq:Cbias}
C_{\ref{lem:bias},1}
\doteq 2\gamma \|v_\pi\|_\infty
\sqrt{\frac{2(1+\max\{\widetilde\lambda_r, 0\})}{1-\max\{\widetilde\lambda_r, 0\}}}
C_{\ref{lem:bias},3},
\quad
C_{\ref{lem:bias},2}
\doteq \frac{4|\fS|}{\widetilde\mu_{\min}}.
\end{equation}
This completes the proof.
\end{proof}

\subsection{Proof of Theorem~\tref{thm:finite-n}}
\label{proof:finite-n}
\begin{proof}
We decompose the error at each layer as
\begin{align}
\|v_{l+1} - v_\pi\|_\infty 
&= \|\widehat{\fT}_{\wK}(v_l) - v_\pi\|_\infty \notag\\
&\leq \underbrace{\|\widehat{\fT}_{\wK}(v_l) - \widehat{\fT}_{\wK}(v_\pi)\|_\infty}_{T_1}
+ \underbrace{\|\widehat{\fT}_{\wK}(v_\pi) - v_\pi\|_\infty}_{T_2} \notag\\
&\leq \underbrace{\|(I - \wM_n + \gamma \wP_n)(v_l - v_\pi)\|_\infty}_{T_1}
+ \underbrace{\|\gamma(\wP_n - \wM_n P_\pi)\, v_\pi\|_\infty}_{T_2}, \label{eq:error-decomp}
\end{align}
where the second inequality uses \eqref{eq:operator-compact} for $T_1$ 
and the Bellman equation $r + \gamma P_\pi v_\pi = v_\pi$ for $T_2$.
% By \eqref{eq:error-decomp}, the error at each layer decomposes as
% \begin{align}
% \|v_{l+1} - v_\pi\|_\infty 
% &= \|\wT_{\wK}(v_l) - v_\pi\|_\infty \notag\\
% &\leq \underbrace{\|\wT_{\wK}(v_l) - \wT_{\wK}(v_\pi)\|_\infty}_{T_1}
%      + \underbrace{\|\wT_{\wK}(v_\pi) - v_\pi\|_\infty}_{T_2}.
% \end{align}
% By Lemma~\ref{lem:contraction},
with probability at least $1 - \delta/2$,
if $n \geq C_{\ref{lem:contraction}}\log(1/\delta)$,
then for all $v_l$,
\begin{equation}
\label{eq:T1-bound}
T_1 \leq (1 - C_{\ref{ass:diagonal-margin}})\|v_l - v_\pi\|_\infty.
\end{equation}
By Lemma~\ref{lem:bias},
with probability at least $1 - \delta/2$,
\begin{equation}
\label{eq:T2-bound}
T_2 \leq C_{\ref{lem:bias},1}
\sqrt{\frac{\log(C_{\ref{lem:bias},2}/\delta)}{n}}.
\end{equation}
A union bound ensures that \eqref{eq:T1-bound} and \eqref{eq:T2-bound}
hold simultaneously with probability at least $1 - \delta$.
On this event, iterating over $L$ layers gives
\begin{align}
\|v_L - v_\pi\|_\infty
&\leq (1 - C_{\ref{ass:diagonal-margin}})^L \|v_0 - v_\pi\|_\infty
+ \sum_{l=0}^{L-1} (1 - C_{\ref{ass:diagonal-margin}})^l \cdot C_{\ref{lem:bias},1}
\sqrt{\frac{\log(C_{\ref{lem:bias},2}/\delta)}{n}} \notag\\
&= (1 - C_{\ref{ass:diagonal-margin}})^L \|v_0 - v_\pi\|_\infty
+ \frac{1 - (1 - C_{\ref{ass:diagonal-margin}})^L}{C_{\ref{ass:diagonal-margin}}}
\cdot C_{\ref{lem:bias},1}
\sqrt{\frac{\log(C_{\ref{lem:bias},2}/\delta)}{n}} \notag\\
&\leq (1 - C_{\ref{ass:diagonal-margin}})^L \|v_0 - v_\pi\|_\infty
+ \frac{C_{\ref{lem:bias},1}}{C_{\ref{ass:diagonal-margin}}}
\sqrt{\frac{\log(C_{\ref{lem:bias},2}/\delta)}{n}}.
\end{align}
Choosing
$C_{\text{Thm}\ref{thm:finite-n},1} \doteq C_{\ref{lem:contraction}}$,
$C_{\text{Thm}\ref{thm:finite-n},2} \doteq 
C_{\ref{lem:bias},1}/C_{\ref{ass:diagonal-margin}}$,
and $C_{\text{Thm}\ref{thm:finite-n},3} \doteq C_{\ref{lem:bias},2}$
completes the proof.
\end{proof}

\section{Proofs of Section~\tref{sec:emergence}}
\label{sec:proofsec5}
\subsection{Proof of Lemma~\tref{lem:decay}}
\begin{lemma}
\label{lem:decay}
Under Assumption~\ref{ass:diagonal-margin}, for any $\delta\in(0,1)$,
there exist constants $C_{\tref{lem:decay},1},\;C_{\tref{lem:decay},2},\;C_{\tref{lem:decay},3}>0$
such that if $n \ge C_{\tref{lem:decay},1}\log(1/\delta)$,
then with probability at least $1-\delta$,
for any $(p,r)$ and query state $S_n$,
\begin{equation}
\label{eq:decay_td_cond}
\abs{\E_{S_{n+1}\sim p(\cdot\mid S_n)}\big[\delta_L(\theta_*)\big]}
\le (1+\gamma)\left[
(1-C_{\ref{ass:diagonal-margin}})^{L}\|v_\pi\|_\infty
\;+\;
C_{\tref{lem:decay},2}\sqrt{\frac{\log(C_{\tref{lem:decay},3}/\delta)}{n}}
\right].
\end{equation}
\end{lemma}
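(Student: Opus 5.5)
The plan is to reduce the conditional expected TD error at the query to value-estimation errors and then apply Theorem~\ref{thm:finite-n}. Write $v_L(\cdot;\theta_*)$ for the output of the looped Transformer at $\theta_*$. By Theorem~\ref{thm:onestep}, the output on the prompt built from $(S_0,\dots,S_n)$ is the $L$-th iterate of the weighted softmax TD recursion \eqref{eq:recur} evaluated at $S_n$. Likewise, the output on the shifted prompt is the $L$-th iterate of the analogous recursion built from the shifted trajectory $(S_1,\dots,S_{n+1})$, evaluated at $S_{n+1}$. Call these iterates $v_L$ and $v_L'$.

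Using the Bellman equation $v_\pi(S_n)=r(S_n)+\gamma\sum_{s'}p(s'\mid S_n)v_\pi(s')$ and $R_{n+1}=r(S_n)$, I would write
\[
\E_{S_{n+1}}[\delta_L(\theta_*)] = \gamma\,\E_{S_{n+1}}\bigl[v_L'(S_{n+1})-v_\pi(S_{n+1})\bigr] - \bigl(v_L(S_n)-v_\pi(S_n)\bigr).
\]
The triangle inequality then gives
\[
\bigl|\E_{S_{n+1}}[\delta_L(\theta_*)]\bigr| \le \gamma\,\E\|v_L'-v_\pi\|_\infty + \|v_L-v_\pi\|_\infty.
\]
This is where the factor $(1+\gamma)$ comes from, provided each error is bounded uniformly by the bracket in \eqref{eq:decay_td_cond}. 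Since $v_0\equiv 0$ (the memory rows of $Z_0$ are zero), $\|v_0-v_\pi\|_\infty=\|v_\pi\|_\infty$. Theorem~\ref{thm:finite-n} then yields the bracket for $v_L$, with $C_{\tref{lem:decay},2}=C_{\text{Thm}\ref{thm:finite-n},2}$.

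The main obstacle is the shifted trajectory. Its last state $S_{n+1}$ is random with respect to the conditional expectation, so the high-probability event must cover both windows simultaneously. I would handle this in two steps.
\begin{itemize}
\item First, apply Lemmas~\ref{lem:contraction} and~\ref{lem:bias} at confidence $\delta/4$ each to both the window $(S_0,\dots,S_n)$ and the window $(S_1,\dots,S_{n+1})$, and take a union bound. This adjusts the constants $C_{\tref{lem:decay},1}$ and $C_{\tref{lem:decay},3}$ by constant factors.
\item Second, the shifted window differs from the original only by dropping the transition $(S_0,S_1)$ and appending $(S_n,S_{n+1})$. If needed, its empirical matrices $\wM_n,\wP_n$ therefore differ from the original ones by $O(1/n)$ in $\|\cdot\|_\infty$, uniformly over the value of $S_{n+1}$, because the softmax weights are bounded by $\exp(2C_x^2)/n$ times a constant. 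This deterministic perturbation preserves the diagonal margin for $n$ large. The bias bound also transfers with an additional $O(1/n)\le O(\sqrt{\log(1/\delta)/n})$ term, so the bound holds for every realization of $S_{n+1}$ and hence for the conditional expectation.
\end{itemize}

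Uniformity over $(p,r)$ requires the constants in Lemmas~\ref{lem:concentration-Mn} and~\ref{lem:bias} to be uniform over the task distribution: spectral gaps, $\mu_{\min}$ and $\|v_\pi\|_\infty\le \|r\|_\infty/(1-\gamma)$. I would take this as part of the uniform-assumption hypothesis and define each constant as the corresponding supremum over tasks.
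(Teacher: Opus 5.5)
Your proof is correct, and its core matches the paper's: use $r=(I-\gamma P_\pi)v_\pi$ to bound the conditional expected TD error by $(1+\gamma)$ times a sup-norm value error, then apply Theorem~\ref{thm:finite-n} with $v_0=0$.

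The difference is in the bootstrap term. The paper treats $v_L(\cdot;\theta_*)$ as a single vector computed from the window $(S_0,\dots,S_n)$ and writes $\E_{S_{n+1}}[\gamma v_L(S_{n+1};\theta_*)]=\gamma(P_\pi v_L)(S_n)$. One application of Theorem~\ref{thm:finite-n} together with $\|e_q^\top(I-\gamma P_\pi)\|_1\le 1+\gamma$ then finishes the proof. The paper never addresses that, by \eqref{eq:emerdelta}, $v_L(S_{n+1};\theta_*)$ is the output on the shifted prompt, whose operator itself depends on $S_{n+1}$. Your second bullet supplies exactly this missing step. It gives a deterministic $O(1/n)$ perturbation of $\wM_n,\wP_n$ that is uniform over $S_{n+1}$ and holds on an event determined by $(S_0,\dots,S_n)$ alone. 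The cost is slightly worse constants. So your argument proves the statement as the paper actually defines it, whereas the paper's proof covers the simplified bootstrap.

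Two small remarks.
\begin{itemize}
\item Your first bullet does not suffice on its own. A high-probability event for the window $(S_1,\dots,S_{n+1})$ depends on $S_{n+1}$, so it cannot control an expectation over $S_{n+1}$ given $S_n$. The second bullet is the operative step and makes the first unnecessary.
\item To keep the rate exactly $(1-C_{\ref{ass:diagonal-margin}})^L$ for the shifted operator, tighten the threshold in Lemma~\ref{lem:contraction} from $C_{\ref{ass:diagonal-margin}}/2$ to $C_{\ref{ass:diagonal-margin}}/4$. Also take $n$ large enough that the $O(1/n)$ perturbation is at most $C_{\ref{ass:diagonal-margin}}/4$. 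Then both windows' diagonals still exceed $\frac{1+\gamma}{2}+\frac{C_{\ref{ass:diagonal-margin}}}{2}$.
\end{itemize}
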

\begin{proof}
By \eqref{eq:emerdelta} and the linearity of expectation we have
\begin{align}
\label{eq:decay_relation}
&\abs{\mathbb{E}_{S_{n+1}\sim p(\cdot|S_n)}\Big[r(S_n)+\gamma v_L(S_{n+1}; \theta_*) - v_L(S_n; \theta_*)\Big]}\notag \\
={}& \abs{\big(r + \gamma P_\pi v_L - v_L\big)(S_n)}\notag \\
={}& \abs{\big( (I - \gamma P_\pi)(v_\pi - v_L) \big)(S_n)}
\quad \text{(using } r = (I-\gamma P_\pi)v_\pi\text{)}\notag \\
\leq{}& \|e_q^\top(I-\gamma P_\pi)\|_1 \|v_L - v_\pi\|_\infty\notag \\
\leq{}& (1+\gamma)\|v_L - v_\pi\|_\infty.
\end{align}
By \eqref{eq:conv}, on the event of probability at least $1-\delta$
with $n \ge C_{\text{Thm}\tref{thm:finite-n},1}\log(1/\delta)$,
we have
\begin{align}
\abs{\E_{S_{n+1}\sim p(\cdot\mid S_n)}\big[\delta_L(\theta_*)\big]}
&\le
(1+\gamma)\left[
(1-C_{\ref{ass:diagonal-margin}})^{L}\|v_\pi\|_\infty
+ C_{\text{Thm}\tref{thm:finite-n},2}
\sqrt{\frac{\log(C_{\text{Thm}\tref{thm:finite-n},3}/\delta)}{n}}
\right],
\end{align}
where we use $v_0 = 0$.
Setting
$C_{\tref{lem:decay},1} \doteq C_{\text{Thm}\tref{thm:finite-n},1}$,
$C_{\tref{lem:decay},2} \doteq C_{\text{Thm}\tref{thm:finite-n},2}$,
and
$C_{\tref{lem:decay},3} \doteq C_{\text{Thm}\tref{thm:finite-n},3}$
completes the proof.
\end{proof}

% \begin{lemma}
% \label{lem:decay}
% Let $\theta \in \Theta^{\text{Looped}}$ be the shared parameters.
% Under Assumption~\ref{ass:diagonal-margin}, initialized with $v_0=0$, we have for any
% $(p,r)$ and any query state $S_n$ that
% \begin{equation}
% \label{eq:decay_td_cond}
% \abs{\E_{S_{n+1}\sim p(\cdot\mid S_n)}\big[\delta_L(\theta)\big]}
% \le (1+\gamma)\|v_\pi\|_\infty\, (1-C_{\text{\ref{ass:diagonal-margin}}})^{\,L}.
% \end{equation}
% \end{lemma}

% \begin{proof}
% By \eqref{eq:emerdelta} and the linearity of expectation we have
% \begin{align}
% \label{eq:decay_relation}
% &\abs{\mathbb{E}_{S_{n+1}\sim p(\cdot|S_n)}\Big[r(S_n)+\gamma v_L(S_{n+1}; \theta) - v_L(S_n; \theta)\Big]}\notag \\
% =& \abs{r(S_n) + \gamma \sum_{s' \in \mathcal{S}} p(s' \mid S_n) v_L(s'; \theta) - v_L(S_n; \theta)}\notag \\
% =& \abs{\big(r + \gamma P_\pi v_L - v_L\big)(S_n)}\notag \\
% =& \abs{\big( (I - \gamma P_\pi)v_\pi - (I - \gamma P_\pi)v_L \big)(S_n)} \quad \text{(using } r = (I-\gamma P_\pi)v_\pi\text{)}\notag \\
% =& \abs{\big( (I - \gamma P_\pi)(v_\pi - v_L) \big)(S_n)}\notag \\
% \leq& \|e_q^\top(I-\gamma P_\pi)\|_1 \|v_L - v_\pi\|_\infty\notag \\
% \leq& (1+\gamma)\|v_L - v_\pi\|_\infty\notag\\
% \leq& (1+\gamma)(1-C_{\text{\ref{ass:diagonal-margin}}})^L \|v_0 - v_\pi\|_\infty \quad\text{(By Assumption~\ref{ass:diagonal-margin})} \notag\\
% =& (1+\gamma)(1-C_{\text{\ref{ass:diagonal-margin}}})^L \|v_\pi\|_\infty.
% \end{align}
% This completes the proof.
% \end{proof}

\begin{lemma}
\label{lem:softmax-col-l1}
For any $G, G' \in \R[(n+1) \times (n+1)]$,
\begin{equation}
\label{eq:lip}
    \max_{j \in [n+1]} 
  \bigl\| \bigl[\softmax(G) - \softmax(G')\bigr][:,j] \bigr\|_1
  \le 2 \max_{i,j} |G[i,j] - G'[i,j]|.
\end{equation}
\end{lemma}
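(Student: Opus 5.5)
The plan is to reduce to a single column, since the softmax acts column-wise. Fix $j$ and write $g \doteq G[:,j]$, $g' \doteq G'[:,j]$, and let $\sigma(g) \in \R[n+1]$ denote the softmax of a vector. (If the mask is in force, the $-\infty$ entry sits in the same position in both $g$ and $g'$ and contributes zero to both outputs. So we may restrict to the finite coordinates and ignore it.) It then suffices to prove $\|\sigma(g)-\sigma(g')\|_1 \le 2\|g-g'\|_\infty$ for vectors, and afterwards take the maximum over $j$.

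For the vector bound we use the path $g_t \doteq g' + t(g-g')$, $t\in[0,1]$. Since $\sigma$ is smooth, the fundamental theorem of calculus gives
\[
\sigma(g)-\sigma(g') = \int_0^1 D\sigma(g_t)\,(g-g')\,dt ,
\]
and hence
\[
\|\sigma(g)-\sigma(g')\|_1 \le \sup_t \|D\sigma(g_t)\|_{\infty\to 1}\,\|g-g'\|_\infty .
\]
Note that Lemma~\ref{lem:mv-ineq} is stated only for the $\ell_\infty$ norm, so we apply the integral form directly rather than invoking that lemma.

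The Jacobian of softmax at a point with output $p=\sigma(g_t)$ is $D\sigma = \diag(p) - pp^\top$. For any $h$ with $\|h\|_\infty\le 1$,
\[
[D\sigma\, h]_i = p_i\bigl(h_i - \langle p,h\rangle\bigr).
\]
Both $h_i$ and $\langle p,h\rangle$ lie in $[-1,1]$, so $|h_i-\langle p,h\rangle|\le 2$. Since $\sum_i p_i = 1$, this gives
\[
\|D\sigma\,h\|_1 = \sum_i p_i\,|h_i-\langle p,h\rangle| \le 2 .
\]
Therefore $\|D\sigma(g_t)\|_{\infty\to1}\le 2$ uniformly in $t$, which yields the vector bound and, after maximizing over columns, \eqref{eq:lip}.

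The only step needing care is the $\infty\to1$ operator norm of the Jacobian; the rest is routine. The masked $-\infty$ entries also deserve a remark: the argument should be run on the finite coordinates only, which is legitimate because softmax restricted to those coordinates is again a softmax. A sharper constant of $1$ is in fact available by centering $h$, but the stated factor $2$ is all that is required.
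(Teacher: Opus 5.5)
Your proof is correct and follows the paper's argument essentially step for step: restrict each column to its $n$ unmasked coordinates, use the Jacobian $\diag(p)-pp^\top$ to get $\|D\sigma\,h\|_1=\sum_i p_i\,|h_i-\langle p,h\rangle|\le 2\|h\|_\infty$, integrate along the segment between the two columns, and then maximize over $j$. Your closing aside about the constant $1$ is true, but the reason is that $\sum_i p_i\,|h_i-\langle p,h\rangle|$ is the mean absolute deviation of a variable taking values in $[-\|h\|_\infty,\|h\|_\infty]$, which is at most $\|h\|_\infty$. Centering alone does not give it: combined with the same triangle-inequality bound, centering still yields the constant $2$.
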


\begin{proof}
Fix a column $j \in [n+1]$.
By \eqref{eq:kermatrix}, the masked softmax acts on the
first $n$ entries of column $j$ via the standard softmax map
$\sigma: \R[n] \to \R[n]$ defined by
$\sigma_i(w) = \frac{\exp(w_i)}{\sum_{k=1}^n \exp(w_k)}$,
while the $(n+1)$-th entry is fixed at zero.
 
The Jacobian $J(w) \in \R[n \times n]$ of $\sigma$ has entries
\begin{equation}
J_{ik}(w) = \frac{\partial \sigma_i(w)}{\partial w_k}
= \begin{cases}
\sigma_i(w)(1 - \sigma_i(w)), & k = i,\\
-\sigma_i(w)\,\sigma_k(w), & k \neq i.
\end{cases}
\end{equation}
For any direction $h \in \R[n]$, 
the directional derivative of $\sigma$ at $w$ along $h$ is
\[
  [J(w)\,h]_i 
  = \sigma_i(w)\,h_i - \sigma_i(w)\sum_{k=1}^n \sigma_k(w)\,h_k
  = \sigma_i(w)\Bigl(h_i - \sum_{k=1}^n \sigma_k(w)\,h_k\Bigr).
\]
Taking the $\ell_1$-norm and using $\sigma_i(w) \ge 0$, $\sum_i \sigma_i(w) = 1$,
\begin{align}
  \|J(w)\,h\|_1
  &= \sum_{i=1}^n \sigma_i(w)\,\Bigl|h_i - \sum_{k=1}^n \sigma_k(w)\,h_k\Bigr| \notag\\
  &\le \sum_{i=1}^n \sigma_i(w)\,\Bigl(|h_i| + \Bigl|\sum_{k=1}^n \sigma_k(w)\,h_k\Bigr|\Bigr) \notag\\
  &\le \|h\|_\infty + \|h\|_\infty\notag\\ 
  &= 2\,\|h\|_\infty, \label{eq:softmax-dir-l1}
\end{align}
where the second inequality is obtained since $\bigl|\sum_{k} \sigma_k(w)\,h_k\bigr| \le \|h\|_\infty$.
Hence $\sup_{\|h\|_\infty = 1} \|J(w)\,h\|_1 \le 2$ 
for all $w \in \R[n]$.
 
Let $w_j, w_j'$ denote the first $n$ entries 
of the $j$-th columns of $G$ and $G'$ respectively.
By the fundamental theorem of calculus,
\[
  \sigma(w_j) - \sigma(w_j')
  = \int_0^1 J\bigl(w_j' + t(w_j - w_j')\bigr)\,(w_j - w_j')\,dt.
\]
Taking the $\ell_1$-norm and applying \eqref{eq:softmax-dir-l1},
\[
  \|\sigma(w_j) - \sigma(w_j')\|_1
  \le \int_0^1 \bigl\|J\bigl(w_j' + t(w_j - w_j')\bigr)\,(w_j - w_j')\bigr\|_1\,dt
  \le 2\,\|w_j - w_j'\|_\infty.
\]
Since $\|w_j - w_j'\|_\infty \le \max_{i,j}|G[i,j] - G'[i,j]|$
and the $(n+1)$-th entry contributes zero to the $\ell_1$ difference,
taking the maximum over $j$ completes the proof.
\end{proof}

\subsection{Proof of Lemma~\tref{lemma:kernel_bound}}
\label{proof:kernel_bound}
\begin{lemma}
\label{lemma:kernel_bound}
Let $D_\theta \doteq d^2+3$ denote the number of free parameters
in \eqref{eq:parameterization_set}.
    There exists a constant $C_\tref{lemma:kernel_bound} > 0$ such that
    for any $\theta \in \Theta^{\mathrm{Looped}}$,
    \begin{equation}
        \max_{j \in [n+1]} 
        \bigl\| (\nabla_{\theta_m} \wK(\theta))[:,j] \bigr\|_1
        \le C_\tref{lemma:kernel_bound},
        \qquad \forall\, m = 1, \ldots, D_\theta.
    \end{equation}
\end{lemma}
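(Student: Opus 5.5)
The plan is to split by parameter type, then bound each column of the derivative of the masked softmax via the Jacobian computed in Lemma~\ref{lem:softmax-col-l1}.

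For $\theta\in\Theta^{\text{Looped}}$, the score matrix is $G(\theta)=Z^\top A Z = X^\top B X$, where $X=[x_0,\dots,x_n]$. This holds because $A$ acts only on the first $d$ rows, and those rows store the fixed features at every layer (see the induction in the proof of Theorem~\ref{thm:onestep}). So $\wK(\theta)=\softmax(X^\top B X)$ is layer-invariant and does not depend on $u$.

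\emph{Case 1: $\theta_m$ is an entry of $u$.} Then $\nabla_{\theta_m}\wK=0$, and the bound holds trivially.

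\emph{Case 2: $\theta_m=B[a,b]$.} Then
\[
\partial G[i,j]/\partial B[a,b]=x_{i-1}[a]\,x_{j-1}[b],
\]
so each column direction satisfies
\[
\|h_j\|_\infty=\max_i |x_{i-1}[a]\,x_{j-1}[b]|\le C_x^2,
\qquad C_x\doteq\max_{s\in\fS}\|x(s)\|_2 .
\]
The finiteness of $\fS$ makes $C_x$ a constant independent of $n$.

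By the chain rule, column $j$ of $\nabla_{\theta_m}\wK$ equals $J(w_j)h_j$ on its first $n$ entries and is zero in the masked entry. Here $J(w_j)$ is the softmax Jacobian at the $j$-th score column $w_j$. By \eqref{eq:softmax-dir-l1},
\[
\|J(w)h\|_1\le 2\|h\|_\infty \quad\text{for every } w.
\]
This gives
\[
\bigl\|(\nabla_{\theta_m}\wK)[:,j]\bigr\|_1\le 2C_x^2,
\]
uniformly in $j$, $n$, and $B$. We may therefore take $C_{\ref{lemma:kernel_bound}}\doteq 2C_x^2$.

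The only point requiring care is the uniformity in $\theta$. Because the bound \eqref{eq:softmax-dir-l1} holds for all $w$, and the derivative direction $h_j$ does not depend on $B$, no compactness argument over $\theta$ is needed. One must also check that the derivative passes through the mask. The masked row is identically zero, so its derivative is zero and contributes nothing to the $\ell_1$ norm, exactly as in Lemma~\ref{lem:softmax-col-l1}.
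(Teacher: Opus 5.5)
Your proof is correct and takes essentially the paper's route: differentiate through the score matrix $G=Z^\top A Z$, bound the entries of $\partial G/\partial\theta_m$, and apply the softmax Jacobian estimate $\|J(w)h\|_1\le 2\|h\|_\infty$ that underlies Lemma~\ref{lem:softmax-col-l1}. The differences are minor. You use the exact rank-one form $\partial G[i,j]/\partial B[a,b]=x_{i-1}[a]\,x_{j-1}[b]$, which gives the sharper constant $2C_x^2$; the paper instead applies a cruder H\"older bound over whole columns of $Z_0$ and obtains $2\bigl((d+3)\max(C_x,C_R)\bigr)^2$. You also state explicitly the trivial $u$-coordinates, which the paper handles only implicitly through $\nabla_{\theta_m}A=0$.
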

\begin{proof}
Recall $C_x = \max_{s \in \fS}\|x(s)\|_2$
and define $C_R \doteq \max_{s}|r(s)|$.
Constructing $Z_0$ as in \eqref{eq:z_initial},
each column of $Z_0$ has at most $d+3$ entries,
so for all $i$
\begin{equation}
    \sum_{k=1}^{d+3} |Z_0[k,i]| \le (d+3)\max(C_x, C_R) \doteq C_{\tref{lemma:kernel_bound},1}
\end{equation}
Define the intermediate matrix
\[
    G(\theta) \doteq Z_0^\top A(\theta) Z_0,
    \qquad
    \wK(\theta) = \softmax\big(G(\theta)\big).
\]
For any perturbation $dA$, expanding $dG = Z_0^\top (dA) Z_0$ entrywise gives
\begin{align}
    |dG[i,j]|
    &= \Bigl|\sum_{k,l} Z_0[k,i]\,dA[k,l]\,Z_0[l,j]\Bigr|\\
    &\le \Bigl(\sum_k |Z_0[k,i]|\Bigr)
       \Bigl(\sum_l |Z_0[l,j]|\Bigr)
       \max_{k,l}|dA[k,l]|\\
    &\le C_{\tref{lemma:kernel_bound},1}^2\, \max_{k,l}|dA[k,l]|.
\end{align}
By \eqref{eq:lip},
\begin{equation}
    \max_j \|d\wK[:,j]\|_1
    \le 2\,\max_{i,j}|dG[i,j]|
    \le 2C_{\tref{lemma:kernel_bound},1}^2\max_{k,l}|dA[k,l]|.
\end{equation}
Finally, since $B$ is a sub-block of $A(\theta)$ by \eqref{eq:parameterization_set},
$\max_{k,l}|(\nabla_{\theta_m} A)[k,l]| \le 1$ for all $m = 1, \ldots, D_\theta$.
Setting $C_\tref{lemma:kernel_bound} \doteq 2C_{\tref{lemma:kernel_bound},1}^2$
completes the proof.
\end{proof}

\subsection{Proof of Lemma~\tref{lemma:grad_bound}}
\label{proof:grad_bound}
\begin{lemma}
\label{lemma:grad_bound}
Under Assumption~\ref{ass:diagonal-margin}, for any $\delta\in(0,1)$, there exist constants
$C_{\tref{lemma:grad_bound},1},\;C_{\tref{lemma:grad_bound},2},\;C_{\tref{lemma:grad_bound},3},\;C_{\tref{lemma:grad_bound},4}>0$
such that if $n\ge C_{\tref{lemma:grad_bound},1}\log(1/\delta)$, then with probability at least $1-\delta$,
\begin{equation}
\label{eq:grad_bound}
    \|\nabla_\theta v_L(S_n;\theta_*)\|_\infty
\;\le\;
C_{\ref{lemma:grad_bound},2}
\;+\;
C_{\ref{lemma:grad_bound},3}
\sqrt{\frac{\log(C_{\ref{lemma:grad_bound},4}/\delta)}{n}}.
\end{equation}
\end{lemma}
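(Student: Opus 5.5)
The plan is to differentiate the looped recursion directly and show that the resulting linear recursion for the derivatives is driven by the same contraction as the values. For $\theta\in\Theta^{\text{Looped}}$ the attention matrix $\wK(\theta)$ depends only on $B$ and is layer-invariant. Following the proof of Theorem~\ref{thm:onestep}, the full vector $v_l(\theta)\in\R[n+1]$ of last-row entries evolves as
\[
v_{l+1} = v_l + \bigl(u_1 R + u_2\gamma v_l\Pi + u_3 v_l\bigr)\wK(\theta),
\]
restricted to the context columns. At $\theta_*$ this recursion is exactly $\wT_{\wK}$. Write $\theta_m$ for a generic coordinate (an entry of $u$ or of $B$) and $g_l\doteq\partial_{\theta_m} v_l$. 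Differentiating gives
\[
g_{l+1} = \bigl(I-\wM_n+\gamma\wP_n\bigr)g_l + b_l,
\]
read on the trajectory states. The forcing term $b_l$ takes one of two forms:
\begin{itemize}[leftmargin=*]
\item for a $u$-coordinate, $b_l$ is the corresponding row ($R$, $\gamma v_l\Pi$, or $v_l$) times $\wK$;
\item for a $B$-coordinate, $b_l=\boldsymbol\delta^{(l)}\,\partial_{\theta_m}\wK$.
\end{itemize}
Here I use that the $g_l$ term enters only through the same linear map as in \eqref{eq:operator-compact}. One caveat: the layers must be indexed by trajectory positions rather than states. Because of the shift structure I would work in $\ell_\infty$ over columns and check that the column operator $v\mapsto v+(\gamma v\Pi - v)\wK$ has the same row-sum bound $2(1-\wM_n(s,s))+\gamma$. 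This uses that $\wK[k,j]$ depends only on $(S_{k-1},S_j)$.

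Next I bound the forcing uniformly in $l$. On the event of Lemma~\ref{lem:contraction} and Theorem~\ref{thm:finite-n} (with $v_0=0$) we have
\[
\|v_l\|_\infty\le\|v_\pi\|_\infty+\|v_l-v_\pi\|_\infty\le 2\|v_\pi\|_\infty+C_{\text{Thm}\ref{thm:finite-n},2}\sqrt{\log(C_{\text{Thm}\ref{thm:finite-n},3}/\delta)/n}.
\]
Since each column of $\wK$ is a probability vector, the $u$-forcing is bounded by $\max(C_R,\|v_l\|_\infty)$. For $B$-coordinates, Lemma~\ref{lemma:kernel_bound} bounds the $\ell_1$ norm of each column of $\partial_{\theta_m}\wK$ by $C_{\ref{lemma:kernel_bound}}$. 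Hence
\[
\|b_l\|_\infty\le\|\boldsymbol\delta^{(l)}\|_\infty C_{\ref{lemma:kernel_bound}}\le (C_R+(1+\gamma)\sup_l\|v_l\|_\infty)\,C_{\ref{lemma:kernel_bound}}.
\]
In both cases $\sup_l\|b_l\|_\infty\le c_1+c_2\sqrt{\log(c_3/\delta)/n}$.

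To conclude, I unroll the derivative recursion from $g_0=0$. Since $Z_0$ does not depend on $\theta$, the contraction factor $1-C_{\ref{ass:diagonal-margin}}$ gives
\[
\|g_L\|_\infty\le\sum_{l<L}(1-C_{\ref{ass:diagonal-margin}})^{L-1-l}\sup_l\|b_l\|_\infty\le C_{\ref{ass:diagonal-margin}}^{-1}\sup_l\|b_l\|_\infty.
\]
This bound is uniform in $L$ and yields \eqref{eq:grad_bound}, with $C_{\ref{lemma:grad_bound},1}=C_{\text{Thm}\ref{thm:finite-n},1}$ and $C_{\ref{lemma:grad_bound},2}=c_1/C_{\ref{ass:diagonal-margin}}$ (and similarly for the other constants). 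Because the weights are tied, $\partial_{\theta_m}$ hits every layer, and this is exactly what the accumulated sum above accounts for.

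The main obstacle is the second step's premise: establishing the contraction for the position-indexed derivative recursion, including the shifted row $d+2$ and the query column. I would handle it by proving by induction that $g_l[j]$ depends only on $S_j$, so that the recursion collapses to the state-indexed matrix $I-\wM_n+\gamma\wP_n$, and then invoke Lemma~\ref{lem:contraction} directly.
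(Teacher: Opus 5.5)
Your proposal is correct and follows the paper's proof essentially step for step. The shared steps are: the derivative recursion with the layer-invariant Jacobian $I-\wM_n+\gamma\wP_n$; forcing terms bounded by column-stochasticity of $\wK$ for $u$ and by H\"older plus Lemma~\ref{lemma:kernel_bound} for $B$; a layer-uniform bound on $\|v_l\|_\infty$ from Theorem~\ref{thm:finite-n} with $v_0=0$; and the geometric sum giving the factor $1/C_{\ref{ass:diagonal-margin}}$. The paper avoids your position-versus-state caveat by defining the forward map directly on state functions. The position-space row-sum check you first suggest would not give $2(1-\wM_n(s,s))+\gamma$, because the per-position diagonal entry is only the weight a single context column puts on itself, not the aggregated $\wM_n(s,s)$. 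Your closing induction, showing that each derivative entry depends only on the column's state so the recursion reduces to $I-\wM_n+\gamma\wP_n$, is the right justification.
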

\begin{proof}
By \eqref{eq:update}, the induced attention matrix $\wK(\theta)$ 
in \eqref{eq:kermatrix} depends on $\theta$ through $B$. 
Moreover, for all $\theta \in \Theta^{\text{Looped}}$, 
$\wK$ is invariant to layer $l$.
We first derive the one-step forward map for a general 
$\theta \in \Theta^{\mathrm{Looped}}$.
Recall from~\eqref{eq:parameterization_set} that the free parameters 
are $u \in \R[1\times 3]$ and $B \in \R[d\times d]$.
Replacing $[1,1,-1]$ with $u$ 
in~\eqref{eq:current_value_head_output},
the layerwise update generalizes to
\begin{equation}\label{eq:general_forward_map}
  F_\theta(v)(s) 
  = v(s) + \sum_{k=1}^n 
    \bigl[u_1 R_k + u_2 \gamma v(S_k) + u_3 v(S_{k-1})\bigr]\,
    \wK(\theta)(S_{k-1}, s),
\end{equation}
where $\wK(\theta)$ is the attention matrix in~\eqref{eq:kermatrix}.
At $\theta_*$ with $u^* = [1,1,-1]$, 
$F_{\theta_*}(v) = \widehat{\fT}_{\wK}(v)$ 
as in~\eqref{eq:operator-compact}.

The Jacobian of $F_{\theta_*}$ with respect to $v$ is
\begin{equation}\label{eq:def_J}
  J \doteq \nabla_v F_{\theta_*}(v)
     = I - \wM_n + \gamma\widehat{P}_n,
\end{equation}
which is constant across layers.
By the chain rule,
\begin{equation}
\label{eq:grad_rec_constJ}
\nabla_\theta v_{l+1}
=
J\,\nabla_\theta v_l
+
g_l^{(B)} + g_l^{(u)},
\end{equation}
where $g_l^{(B)}$ and $g_l^{(u)}$ are the partial derivatives of $F_\theta(v_l)$ 
with respect to $B$ and $u$ at $\theta_*$, holding $v_l$ fixed.

By Lemma~\ref{lem:contraction}, choosing
$C_{\tref{lemma:grad_bound},1} \doteq C_\tref{lem:contraction}$,
then on the event of probability at least $1-\delta$,
\begin{equation}
\label{eq:J_contract}
\|J\|_\infty
=
\|I - \wM_n + \gamma \widehat{P}_n\|_\infty
\le 1 - C_{\ref{ass:diagonal-margin}} < 1.
\end{equation}
Unrolling~\eqref{eq:grad_rec_constJ} 
with $\nabla_\theta v_0=0$ gives
\begin{align}
\label{eq:grad_unroll_constJ}
\|\nabla_\theta v_L\|_\infty
= \norm{\sum_{k=0}^{L-1} J^{L-1-k} g_k}_\infty
\le
\sum_{k=0}^{L-1} (1 - C_{\ref{ass:diagonal-margin}})^{L-1-k}\|g_k\|_\infty.
\end{align}
It remains to bound $\|g_k\|_\infty = \max(\|g_k^{(B)}\|_\infty, \|g_k^{(u)}\|_\infty)$.

\paragraph{Bounding $g_k^{(B)}$.}
By~\eqref{eq:general_forward_map}, 
at $\theta_*$ the $B$-channel acts only through $\wK(\theta)$.
Applying H\"older's inequality and Lemma~\tref{lemma:kernel_bound},
\begin{equation}\label{eq:gk_B_bound}
  \|g_k^{(B)}\|_\infty
  \le C_{\ref{lemma:kernel_bound}} \max_{1\le i\le n} |\delta_i^{(v_k)}|
  \le C_{\ref{lemma:kernel_bound}} \bigl(C_R + (1+\gamma)\|v_k\|_\infty\bigr),
\end{equation}
where $\delta_i^{(v_k)} = R_i + \gamma v_k(S_i) - v_k(S_{i-1})$
is the TD error at transition $i$.

\paragraph{Bounding $g_k^{(u)}$.}
By~\eqref{eq:general_forward_map}, 
differentiating with respect to $u_1, u_2, u_3$ at $\theta_*$ gives
coefficients $R_i$, $\gamma v_k(S_i)$, and $v_k(S_{i-1})$ respectively.
Since $\sum_{i=1}^n \wK[i,s] = 1$ for each $s$,
\begin{equation}\label{eq:gk_u_bound}
  \|g_k^{(u)}\|_\infty 
  \le \max\bigl(C_R,\; \gamma\|v_k\|_\infty,\; \|v_k\|_\infty\bigr)
  \le C_R + (1+\gamma)\|v_k\|_\infty.
\end{equation}

\paragraph{Combining.}
By~\eqref{eq:gk_B_bound} and~\eqref{eq:gk_u_bound},
\begin{align}\label{eq:gk_bound_delta}
  \|g_k\|_\infty 
  &= \max\bigl(\|g_k^{(B)}\|_\infty,\, \|g_k^{(u)}\|_\infty\bigr)\\
  &\le \max(1, C_{\ref{lemma:kernel_bound}}) 
     \bigl(C_R + (1+\gamma)\|v_k\|_\infty\bigr)\notag\\
  &\leq C_{\ref{lemma:grad_bound},5}(\|v_k - v_\pi\|_\infty + \|v_\pi\|_\infty)\notag\\
  &\leq C_{\ref{lemma:grad_bound},6}+C_{\ref{lemma:grad_bound},7}
\sqrt{\frac{\log(C_{\text{Thm}\tref{thm:finite-n},3}/\delta)}{n}},
\end{align}
where the last step applies Theorem~\tref{thm:finite-n} with $\theta = \theta_*$
on the event of probability at least $1-\delta$ 
with $n \ge C_{\text{Thm}\tref{thm:finite-n},1}\log(1/\delta)$.
Crucially, the right-hand side is independent of the layer index $k$.
Substituting into~\eqref{eq:grad_unroll_constJ} gives
\begin{align}
  \|\nabla_\theta v_L\|_\infty
  &\le
  \frac{1}{C_{\ref{ass:diagonal-margin}}}
  \qty(C_{\ref{lemma:grad_bound},6}+C_{\ref{lemma:grad_bound},7}
\sqrt{\frac{\log(C_{\text{Thm}\tref{thm:finite-n},3}/\delta)}{n}}).
  \label{eq:grad_final}
\end{align}
Setting
$C_{\ref{lemma:grad_bound},1}
   \doteq C_{\text{Thm}\ref{thm:finite-n},1}$,
$C_{\ref{lemma:grad_bound},2}
   \doteq C_{\ref{lemma:grad_bound},6}
          / C_{\ref{ass:diagonal-margin}}$,
$C_{\ref{lemma:grad_bound},3}
   \doteq C_{\ref{lemma:grad_bound},7}
          / C_{\ref{ass:diagonal-margin}}$,
$C_{\ref{lemma:grad_bound},4}
   \doteq C_{\text{Thm}\tref{thm:finite-n},3}$
completes the proof.
\end{proof}

\subsection{Proof of Theorem~\tref{thm:emergence}}
\label{proof:emergence}

\begin{proof}
Let
\[
C_R \doteq \operatorname*{ess\,sup}_{(p,r)\sim\Delta}\|r\|_\infty,
\qquad
C_v \doteq \operatorname*{ess\,sup}_{(p,r)\sim\Delta}\|v_\pi\|_\infty
\leq \frac{C_R}{1-\gamma},
\qquad
D_\theta \doteq d^2+3.
\]

For $\eta\in(0,1/2]$, let $\mathcal E_\eta$ be the event on which Lemma~\tref{lem:decay} and Lemma~\tref{lemma:grad_bound} both hold with confidence parameter $\eta/2$. Define
\[
C_{\text{Thm}\tref{thm:emergence},4}
\doteq 2\max\qty{C_{\tref{lem:decay},1},\,C_{\tref{lemma:grad_bound},1}},
\qquad
C_{\text{Thm}\tref{thm:emergence},5}
\doteq 2\max\qty{e,\,C_{\tref{lem:decay},3},\,C_{\tref{lemma:grad_bound},4}}.
\]
By a union bound, $\Pr(\mathcal E_\eta)\geq 1-\eta$ whenever $n\geq C_{\text{Thm}\tref{thm:emergence},4}\log(1/\eta)$, and with this choice of $C_{\text{Thm}\tref{thm:emergence},5}$, the logarithmic terms in both lemmas are bounded by $\log(C_{\text{Thm}\tref{thm:emergence},5}/\eta)$.

Since $\delta_L(\theta_*)$ in~\eqref{eq:emerdelta} depends on $S_{n+1}$, define
\[
\bar\delta_L(\theta_*)
\doteq
\E_{S_{n+1}\sim p(\cdot|S_n)}
\qty[\delta_L(\theta_*)]
=
r(S_n)+\gamma(P_\pi v_L)(S_n)-v_L(S_n).
\]
By the tower property,
\[
\E_{(p,r),\tau_{n+1}}
\qty[\delta_L(\theta_*)\nabla_\theta v_L(S_n;\theta_*)]
=
\E_{(p,r),\tau_n}
\qty[\bar\delta_L(\theta_*)\nabla_\theta v_L(S_n;\theta_*)].
\]
Let $X_L\doteq \bar\delta_L(\theta_*)\nabla_\theta v_L(S_n;\theta_*)$. Decomposing the expectation over $\mathcal E_\eta$ and $\mathcal E_\eta^c$,
\begin{equation}
\label{eq:JL_event_split}
J_L(\theta_*)
=
\norm{\E[X_L]}_1
\leq
\E\qty[\norm{X_L}_1\mathbf 1_{\mathcal E_\eta}]
+
\E\qty[\norm{X_L}_1\mathbf 1_{\mathcal E_\eta^c}].
\end{equation}

On $\mathcal E_\eta$, Lemma~\tref{lem:decay} and Lemma~\tref{lemma:grad_bound} give, respectively,
\begin{align}
\abs{\bar\delta_L(\theta_*)}
&\leq
(1+\gamma)
\qty[
C_v(1-C_{\ref{ass:diagonal-margin}})^L
+
C_{\tref{lem:decay},2}
\sqrt{
\frac{\log(C_{\text{Thm}\tref{thm:emergence},5}/\eta)}{n}
}
]
\notag\\
&\leq
C_{\text{Thm}\tref{thm:emergence},6}
\qty[
(1-C_{\ref{ass:diagonal-margin}})^L
+
\sqrt{\frac{\log(C_{\text{Thm}\tref{thm:emergence},5}/\eta)}{n}}
],\\[4pt]
\norm{\nabla_\theta v_L(S_n;\theta_*)}_\infty
&\leq
C_{\tref{lemma:grad_bound},2}
+
C_{\tref{lemma:grad_bound},3}
\sqrt{\frac{\log(C_{\text{Thm}\tref{thm:emergence},5}/\eta)}{n}}
\notag\\
&\leq
C_{\text{Thm}\tref{thm:emergence},7}
\qty[
1+
\sqrt{\frac{\log(C_{\text{Thm}\tref{thm:emergence},5}/\eta)}{n}}
],
\end{align}
where
\[
C_{\text{Thm}\tref{thm:emergence},6}
\doteq (1+\gamma)\max\qty{C_v,\,C_{\tref{lem:decay},2}},
\qquad
C_{\text{Thm}\tref{thm:emergence},7}
\doteq \max\qty{C_{\tref{lemma:grad_bound},2},\,C_{\tref{lemma:grad_bound},3}}.
\]
on $\mathcal E_\eta$, by H\"older's inequality, with $C_{\text{Thm}\tref{thm:emergence},8}\doteq D_\theta\,C_{\text{Thm}\tref{thm:emergence},6}\,C_{\text{Thm}\tref{thm:emergence},7}$,
\begin{equation}
\label{eq:X_event_bound}
\norm{X_L}_1
\leq
C_{\text{Thm}\tref{thm:emergence},8}
\qty[
(1-C_{\ref{ass:diagonal-margin}})^L
+
\sqrt{\frac{\log(C_{\text{Thm}\tref{thm:emergence},5}/\eta)}{n}}
]
\qty[
1+
\sqrt{\frac{\log(C_{\text{Thm}\tref{thm:emergence},5}/\eta)}{n}}
].
\end{equation}

It remains to control $\norm{X_L}_1$ deterministically. Let $G_n\doteq I-M_n^c+\gamma\widehat P_n$. Since $M_n^c$ and $\widehat P_n$ are nonnegative row-stochastic, $\norm{G_n}_\infty\leq 2+\gamma$. Applying this to the recursion~\eqref{eq:operator-compact} with $v_0=0$,
\begin{align}
\norm{v_{l+1}}_\infty
&\leq (2+\gamma)\norm{v_l}_\infty+C_R
\quad\Longrightarrow\quad
\norm{v_L}_\infty\leq \frac{C_R\qty((2+\gamma)^L-1)}{1+\gamma},\\[4pt]
\abs{\bar\delta_L(\theta_*)}
&\leq C_R+(1+\gamma)\norm{v_L}_\infty
\leq C_R(2+\gamma)^L.
\label{eq:delta_all_bound}
\end{align}
For the gradient, from the proof of Lemma~\tref{lemma:grad_bound} there exists a constant $C_{\text{Thm}\tref{thm:emergence},9}>0$ such that 
\begin{equation}\norm{g_l}_\infty\leq C_{\text{Thm}\tref{thm:emergence},9}(C_R+(1+\gamma)\norm{v_l}_\infty)\leq C_{\text{Thm}\tref{thm:emergence},9}C_R(2+\gamma)^l.
\end{equation}
Unrolling $\nabla_\theta v_{l+1}=G_n\nabla_\theta v_l+g_l$ with $\nabla_\theta v_0=0$,
\begin{equation}
\label{eq:grad_all_bound}
\norm{\nabla_\theta v_L}_\infty
\leq
\sum_{l=0}^{L-1}(2+\gamma)^{L-1-l}\norm{g_l}_\infty
\leq
C_{\text{Thm}\tref{thm:emergence},9}C_R\,L(2+\gamma)^{L-1}.
\end{equation}
Combining~\eqref{eq:delta_all_bound} and~\eqref{eq:grad_all_bound}, and defining $C_{\text{Thm}\tref{thm:emergence},10}\doteq D_\theta\,C_{\text{Thm}\tref{thm:emergence},9}\,C_R^2$,
\begin{equation}
\label{eq:X_all_bound}
\norm{X_L}_1
\leq
C_{\text{Thm}\tref{thm:emergence},10}\,L(2+\gamma)^{2L}
\qquad\text{for every trajectory.}
\end{equation}

Substituting~\eqref{eq:X_event_bound} and~\eqref{eq:X_all_bound} into~\eqref{eq:JL_event_split} and setting $\eta=1/n$,
\begin{align}
J_L(\theta_*)
&\leq
C_{\text{Thm}\tref{thm:emergence},8}
\qty[
(1-C_{\ref{ass:diagonal-margin}})^L
+
\sqrt{\frac{\log(C_{\text{Thm}\tref{thm:emergence},5}n)}{n}}
]
\qty[
1+
\sqrt{\frac{\log(C_{\text{Thm}\tref{thm:emergence},5}n)}{n}}
]
\notag\\
&\qquad+
C_{\text{Thm}\tref{thm:emergence},10}
\frac{L(2+\gamma)^{2L}}{n}.
\label{eq:JL_eta_bound}
\end{align}
Define $C_{\text{Thm}\tref{thm:emergence},11}\doteq 1+\log C_{\text{Thm}\tref{thm:emergence},5}/\log 2$. Since $C_{\text{Thm}\tref{thm:emergence},5}\geq e$, for every $n\geq 2$, $\log(C_{\text{Thm}\tref{thm:emergence},5}n)\leq C_{\text{Thm}\tref{thm:emergence},11}\log n$. Choose $n_0$ so that for every $n\geq n_0$,
\[
n\geq C_{\text{Thm}\tref{thm:emergence},4}\log n
\quad\text{and}\quad
\sqrt{\frac{\log(C_{\text{Thm}\tref{thm:emergence},5}n)}{n}}\leq 1.
\]
Using $(a+b)(1+b)\leq a+3b$ for $a,b\in[0,1]$,
\[
\qty[
(1-C_{\ref{ass:diagonal-margin}})^L
+
\sqrt{\frac{\log(C_{\text{Thm}\tref{thm:emergence},5}n)}{n}}
]
\qty[
1+
\sqrt{\frac{\log(C_{\text{Thm}\tref{thm:emergence},5}n)}{n}}
]
\leq
(1-C_{\ref{ass:diagonal-margin}})^L
+
3\sqrt{C_{\text{Thm}\tref{thm:emergence},11}}\sqrt{\frac{\log n}{n}}.
\]
Therefore, with
\[
C_{\text{Thm}\tref{thm:emergence},2}
\doteq C_{\text{Thm}\tref{thm:emergence},8}
\max\qty(1,\,3\sqrt{C_{\text{Thm}\tref{thm:emergence},11}}),
\qquad
C_{\text{Thm}\tref{thm:emergence},3}
\doteq C_{\text{Thm}\tref{thm:emergence},10},
\]
we obtain~\eqref{eq:emer}: for all $n\geq n_0$,
\[
J_L(\theta_*)
\leq
C_{\text{Thm}\tref{thm:emergence},2}
\qty[
(1-C_{\ref{ass:diagonal-margin}})^L
+
\sqrt{\frac{\log n}{n}}
]
+
C_{\text{Thm}\tref{thm:emergence},3}
\frac{L(2+\gamma)^{2L}}{n}.
\]

Finally, for every fixed $L$, $\limsup_{n\to\infty}J_L(\theta_*)\leq C_{\text{Thm}\tref{thm:emergence},2}(1-C_{\ref{ass:diagonal-margin}})^L$, and taking $L\to\infty$ gives $\lim_{L\to\infty}\limsup_{n\to\infty}J_L(\theta_*)=0$.
\end{proof}

\section{Additional Experimental Details}
\label{sec:aux_experiment}
\subsection{Multi-task TD}
The pseudocode of multi-task TD is provided in Algorithm~1.
Notably, $\text{TF}_L$ is obtained by~\eqref{eq:output}.
The training update is semi-gradient TD, where
the bootstrap target $\text{TF}_L(Z_0';\theta)$ is held fixed,
so the single-sample update direction is $\delta_t \nabla_\theta \text{TF}_L(Z_0;\theta)$,
which is a stochastic estimate of the expected update field
in the NEU objective~\eqref{eq:NEU}.

To accelerate training in some settings, we introduce a scalar diagonal mixing coefficient $\rho_t\in[0,1]$ at the attention level.
Let $L_l \doteq Z_l^\top A_l Z_l \in \mathbb{R}^{n\times n}$ and fix a softmax temperature $\tau>0$.
At training step $t$ the kernel is
\begin{equation}
\label{eq:rho}
    \wK[i,j](\rho_t)
    \doteq
    (1-\rho_t)\cdot\frac{\exp(L_{l,ij}/\tau)}{\sum_{m=0}^{n-1}\exp(L_{l,mj}/\tau)}
    + \rho_t\cdot \ind\{i=j\}.
\end{equation}
Importantly, all results in Section~\tref{sec:experiment} are trained with $\rho_t\equiv 0$.
\begin{algorithm}[t]
    \caption{\label{alg:ictd_softmax} Multi-Task Temporal Difference Learning with softmax Softmax Transformer}
    \begin{algorithmic}[1]
        \STATE \textbf{Input:}
        context length $n$,
        MRP number of states $m$,
        MRP sample length $H$,
        number of training tasks $k$,
        learning rate $\eta$,
        discount factor $\gamma$,
        diagonal mixing $\rho_t$,
        softmax temperature $\tau$,
        transformer parameters $\theta \doteq \qty(V, A)$

        \FOR{$i \gets 1$ \textbf{to} $k$}
            \STATE Sample $(p_0, p, r, x)$ from $d_{\text{task}}$
            \STATE Sample $(S_0, R_1, S_1, R_2, \dots, S_{H}, R_{H+1}, S_{H+1})$ from the MRP $(p_0, p, r)$
            \FOR{$t = 0, \dots, H - n - 1$}
                \STATE // Build two consecutive prompt windows
                \STATE // Query uses the state after the last context transition: $x_q \doteq x_{t+n}$, $x_q' \doteq x_{t+n+1}$
                \STATE $Z_0 \gets \mqty[x_{t} & \cdots & x_{t+n-1} & x_{q} \\
                             R_{t+1} & \cdots & R_{t+n} & 0 \\
                             0 & \cdots & 0 & 0 \\
                             0 & \cdots & 0 & 0]$
                \STATE $Z_0' \gets \mqty[x_{t+1} & \cdots & x_{t+n} & x_{q}' \\
                             R_{t+2} & \cdots & R_{t+n+1} & 0 \\
                             0 & \cdots & 0 & 0 \\
                             0 & \cdots & 0 & 0]$

                \STATE // semi-gradient TD
                \STATE $\delta_t \gets R_{t+n+1} + \gamma\,\TF_L(Z_0';\theta) - \TF_L(Z_0;\theta)$
                \STATE $\theta \gets \theta + \eta \delta_t \nabla_\theta \TF_L(Z_0;\theta)$
            \ENDFOR
        \ENDFOR
    \end{algorithmic}
\end{algorithm}

\subsection{Minimal Sparse Parameterization}
\label{app:minimal-mask}

We use a minimal sparse parameterization that prevents the block from overwriting features or rewards,
while leaving the structured computation to be learned from data.
Notably, this sparsification is strictly lighter than the sparse parameter space used in Section~\tref{sec:emergence}.
The masks below are fixed throughout training, and all trainable entries are initialized by Xavier normal initialization with gain $0.1$.

\paragraph{Mask on $V$.}
Recall that $Z_l\in\R[(d+3)\times(n+1)]$ defined in \eqref{eq:z_general}.
We enforce a fixed sparsity pattern via an elementwise mask
$V_{\mathrm{mask}}\in\{0,1\}^{(d+3)\times(d+3)}$ and parameterize
\begin{equation}
\label{eq:v0}
    V_0 = \widetilde V \odot V_{\mathrm{mask}},
\qquad
\widetilde V\in\R[(d+3)\times(d+3)].
\end{equation}
The mask is fixed throughout training and only allows the last two rows to be nonzero:
\begin{equation}
\label{eq:vmask}
    V_{\mathrm{mask}} =
\begin{bmatrix}
0_{(d+1)\times(d+3)}\\
\ind_{2\times(d+3)}
\end{bmatrix}.
\end{equation}
As a result, the additive update $V_0 Z_l$ modifies only the two value slots of $Z_l$.

\paragraph{Mask on $A$.}
Let $A_0\in\R[(d+3)\times(d+3)]$ be the logits mixing matrix used to form the pre-softmax logits
$Z_l^\top A_0 Z_l$. We impose the block sparsity via an elementwise mask
$A_{\mathrm{mask}}\in\{0,1\}^{(d+3)\times(d+3)}$ and parameterize
\begin{equation}
\label{eq:a0}
    A_0 = \widetilde A \odot A_{\mathrm{mask}},
\qquad
\widetilde A\in\R[(d+3)\times(d+3)].
\end{equation}
We split rows and columns into the first $d$ feature channels and the last $3$ scalar channels, and set the
scalar-to-scalar block to zero by
\begin{equation}
\label{eq:amask}
    A_{\mathrm{mask}}=
\begin{bmatrix}
\ind_{d\times d} & \mathbf{0}_{d\times 3}\\
\mathbf{0}_{3\times d} & \mathbf{0}_{3\times 3}
\end{bmatrix}.
\end{equation}

% Equivalently, viewing $A_0$ as a $2\times 2$ block matrix,
% \[
% A_0=
% \begin{bmatrix}
% A^{\mathrm{ff}} & A^{\mathrm{fs}}\\
% A^{\mathrm{sf}} & 0_{3\times 3}
% \end{bmatrix},
% \]
% where $A^{\mathrm{ff}}\in\R[d\times d]$, $A^{\mathrm{fs}}\in\R[d\times 3]$,
% and $A^{\mathrm{sf}}\in\R[3\times d]$ are learnable.

% \paragraph{Mask on $A$.}
% Let $A\in\R[(d+3)\times(d+3)]$ be the logits mixing matrix used to form the pre-softmax logits
% $Z_l^\top A Z_l$.
% We view $A$ as a $2\times 2$ block matrix by splitting the rows and columns into the first $d$ feature channels
% and the last $3$ scalar channels. We set the scalar-to-scalar block to zero:
% \[
% A
% =
% \begin{bmatrix}
% A^{\mathrm{ff}} & A^{\mathrm{fs}}\\
% A^{\mathrm{sf}} & 0_{3\times 3}
% \end{bmatrix},
% \]
% where $A^{\mathrm{ff}}\in\R[d\times d]$, $A^{\mathrm{fs}}\in\R[d\times 3]$,
% and $A^{\mathrm{sf}}\in\R[3\times d]$ are trainable.

\subsection{Boyan's chain tasks and experiment setup}
\label{app:boyan-setup}

\begin{table}[t]
\centering
\caption{Hyperparameters for Boyan's chain experiments in Section~\tref{sec:experiment}.}
\label{tab:boyan-hparams}
\begin{tabular}{l c}
\hline
MRP (Boyan chain) states $m$ & $64$ \\
feature dimension $d$ & $4$ \\
discount factor $\gamma$ & $0.9$ \\
context length $n$ & $10$ \\
rollout segment length $H$ & $n+1=11$ \\
query feature & $x_q = x_{t+n}$\\
softmax temperature $\tau$ & $1.2$ \\
diagonal mixing $\rho$ & $0$ \\
\hline
optimizer & Adam \citep{kingma2014adam}\\
learning rate & $10^{-3}$ \\
weight decay & $0$ \\
batch size (contexts) & $64$ \\
mini-batches per epoch & $5$ \\
epochs & $3000$ \\
number of layers $L$ & $3$ \\
number of random seeds & $5$ \\
\hline
\end{tabular}
\end{table}

\begin{figure}[t]
    \centering
    \includegraphics[width=0.52\linewidth]{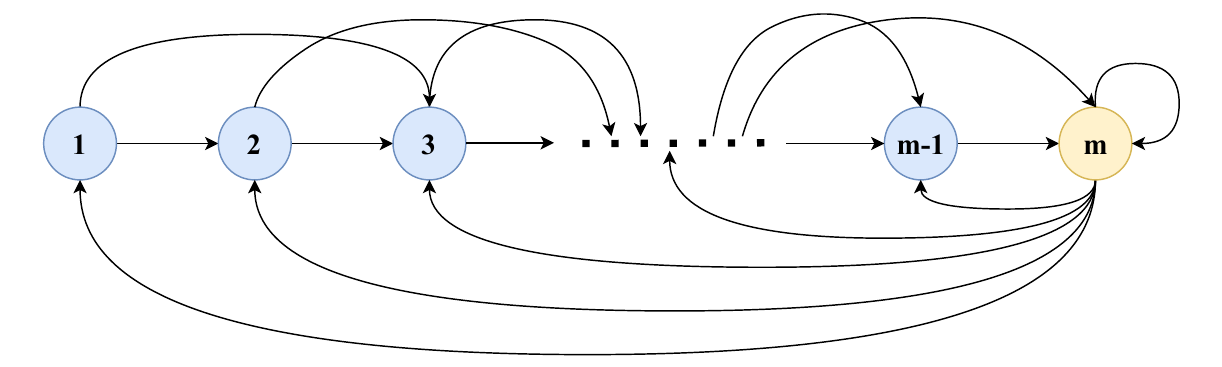}
    \caption{Boyan's chain topology with nonzero transitions. Adapted from \citet{wang2025ictd}.}
    \label{fig:boyan-topology}
\end{figure}

\paragraph{Task distribution.}
Following \citet{wang2025ictd}, we use a randomized variant of Boyan's chain~\citep{boyan1999least} as an evaluation-task distribution.
Each call to the generator samples an ergodic MRP together with a randomized state representation
$x:\mathcal S \to \R[d]$ and a ground-truth value function $v^*$ that is linear in $x(s)$.
The generation procedure is summarized in Algorithm~\ref{alg:representable}.
% The model is not given the task parameters that define $v^*$ (e.g., the sampled weight vector) and only receives contexts constructed from sampled trajectories.

\begin{algorithm}[t]
\caption{Boyan Chain MRP and Feature Generation (Adapted from Algorithm~3 of \citet{wang2025ictd})}
\begin{algorithmic}[1]
\label{alg:representable}
\STATE \textbf{Input:} state space size $m=\abs{\fS}$, feature dimension $d$, discount factor $\gamma$
\STATE $w_* \sim \mathrm{Uniform}\qty[(-1,1)^d]$\qquad // ground-truth weight
\FOR{$s \in \fS$}
  \STATE $x(s) \sim \mathrm{Uniform}\qty[(-1,1)^d]$\qquad // feature
  \STATE $v^*(s) \gets \langle w_*, x(s)\rangle$\qquad // ground-truth value function
\ENDFOR
\STATE $p_0 \sim \mathrm{Uniform}\qty[(0,1)^m]$ \qquad // initial distribution
\STATE $p_0 \gets p_0/\sum_s p_0(s)$
\STATE $p \gets 0_{m\times m}$ \qquad // transition function
\FOR{$i \gets 1$ to $m-2$}
    \STATE $\epsilon \sim \mathrm{Uniform}[(0,1)]$
    \STATE $p(i,i+1)\gets \epsilon$ 
    \STATE $p(i,i+2)\gets 1-\epsilon$
\ENDFOR
\STATE $p(m-1,m)\gets 1$
\STATE $z \sim \mathrm{Uniform}\big[(0,1)^m\big]$
\STATE $z \gets z/\sum_{s} z(s)$
\STATE $p(m,1:m)\gets z$
\STATE $r \gets (I_m-\gamma p)v^*$\qquad // reward function
\STATE Define rewards by Bellman consistency: $r \gets (I - \gamma P) v^*$.
\STATE \textbf{Output:} MRP $(p_0,p,r)$ and feature map $x$
\end{algorithmic}
\end{algorithm}

% \begin{algorithm}[t]
% \caption{Boyan Chain MRP and Feature Generation (Non-Presentable\footnote{Adapted from Algorithm~2 of \citet{wang2025ictd}})}
% \label{alg:boyan-chain}
% \begin{algorithmic}[1]
% \STATE \textbf{Input:} state space size $m=|\mathcal{S}|$, feature dimension $d$
% \STATE $r \sim \mathrm{Uniform}\big[(-1,1)^m\big]$ \qquad // reward function 
% \STATE $p_0 \sim \mathrm{Uniform}\qty[(0,1)^m]$ \qquad // initial distribution
% \STATE $p_0 \gets p_0/\sum_s p_0(s)$
% \STATE $p \gets 0_{m\times m}$ \qquad // transition function
% \FOR{$i \gets 1$ to $m-2$}
%     \STATE $\epsilon \sim \mathrm{Uniform}[(0,1)]$
%     \STATE $p(i,i+1)\gets \epsilon$ 
%     \STATE $p(i,i+2)\gets 1-\epsilon$
% \ENDFOR
% \STATE $p(m-1,m)\gets 1$
% \STATE $z \sim \mathrm{Uniform}\big[(0,1)^m\big]$
% \STATE $z \gets z/\sum_{s} z(s)$
% \STATE $p(m,1:m)\gets z$
% \FOR{each $s\in\mathcal{S}$}
%     \STATE $x(s)\sim \mathrm{Uniform}\big[(-1,1)^d\big]$ \qquad // feature map
% \ENDFOR
% \STATE \textbf{Output:} MRP $(p_0,p,r)$ and feature map $x$
% \end{algorithmic}
% \end{algorithm}

\paragraph{Training pipeline.}
Unless otherwise stated, we set the context length $n=10$, feature dimension $d=4$, and discount factor $\gamma=0.9$.
At the beginning of each epoch, we reset the MRP by sampling a fresh task (Algorithm~\ref{alg:representable}),
roll out a single on-policy trajectory, and form overlapping length-$n$ contexts by sliding a window along the trajectory.
We train a single-head softmax ICTD block with $L=3$ layers under the minimal sparse parameterization
(Appendix~\ref{app:minimal-mask}).
Unless otherwise stated, we train for $3000$ epochs with $5$ mini-batches per epoch and a mini-batch size of $64$ contexts.
Each experiment is repeated with $5$ random seeds and we aggregate results across seeds.

\paragraph{Hyperparameters.}
Table~\ref{tab:boyan-hparams} summarizes the default hyperparameters used in Boyan's chain experiments in Section~\tref{sec:experiment}.

\paragraph{Plotting details for Figures~\tref{fig:emergence-main}.}
For each random seed, we generate $N_{\mathrm{MRP}}=3000$ independent MRPs for evaluation and record diagnostics on them. Each evaluated point includes $d_t$ in \eqref{eq:kernel-diag-mean} and related quantities computed from the same evaluation rollouts. 
Emergence scores $V_{\mathrm{em}}$ and $A_{\mathrm{em}}$ are computed once per training step from the learned $(V_0,A_0)$.

\noindent\textbf{Figure~\tref{fig:emergence-main}(a).}
We select the best checkpoint as the training step that maximizes
$\min(V_{\mathrm{em}}, A_{\mathrm{em}})$, i.e., where both $V$ and $A$ emergence
are simultaneously high. At this checkpoint, we save the learned matrices
$V_0$ and $A_0$ as in \eqref{eq:v0} and \eqref{eq:a0}.
For visualization, each heatmap is normalized by its maximum absolute value and
rendered with a colormap over the range $[-1, 1]$.

\noindent\textbf{Figure~\tref{fig:emergence-main}(b).}
We bin evaluated points by $d_t$ using $20$ equal-width bins.
Within each bin, we plot the mean emergence scores $V_{\mathrm{em}}$ and $A_{\mathrm{em}}$ separately.
Shaded regions correspond to the interquartile range (25th-75th percentile) across random seeds.

\paragraph{Emergence vs.\ training step.}
For each seed, we smooth the per-step traces of $V_{\mathrm{em}}$, $A_{\mathrm{em}}$, and $d_t$ with a trailing average of window $50$.
We then truncate the curves at the seed-wise early-stop step that maximizes $\min(V_{\mathrm{em}},A_{\mathrm{em}})$.
Finally, we plot the mean $\pm$ one s.e.m.\ across seeds (top: $V_{\mathrm{em}}$ and $A_{\mathrm{em}}$; bottom: $d_t$).
Figure~\tref{fig:emer-vs-step-rho0} shows that under the default hyperparameters in Table~\tref{tab:boyan-hparams}, the model automatically learns increasingly diagonal kernels (high $d_t$) over training.

\begin{figure}[t]\centering
  \includegraphics[width=0.6\linewidth]{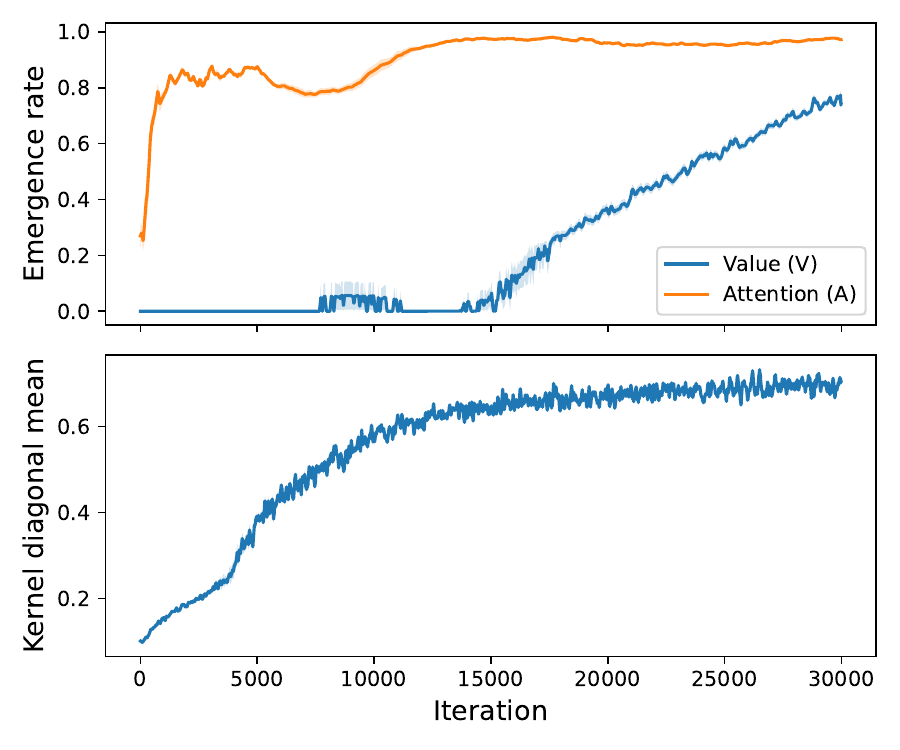}
  \caption{Emergence vs.\ training steps under the default mask and $\rho_t\equiv 0$.}
  \label{fig:emer-vs-step-rho0}
\end{figure}

% \paragraph{Compute resources.}
% We run our experiments in parallel on a single node of a CPU cluster. The node uses an AMD EPYC 9534 processor with 64 physical cores and 128 hardware threads.

\paragraph{Compute resources.}
All experiments were run on a single NVIDIA H100 GPU. The main experiments reported in Figure~\tref{fig:emergence-main} take approximately one hour of wall-clock time in total.
The appendix ablations, including the mask-relaxation experiments in Figure~\tref{fig:mask-relaxation-boyan}, use the same single-H100 setup and account for the majority of the remaining compute. 
The numerical verification experiments in Figure~\tref{fig:nonlinear-td-verification} are lightweight.
Across all reported experiments in the paper, the total compute is below 10 H100 GPU-hours. 
Preliminary and debugging runs used comparable compute and did not substantially exceed this reported experimental budget. 
The experiments use only synthetic Boyan-chain data, and the storage requirement is negligible.

\subsection{Emergence metrics}
\label{sec:emergence-metrics}

We quantify how closely the first softmax ICTD layer matches the ideal TD update
using two scalar scores $V_{\mathrm{em},t}$ and $A_{\mathrm{em},t}$ in $[0,1]$.
A score of $1$ corresponds to the ideal TD block.

\subsubsection{V-emergence score.}
At training step $t$, let
$(p_{r,t}, p_{\gamma v',t}, p_{v,t})
\doteq
(V_{0,t})_{d+3,d{:}d+3}$
denote the three coefficients that the first layer applies to the reward $U$,
bootstrap value $\gamma v'$, and current value $v$ in its value update.

\textbf{Sign check.}
We enforce the TD sign pattern via
\begin{equation}
\label{eq:sign}
\mathbb{I}_{V,t}
\doteq
\mathbb{I}\bigl\{p_{r,t}>0,\;p_{\gamma v',t}>0,\;p_{v,t}<0\bigr\}.
\end{equation}

\textbf{Comparability.}
We encourage comparable magnitudes.
Let
\[
m_{V,t}
\doteq
\frac{|p_{r,t}|+|p_{\gamma v',t}|+|p_{v,t}|}{3}.
\]
We define the magnitude-ratio score
\begin{equation}
\label{eq:ratio}
C_{V,t}
\doteq
\max\Biggl\{
0,\;
1-\frac{\bigl||p_{r,t}|-m_{V,t}\bigr|
        +\bigl||p_{\gamma v',t}|-m_{V,t}\bigr|
        +\bigl||p_{v,t}|-m_{V,t}\bigr|}{3\max\{m_{V,t},\varepsilon\}}
\Biggr\}
\in [0,1],
\end{equation}
where $\varepsilon>0$ is a small constant.
This score is maximized when the three coefficients have the same magnitude. When $m_{V,t}=0$
the definition yields $C_{V,t}=0$.

The V-emergence score at step $t$ is
\[
V_{\mathrm{em},t}\doteq \mathbb{I}_{V,t}\, C_{V,t}\in[0,1].
\]

\subsubsection{A-emergence score.}
Let $A_{0,t}^{\mathrm{feat}} \doteq (A_{0,t})_{0{:}d,\,0{:}d}\in \R[d\times d]$ at training step $t$.
For diagnostics we work with a column-normalized nonnegative version.
For $j\in[d]$, define
\[
(\widehat A_t)[:,j]
\doteq
\begin{cases}
\frac{\bigl|(A_{0,t}^{\mathrm{feat}})[:,j]\bigr|}{\|(A_{0,t}^{\mathrm{feat}})[:,j]\|_2} & \|(A_{0,t}^{\mathrm{feat}})[:,j]\|_2>0,\\
0 & \text{otherwise},
\end{cases}
\]
that is, we divide each column by its $l_2$-norm and take absolute values.

% \textbf{Sign check.}
% Let $a_t \doteq \mathrm{diag}(A_{0,t}^{\mathrm{feat}})\in\R[d]$.
% We require all diagonal entries to be positive:
% \[
% \mathbb{I}_{A,t}\doteq \mathbb{I}\{(a_t)_i>0\ \ \forall i\in[d]\}.
% \]

\textbf{Diagonality.}
From $\widehat A_t$ we measure how much of the $l_1$ mass lies on the diagonal via
\[
A_{\mathrm{diag},t}
\doteq
\frac{\sum_{i=1}^d (\widehat A_t)_{ii}}
     {\max\Bigl\{\sum_{i,j=1}^d (\widehat A_t)[i,j],\,\varepsilon\Bigr\}}
\in[0,1],
\]
where $\varepsilon>0$ is a small constant.
This quantity is close to $1$ when $\widehat A_t$ is nearly diagonal.

\textbf{Comparability.}
Let $a_t \doteq \mathrm{diag}(A_{0,t}^{\mathrm{feat}})\in\R[d]$.
We encourage the diagonal entries to be comparable in magnitude.
Let
\[
m_{A,t}\doteq \frac{1}{d}\sum_{i=1}^d (a_t)_i,
\qquad
C_{A,t}
\doteq
\max\Biggl\{0,\;
1-\frac{\sum_{i=1}^d |(a_t)_i-m_{A,t}|}{d\,\max\{m_{A,t},\varepsilon\}}
\Biggr\}.
\]
% By construction, $C_{A,t}$ is large only when the diagonal entries are close to each other,
% and it vanishes when $m_{A,t}\le 0$.
By construction, $C_{A,t}$ measures whether the raw diagonal entries 
are close to a common value, but it does not separately enforce that 
this common value is positive.

The A-emergence score at step $t$ is
\begin{equation}
\label{eq:Aem-def}
A_{\mathrm{em},t}
\doteq
A_{\mathrm{diag},t}\cdot C_{A,t}
\in [0,1].
\end{equation}

% As a scalar measure of first-layer kernel diagonality at training step $t$,
% we use the kernel diagonal mean $d_t \in [0,1]$ defined in
% Eq.~\eqref{eq:kernel-diag-mean} in Section~\tref{sec:experiment}.
% To produce the emergence–rate curves in Figure~\ref{fig:emer-rate}, we
% record $(V_{\mathrm{em}}, A_{\mathrm{em}}, d_t)$ at regularly spaced
% checkpoints for each seed.
% We discretize $d_t$ into fixed bins on $[0,1]$ and, for each bin and each
% seed, compute the maximal $V_{\mathrm{em}}$ (respectively $A_{\mathrm{em}}$)
% among all checkpoints whose $d_t$ falls into that bin.
% The plotted value for each bin is the median of these per-seed maxima, with
% the interquartile range shown as a shaded band.
% The same metrics and binning procedure are used for both the Boyan and
% CartPole experiments reported in the main text and in
% Appendix~\ref{app:cartpole-setup}.
\begin{figure}[t]\centering
  % Row 1: training trajectories
  \begin{minipage}[t]{0.48\textwidth}\centering
    \includegraphics[width=\linewidth]{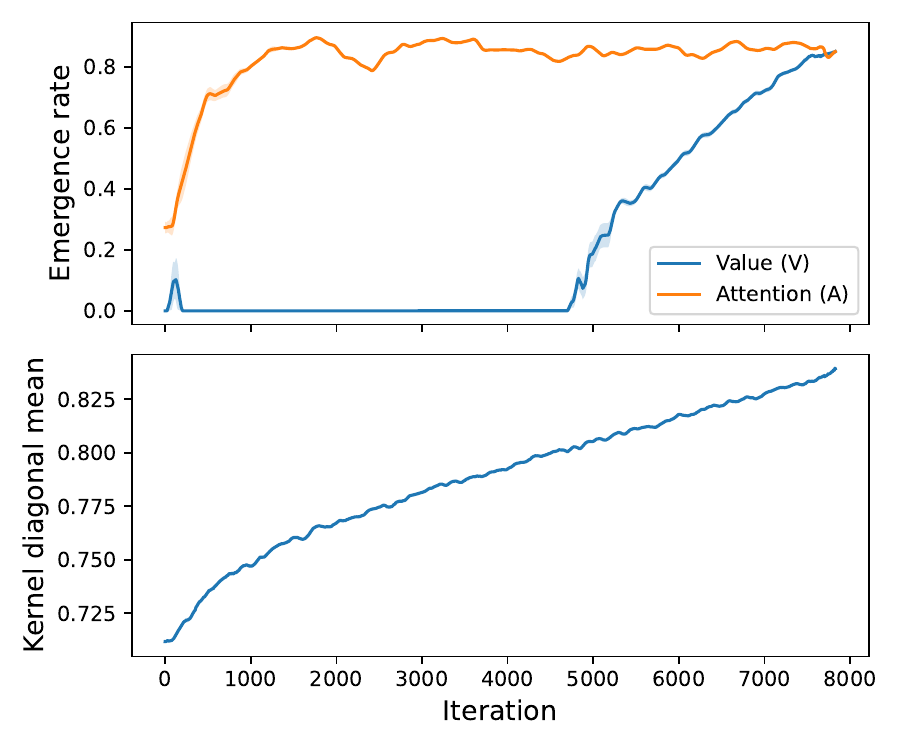}\\[2pt]
    \small (a) Full-mask: training trajectories.
  \end{minipage}
  \hspace{0.02\textwidth}
  \begin{minipage}[t]{0.48\textwidth}\centering
    \includegraphics[width=\linewidth]{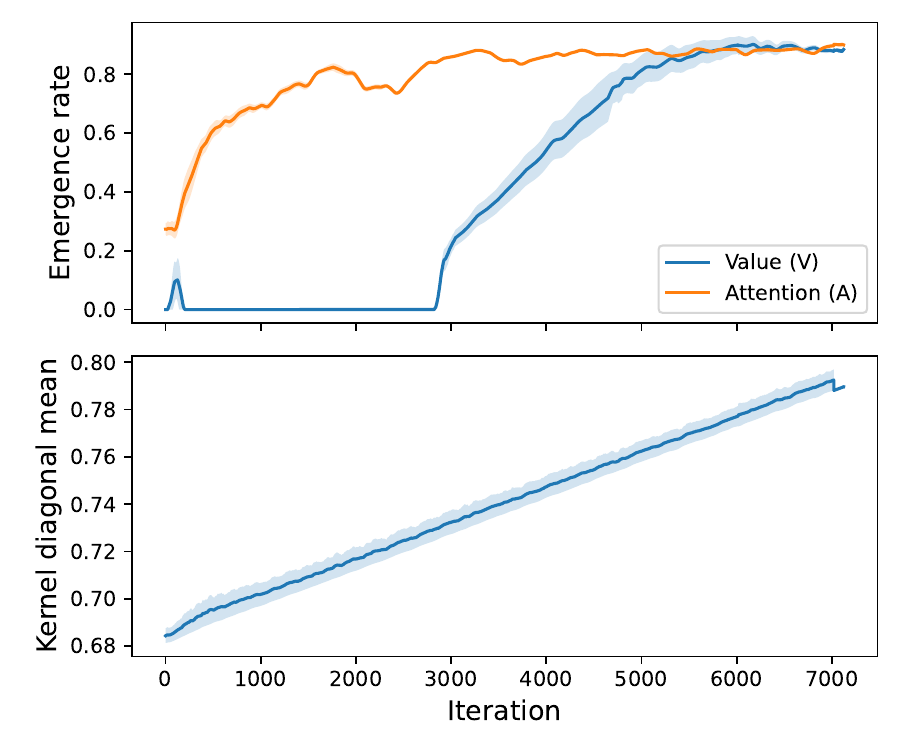}\\[2pt]
    \small (b) Relaxed-mask: training trajectories.
  \end{minipage}

  \vspace{0.6em}

  % Row 2: emergence rate vs diagonality
  \begin{minipage}[t]{0.48\textwidth}\centering
    \includegraphics[width=\linewidth]{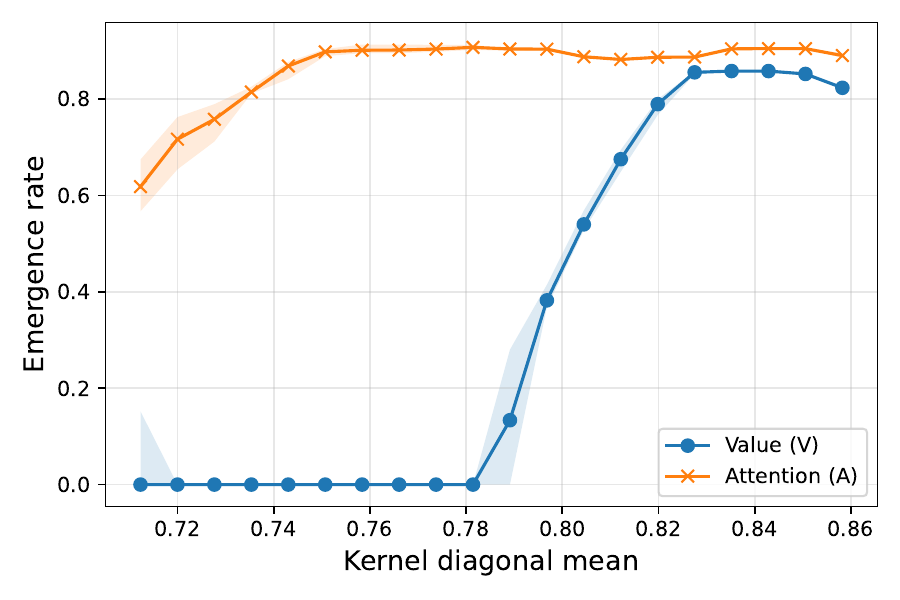}\\[2pt]
    \small (c) Full-mask: emergence vs.\ $d_t$.
  \end{minipage}
  \hspace{0.02\textwidth}
  \begin{minipage}[t]{0.48\textwidth}\centering
    \includegraphics[width=\linewidth]{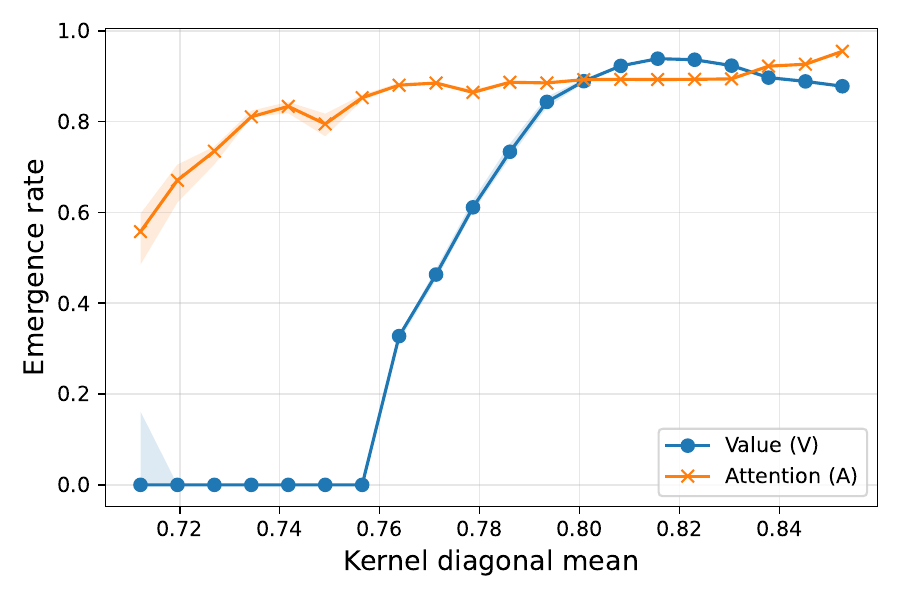}\\[2pt]
    \small (d) Relaxed-mask: emergence vs.\ $d_t$.
  \end{minipage}

  \vspace{0.6em}

  % Row 3: averaged V0/A0 heatmaps at best joint emergence
  \begin{minipage}[t]{0.48\textwidth}\centering
    \includegraphics[width=\linewidth]{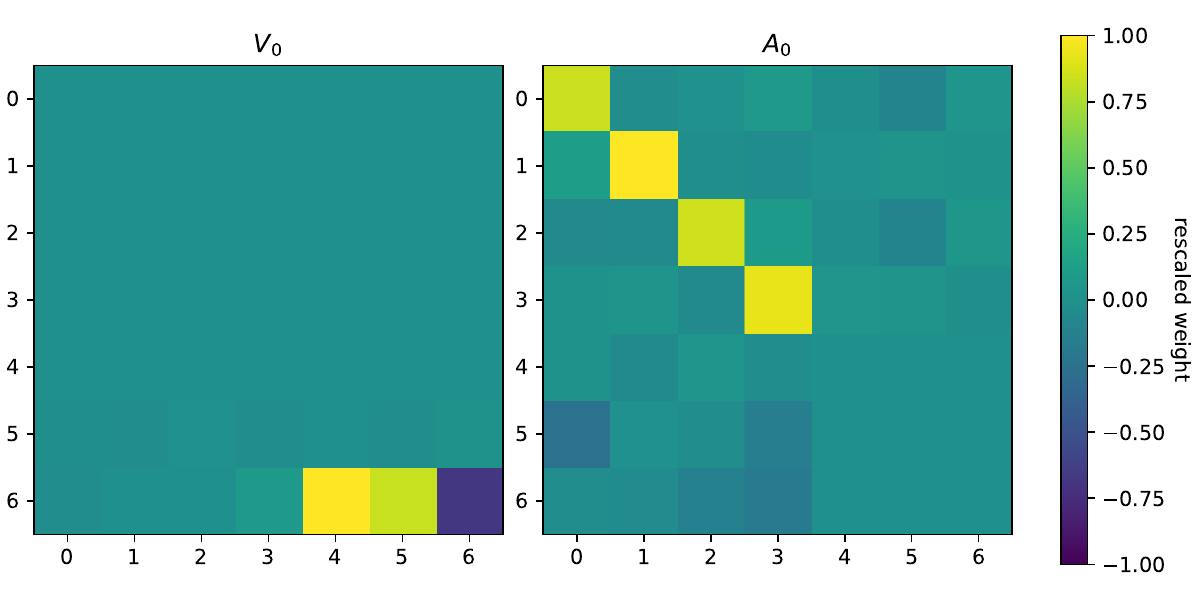}\\[2pt]
    \small (e) Full-mask: mean $V_0$ and $A_0$ at best joint emergence.
  \end{minipage}
  \hspace{0.02\textwidth}
  \begin{minipage}[t]{0.48\textwidth}\centering
    \includegraphics[width=\linewidth]{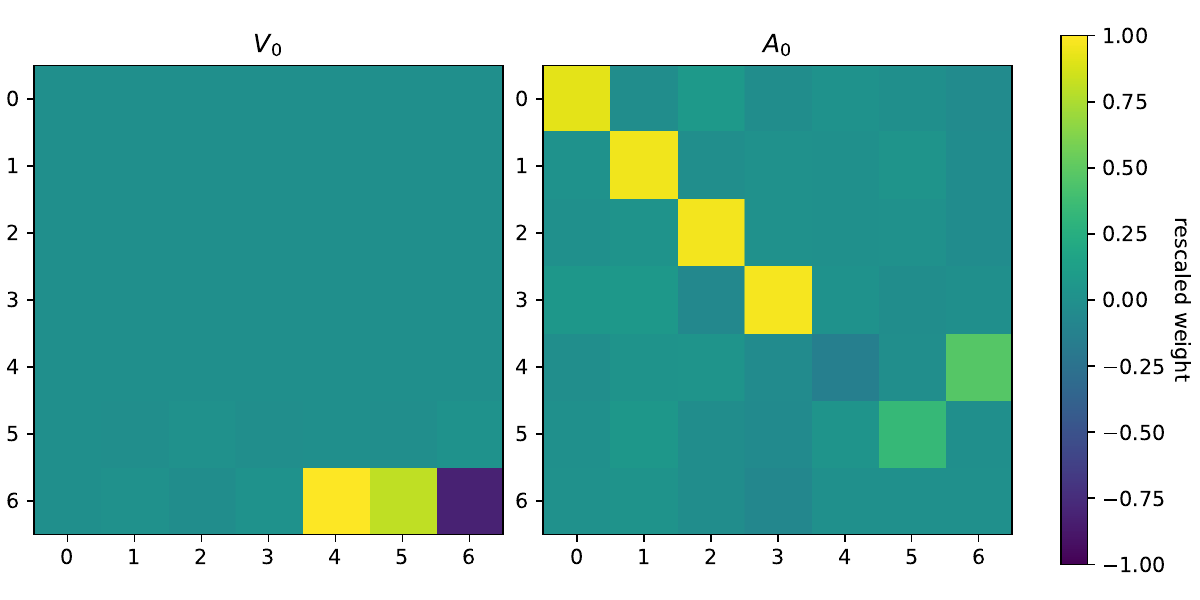}\\[2pt]
    \small (f) Relaxed-mask: mean $V_0$ and $A_0$ at best joint emergence.
  \end{minipage}
\caption{Mask relaxation on Boyan's chain.
We compare the full-mask setting in \eqref{eq:amask} (left column) and the relaxed-mask setting \eqref{eq:amaskrelax} (right column).
All plots are averaged and visualized in the same way as in Appendix~\ref{app:boyan-setup},
using $10$ random seeds over $9000$ training steps.
% Bottom row: mean $V_0$ and $A_0$ at the checkpoint that maximizes $\min(V_{\mathrm{em}},A_{\mathrm{em}})$.
}

%   \caption{Logits-mask relaxation on Boyan's chain.
% Top row: seed-averaged training trajectories of $V_{\mathrm{em},t}$ and $A_{\mathrm{em},t}$
% (top) and the kernel diagonal mean $d_t$ (bottom), shown as mean $\pm$ one s.e.m.\ over $10$ seeds
% (with the same temporal smoothing as in Section~\tref{sec:experiment}).
% Middle row: emergence rate as a function of $d_t$, obtained by binning $d_t\in[0,1]$ and, within each bin,
% taking the per-seed maximum of $V_{\mathrm{em},t}$ across checkpoints for the value-emergence curve,
% and the per-seed maximum of $A_{\mathrm{em},t}$ across checkpoints for the attention-emergence curve;
% we then plot the median across seeds with an interquartile-range band.
% Bottom row: mean $V_0$ and $A_0$ at the checkpoint with maximal joint $V/A$ emergence.
% }
  \label{fig:mask-relaxation-boyan}
\end{figure}

\subsection{Mask Relaxation}
\label{app:mask-relaxtion}
We next study how emergence depends on the strength of the structural masking constraint in $A_0$ on the Boyan task
distribution. 
All experiments in this subsection use the same Boyan training
setup as in Appendix~\ref{app:boyan-setup}, except that we apply the linearly
annealed diagonal mixing coefficient $\rho$ in \eqref{eq:rho}, increasing it from $0.68$
to $0.80$ over the first $8000$ steps and then holding it fixed at $0.80$.
It is trained for $9000$ gradient steps in total (corresponding to $900$ MRPs per seed in our implementation).

We introduce the diagonal mixing coefficient $\rho$ purely as a training-speed heuristic for this ablation:
under the default setting in Section~\tref{sec:experiment}, kernel diagonality emerges without any auxiliary bias given sufficient training,
whereas here we use $\rho$ to accelerate convergence so that the mask-relaxation comparison can be run with a smaller training budget
without changing the qualitative emergence behavior.
Throughout, we keep $V_{\mathrm{mask}}$ fixed.

We compare two masking configurations for $A_0$. The full-mask setting uses the
default $A_{\mathrm{mask}}$ from \eqref{eq:amask}. 
In the
relaxed-mask setting, we change it to
\begin{equation}
\label{eq:amaskrelax}
    A_{\mathrm{mask}}=
\begin{bmatrix}
\ind_{d\times d} & \ind_{d\times 3}\\
\ind_{3\times d} & \mathbf{0}_{3\times 3}
\end{bmatrix}.
\end{equation}
Figure~\ref{fig:mask-relaxation-boyan} summarizes the results and shows relaxing the mask preserves the emergence behavior.

\subsection{Numerical verification on Theorem~\tref{thm:onestep}}
We verified Theorem~\tref{thm:onestep} experimentally. For each trial we sample a fresh prompt $Z_0\in\mathbb{R}^{(d+3)\times(n+1)}$ and run the depth-$L$
forward recursion of a single-head block.
In parallel, we unroll the reference softmax ICTD recursion from Theorem~\tref{thm:onestep} on the same context,
producing $\{y_l^{(n+1)}\}_{l=0}^L$ for the query prediction.
We report the layer-wise log discrepancy
\[
E_l \;\doteq\; \log\bigl|v_l(S_n)-y_l^{(n+1)}\bigr|,
\qquad
v_l(S_n) \;=\; Z_l[d+3,n+1].
\]
We use $d=8$, $n=20$, $L=10$, $\gamma=0.9$, and $\alpha_l=1$, and repeat over $50$ trials in double precision.
We test masked-softmax, ReLU, ELU$+1$, and an RBF kernelization for $\tilde h$.
Figure~\ref{fig:nonlinear-td-verification} shows that $E_l$ stays at numerical precision across depth,
confirming that the residual update realizes a one-step nonlinear in-context TD update.
\begin{figure}[t]
  \centering
  \includegraphics[width=0.6\linewidth]{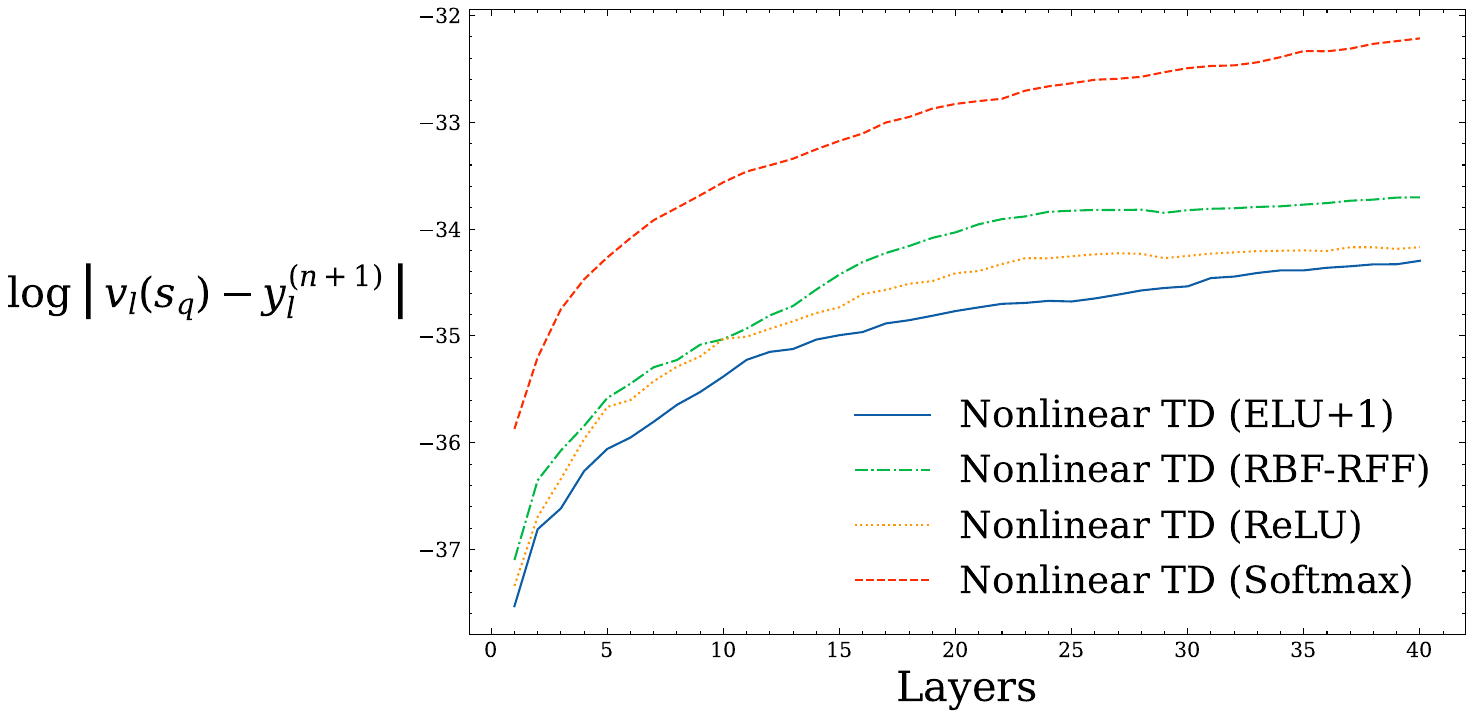}
  \caption{Kernel weighted TD verification. Layer-wise log discrepancy $E_l$ under different kernelizations $\tilde h$.}
  \label{fig:nonlinear-td-verification}
\end{figure}

\end{document}